\setlist{nosep}
\theoremstyle{plain}
\newtheorem{theorem}{Theorem}[section]
\newtheorem{proposition}[theorem]{Proposition}
\theoremstyle{definition}
\newtheorem{definition}[theorem]{Definition}
\theoremstyle{remark}
\newtheorem{remark}[theorem]{Remark}
\newcommand{\methodname}{\textsc{GraDate}}
\newcommand{\fullmethodname}{ \textsc{GraDate }(\textit{\underline{GRA}ph \underline{DAT}a s\underline{E}lector})}
\newcommand{\optmethodname}{\textsc{GREAT}}
\newcommand{\fulloptmethodname}
{\optmethodname\ (\textit{\underline{G}DD sh\underline{R}inkag\underline{E} via sp\underline{A}rse projec\underline{T}ion})}
\title{Graph Data Selection for Domain Adaptation:\\ A Model-Free Approach}
\author{%
  Ting-Wei Li \\
  University of Illinois \\
  Urbana-Champaign, IL USA \\
  \texttt{twli@illinois.edu} \\
  \And
  Ruizhong Qiu \\
  University of Illinois \\
  Urbana-Champaign, IL USA \\
  \texttt{rq5@illinois.edu} \\
  \And
  Hanghang Tong \\
  University of Illinois \\
  Urbana-Champaign, IL USA \\
  \texttt{htong@illinois.edu} \\
}
\begin{document}

\maketitle

\begin{abstract}
Graph domain adaptation (GDA) is a fundamental task in graph machine learning, with techniques like shift-robust graph neural networks (GNNs) and specialized training procedures to tackle the distribution shift problem. Although these \textit{model-centric} approaches show promising results, they often struggle with severe shifts and constrained computational resources.
To address these challenges, we propose a novel \textit{model-free} framework, \fullmethodname, that selects the best training data from the source domain for the classification task on the target domain. \methodname\ picks training samples without relying on any GNN model's predictions or training recipes, leveraging optimal transport theory to capture and adapt to distribution changes. \methodname\ is \textit{data-efficient}, scalable and meanwhile complements existing model-centric GDA approaches. Through comprehensive empirical studies on several real-world graph-level datasets and multiple covariate shift types, we demonstrate that \methodname\ outperforms existing selection methods and enhances off-the-shelf GDA methods with much fewer training data.
\end{abstract}

\section{Introduction}
\label{sec:intro}

Graphs have emerged as a fundamental data structure for representing complex relationships across diverse domains, from modeling molecular interactions in biological networks~\citep{huang2020skipgnn,bongini2022biognn,liu2023muse,zitnik2024current} to capturing user behaviors in recommendation systems~\citep{gao2022graph,he2020lightgcn,cheng2023multi,chizari2023bias,ying2018graph}. 
In graph-level classification tasks~\citep{xie2022semisupervised, xu2023kernel, zenggraph,hong2024label, liu2023survey}, where the goal is to categorize graph structures, the distribution shift between source and target domains poses significant challenges. While numerous graph neural network (GNN) methods have been proposed for graph domain adaptation (GDA)~\citep{wu2023non, dai2022graph, wu2020unsupervised, sun2019adagcn, liu2024rethinking}, they predominantly relies on model architecture design and training strategies, 
which are inherently \textit{model-dependent} and brittle. 
This model-centricity introduces practical challenges: (i) the need for extensive resources to train and validate different architectural variants and (ii) the ignorance of source data quality. 
To address these aforementioned issues, 
rather than relying on sophisticated GNN architectures or training procedures, we aim to answer a fundamental question: 

\quad \textit{How to select the most relevant source domain data, based on available validation data,  for better graph-level classification accuracy evaluated on the target domain ?} 

In this paper, we propose a \textit{model-free}
method, \fullmethodname, that selects a subset of important training data in the source domain independently of any specific GNN model design, making it both \textit{data-efficient} and versatile. \methodname\  reduces computational overhead and enables quick adaptation to unseen graph domains based on available validation data. 
Conceptually, \methodname\ first leverages Fused Gromov-Wasserstein (FGW) distance~\cite{vayer2020fused} to compare graph samples. We provide a theoretical justification to demonstrate FGW's unique advantage over multi-layer GNNs for graph comparison. Then, FGW is used as a building block to measure the dataset-level distance between training and validation sets, which is termed as Graph Dataset Distance (GDD). 
Through theoretical analysis, we demonstrate that the domain generalization gap between source and target domain is upper-bounded by GDD, which naturally motivates us to minimize this GDD between training and validation sets by identifying the optimal subset of training graph data.   
At the core of \methodname\ lies our novel optimization procedure, \fulloptmethodname, that interleaves between (i) optimal transport-based distribution alignment with gradient updates on training sample weights and (ii) projection of training sample weights to sparse probability simplex. This dual-step process systematically increases the weights of beneficial training samples while eliminating the influence of detrimental samples that could harm generalization performance. 

By extensive experiments on six real-world graph-level datasets and two types of covariate shifts, we first show that \methodname\ significantly outperforms existing data-efficient selection methods. When coupled with vanilla GNNs that are only trained on its selected data, \methodname\ even surpasses state-of-the-art GDA methods. Intriguingly, the practical implications of \methodname\ extend beyond mere data selection. By operating independently of model architecture, \methodname\ can seamlessly complement existing off-the-shelf GDA methods, further enhancing their performance through better data curation while significantly improving the \textit{data-efficiency}.

In summary, our main contributions in this paper are as follows:

\begin{enumerate}[leftmargin=1.5em]

    \item \textbf{Theoretical justification of FGW distance.} We show in Theorem~\ref{thm:3.1-fgw} that the output distance of multi-layer GNNs is upper-bounded by Fused Gromov–Wasserstein distance~\cite{vayer2020fused} between graphs, which motivates us to use FGW as a building block to compare graphs. 
    \item \textbf{Novel graph dataset distance formulation.} Through Theorem~\ref{thm:3.3-gap}, we prove that the graph domain generalization gap is upper-bounded by a notion of Graph Dataset Distance (GDD).
    \item \textbf{A strong model-free graph selector.} We introduce \fullmethodname\ as the first \textit{model-free} method tailored for domain shift problem for graph-level classification tasks, complementing the predominant model-centric GDA methods (see Section~\ref{ssec:as-graph-selector}).
    \item \textbf{A data-efficient and powerful GDA method.} We show that trained with data selected by \methodname, even vanilla GNN can beat sophisticated GDA baselines (see Section~\ref{ssec:as-gda-model}).
    \item \textbf{A universal GDA model enhancer.} We demonstrate that the data selected by \methodname\ can be combined with GDA methods to further enhance their performance and efficiency (see Section~\ref{ssec:as-gda-enhancer}). 
\end{enumerate}

The rest of the paper is organized as follows. Section~\ref{sec:prelim} introduces needed background knowledge. Section~\ref{sec:method} details our definition of Graph Dataset Distance (GDD) and proposed \methodname, followed by experimental results in Section~\ref{sec:exp}. Section~\ref{sec:related-work} presents related works. Finally, the conclusion is provided in Section~\ref{sec:conclusion}.

\vspace{-1em}
\section{Preliminaries}
\label{sec:prelim}

In this section, we briefly introduce optimal transport and graph optimal transport, which are fundamental background related to our proposed method. We also provide the formal problem formulation of graph domain adaptation (GDA) in Appendix~\ref{appendix:prelim-gda}.

\subsection{Optimal Transport}
\label{prelim:otdd}


Optimal transport (OT)~\citep{kantorovich1942translocation} defines a distance between probability distributions. 
It is defined as follows: given  cost function  $d(\cdot, \cdot): \mathcal{X}\times \mathcal{X} \rightarrow \mathbb{R}^{\geq 0}$ and probability distributions $\mathbf{p}, \mathbf{q} \in \mathcal{P}(\mathcal{X})$, where $\mathcal{X}$ is a metric space, 
$\text{OT}(\mathbf{p}, \mathbf{q}, d) \triangleq \min_{\pi \in \Pi(\mathbf{p}, \mathbf{q})} \int_{\mathcal{X} \times \mathcal{X}} d(x, x') \pi(x, x') \, \text{d}x \text{d}x'$, where $\Pi(\mathbf{p}, \mathbf{q})$
is the set of couplings 
with marginals $\mathbf{p}$ and $\mathbf{q}$.  For supervised learning scenarios, we consider the empirical measures: $\mathbf{p} = \frac{1}{n} \sum_{i=1}^n \delta_{x_i}$ and $\mathbf{q} = \frac{1}{m} \sum_{j=1}^m \delta_{x'_j}$, where $\delta$ is the Dirac delta function. With the pairwise cost matrix $\mathbf{M} = [d(x_i, x'_j)]_{ij}$, we can re-formulate the OT problem as $\text{OT}(\mathbf{p}, \mathbf{q}, d) = \text{OT}(\mathbf{p}, \mathbf{q}, \mathbf{M}) \triangleq \min_{\pi \in \Pi(\mathbf{p}, \mathbf{q})} \sum_{i=1}^n \sum_{j=1}^m \mathbf{M}_{ij} \pi_{ij}$.

\paragraph{Optimal Transport Dataset Distance (OTDD).}
\citet{alvarez2020geometric} construct a distance metric between \textit{datasets}, where each dataset is represented as a collection of feature-label pairs $z = (x, y) \in \mathcal{Z} (=\mathcal{X} \times \mathcal{Y})$.
The authors propose \textit{label-specific distributions}, which can be seen as distributions over features $X$ of data samples with a specific label $y$, i.e. $\alpha_y(X) \triangleq P(X | Y = y)$. The metric on the space of feature-label pairs can thus be defined as a combination of \textit{feature distance} and \textit{label distance}: $d_{\mathcal{Z}}((x, y), (x', y')) \triangleq \left(d_{\mathcal{X}}(x, x')^r + c \cdot d(\alpha_y, \alpha_{y'})^r\right)^{1/r}  $,
where $d_\mathcal{X}$ is a metric on $\mathcal{X}$, $d(\alpha_y, \alpha_{y'})$ is the \textit{label distance} between distributions of features associated with labels $y$ and $y'$, $r \geq 1$ is the order of the distances and  $c \geq 0$ is a pre-defined weight parameter. Consider two datasets $D_s = \{ z^s_i:(x^s_i, y^s_i)  \}_{i \in [n_1]}$, $D_t = \{ z^t_i:(x^t_i, y^t_i)  \}_{i \in [n_2]}$ and their corresponding uniform empirical distributions $\mathbf{p}, \mathbf{q} \in \mathcal{Z}$, where  $\mathbf{p} = \sum_{i=1}^{n_1} \frac{1}{n_1} \delta_{z^s_i}$ and $\mathbf{q} = \sum_{i=1}^{n_2} \frac{1}{n_2} \delta_{z^t_i}$,  OTDD between $D_s$ and $D_t$ is computed as: $\text{OTDD}(D_s,D_t) 
     = \text{OT} (\mathbf{p}, \mathbf{q}, d_\mathcal{Z})
     = \text{OT} (\mathbf{p}, \mathbf{q}, \mathbf{M})
     = \min_{\pi \in \Pi(\mathbf{p}, \mathbf{q})}\mathbb E_{(z^s,z^t)\sim\pi}[d_\mathcal Z(z^s,z^t)]$,
where $\Pi(\mathbf{p}, \mathbf{q})$ is the set of valid couplings and $\mathbf{M} = [d_\mathcal{Z}(z^s_i, z^t_j)]_{ij}$ is the pairwise cost matrix. 

\subsection{Graph Optimal Transport}
\label{prelim:fgw-distance}

The Fused Gromov-Wasserstein (FGW) distance~\cite{vayer2020fused} integrates the Wasserstein distances~\cite{rubner2000earth} and the Gromov-Wasserstein distances~\cite{sturm2023space, memoli2011gromov}. 
Formally, a graph $\mathcal{G}$ with $n$ nodes can be represented as a distribution over vectors in a $d$-dimensional space, where $d$ is the dimension of the node feature. The \textit{features} are represented as $\mathbf{X} \in \mathbb{R}^{n \times d}$ and the \textit{structure} can be summarized in an adjacency matrix $\mathbf{A} \in \mathbb{R}^{n \times n}$. We further augment $\mathcal{G}$ a probability distribution $\mathbf{p} \in \mathbb{R}^n$ over nodes in the graph, where $\sum_{i=1}^n \mathbf{p} _i = 1$ and $\mathbf{p}_i \geq 0, \forall i \in [n]$. 
To compute FGW distance between two attributed graphs ($\mathcal{G}_1 = \{  
\mathbf{A}_1 , \mathbf{X}_1, \mathbf{p}_1  \}$ and $\mathcal{G}_2 = \{ 
\mathbf{A}_2 , \mathbf{X}_2, \mathbf{p}_2  \}$), 
we use pairwise feature distance as inter-graph distance matrix and adjacency matrices as intra-graph similarity matrices. 
The FGW distance is defined as the solution of the following optimization problem: $\textnormal{FGW}_\alpha(\mathcal{G}_1, \mathcal{G}_2)
    \triangleq \textnormal{FGW}_\alpha([\| \mathbf{X}_1[i]-\mathbf{X}_2[j]  \|_r]_{ij}, \mathbf{A}_1, \mathbf{A}_2, \mathbf{p}_1,\mathbf{p}_2, \alpha) \notag$
\vspace{-1.5em}
\begin{center}
\resizebox{0.85\linewidth}{!}{
\begin{minipage}{\linewidth}
\begin{align*}
\label{eq:prelim-linearfgw}
&= \left(\min_{\boldsymbol\pi\in\Pi(\mathbf p_1,\mathbf p_2)} \sum_{i,j,k,l} (1-\alpha) \| \mathbf{X}_1[i]-\mathbf{X}_2[j]  \|_2^r + \alpha |\mathbf{A}_1 (i,k)- \mathbf{A}_2 (j,l)|^r \boldsymbol\pi(i,j)\boldsymbol\pi(k,l) \right)^{\frac{1}{r}},
\end{align*}
\end{minipage}
}
\end{center}
where $\Pi(\mathbf p_1,\mathbf p_2) \triangleq \{\boldsymbol\pi |\boldsymbol\pi \textbf{1}_{n_2} = \mathbf{p}_1, \boldsymbol\pi^T \textbf{1}_{n_1} = \mathbf{p}_2, \boldsymbol\pi \geq 0\}$ is the collection of feasible couplings between $\mathbf{p}_1$ and $\mathbf{p}_2$,  $\alpha \in [0,1]$ acts as a trade-off parameter, and $r\ge1$ is the order of the distances.

\section{Methodology}
\label{sec:method}
\vspace{-0.5em}

In this section, we propose our framework, \textit{\underline{GRA}ph \underline{DAT}a s\underline{E}lector}, abbreviated as \methodname. 
In Section~\ref{ssec:graph-dataset-distance}, we first introduce Theorem~\ref{thm:3.1-fgw} to motivate our use of LinearFGW~\cite{nguyen2023linear} for graph distance computation. After that, we define the Graph Dataset Distance (GDD), which
measures the discrepancy between graph sets. In Section~\ref{ssec:graph-data-selection}, we provide Theorem~\ref{thm:3.3-gap} to bound the domain generalization gap using GDD and then
propose a GDD minimization problem that is solved by a novel optimization algorithm, \fulloptmethodname.
Finally, we introduce \methodname\ that combines GDD computation and \optmethodname\ to select the most important subset of the training data to solve graph domain adaptation problem (definition is detailed in Appendix~\ref{appendix:ssec:graph-da}).





\label{ssec:linearfgw}

\subsection{Graph Dataset Distance (GDD): A Novel Notion to Compare Graph Datasets}
\label{ssec:graph-dataset-distance}

\subsubsection{FGW Distance For Graph Comparison} 

\paragraph{Challenges in Graph Distance Computation.}
To find the optimal samples that can achieve better performance on the target domain, we first need an efficient way that can accurately capture the discrepancy among graphs. To achieve this, most methods rely on Graph Neural Networks (GNNs)~\cite{wu2020unsupervised, dai2022graph, sun2019adagcn, yin2024dream} to obtain meaningful representations of these structured objects. However, these approaches face critical limitations: (i) high computational complexity 
to train on full training set and (ii) sensitivity to GNN hyper-parameters. 
To address these drawbacks, we propose to use FGW distance~\cite{vayer2020fused} for replacing GNNs. As demonstrated in the following Theorem~\ref{thm:3.1-fgw}, FGW offers provable advantages that make it more suitable than GNNs to compare attributed graphs.





\begin{theorem}
\label{thm:3.1-fgw}
Given two graphs $\mathcal G_1=(\mathbf A_1,\mathbf X_1)$ and $\mathcal G_2=(\mathbf A_2,\mathbf X_2)$, for a $k$-layer graph neural network (GNN) $f$ with ReLU activations, under regularity assumptions in Appendix~\ref{appendix:ass:fgw}, we have 
\begin{align}
d_{\textnormal W}(f(\mathcal G_1),f(\mathcal G_2))
\le C\cdot\textnormal{FGW}_\beta(\mathcal G_1,\mathcal G_2),
\end{align}
where $d_{\textnormal W}$ denotes the $r$-Wasserstein distance, $C$ and $\beta$ are constants depending on GNN $f$, regularity constants and $k$. 
\begin{proof}
The proof is in Appendix~\ref{appendix:thm3.1-fgw-proof}.
\end{proof}
\end{theorem}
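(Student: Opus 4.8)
The plan is to bound the Wasserstein distance between the two output node-feature distributions $f(\mathcal{G}_1)$ and $f(\mathcal{G}_2)$ by tracking how a single GNN layer transforms features and aggregates over the graph structure, then iterating over the $k$ layers. First, I would fix a coupling $\boldsymbol\pi \in \Pi(\mathbf{p}_1,\mathbf{p}_2)$ and recall that $d_{\mathrm{W}}(f(\mathcal{G}_1),f(\mathcal{G}_2))^r \le \sum_{i,j} \|f(\mathcal{G}_1)[i] - f(\mathcal{G}_2)[j]\|^r \boldsymbol\pi(i,j)$, since any feasible $\boldsymbol\pi$ upper-bounds the optimal one. So it suffices to control the per-node output discrepancy $\|f(\mathcal{G}_1)[i] - f(\mathcal{G}_2)[j]\|$ against the FGW integrand, namely a convex combination of the feature discrepancy $\|\mathbf{X}_1[i]-\mathbf{X}_2[j]\|$ and the structural discrepancy $|\mathbf{A}_1(i,k)-\mathbf{A}_2(j,l)|$.

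The core is a one-layer estimate. A GNN layer maps $\mathbf{H} \mapsto \sigma(\mathbf{A}\,\mathbf{H}\,\mathbf{W})$ (or a normalized-aggregation variant); I would use (i) $1$-Lipschitzness of ReLU, (ii) the operator-norm bound $\|\mathbf{W}\|$ from the regularity assumptions, and (iii) a splitting of $\mathbf{A}_1\mathbf{H}_1 - \mathbf{A}_2\mathbf{H}_2 = \mathbf{A}_1(\mathbf{H}_1 - \mathbf{H}_2) + (\mathbf{A}_1 - \mathbf{A}_2)\mathbf{H}_2$ evaluated under the coupling. The first term propagates the previous-layer discrepancy (with a factor controlled by $\|\mathbf{A}_1\|$ or the max degree), while the second term is exactly where the adjacency-difference $|\mathbf{A}_1(i,k)-\mathbf{A}_2(j,l)|$ enters, multiplied by a bound on the hidden features $\|\mathbf{H}_2\|$ (which the regularity assumptions should guarantee is uniformly bounded, e.g. via bounded input features and bounded weights). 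Unrolling this recursion from layer $k$ down to the input gives a bound of the form $\text{const}\cdot\big(\|\mathbf{X}_1[i]-\mathbf{X}_2[j]\| + \sum \text{(adjacency differences)}\big)$ with the constant collecting the product of layer norms and degree factors; renaming the feature/structure weights into the FGW trade-off parameter $\beta$ and absorbing the rest into $C$ yields the claim.

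I would then plug this pointwise bound back under the coupling, take $\boldsymbol\pi$ to be the FGW-optimal coupling, and recognize the resulting sum as (a constant times) $\textnormal{FGW}_\beta(\mathcal{G}_1,\mathcal{G}_2)^r$ after a power-mean / Jensen step to move the exponent $1/r$ outside — here I have to be a little careful that the convex combination inside FGW with weights $(1-\beta,\beta)$ matches the split I produced, which may force a specific choice of $\beta$ in terms of the Lipschitz/degree constants, exactly as the statement allows ("$\beta$ a constant depending on $f$, regularity constants and $k$").

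The main obstacle I anticipate is handling the quadratic (Gromov-Wasserstein) nature of the structural term cleanly: the FGW integrand couples pairs $(i,j)$ and $(k,l)$ through $\boldsymbol\pi(i,j)\boldsymbol\pi(k,l)$, whereas the naive layer expansion produces terms like $\sum_k |\mathbf{A}_1(i,k)-\mathbf{A}_2(j,l)|\,\|\mathbf{H}_2[k]\|$ that need to be matched to that bilinear form — this requires inserting the coupling in the $k$-index (using that $\boldsymbol\pi(k,\cdot)$ sums to $\mathbf{p}_1(k)$) and carefully bookkeeping the marginal constraints, plus dealing with the fact that aggregation mixes all neighbors so the recursion constant can blow up like $(\max\text{-degree})^k$ unless the normalized adjacency is used. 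Getting the constants to stay finite and the $\beta\in[0,1]$ normalization to come out right is the delicate part; everything else is Lipschitz bookkeeping.
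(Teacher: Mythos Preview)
Your high-level architecture---fix a coupling $\boldsymbol\pi$, bound the output Wasserstein distance by an expectation under $\boldsymbol\pi$, derive a one-layer estimate, recurse over $k$ layers using that ReLU is a contraction, then take the infimum over $\boldsymbol\pi$---is exactly the paper's. The difference lies entirely in the one-layer estimate, and here your matrix splitting has a real problem: the expression $\mathbf A_1(\mathbf H_1-\mathbf H_2)+(\mathbf A_1-\mathbf A_2)\mathbf H_2$ is ill-typed when the two graphs have different numbers of nodes, which is precisely the regime FGW is designed for. You are aware of this (hence ``inserting the coupling in the $k$-index''), but the marginal trick you describe introduces size factors $n_1,n_2$ that do not cancel against the plain adjacency entries appearing in the FGW integrand, so the obstacle you flag is not just delicate bookkeeping---it is where your route genuinely stalls.

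The paper sidesteps this entirely by building the coupling into the regularity assumption itself: it posits that the aggregation $g$ is $C_W$-Lipschitz with respect to a \emph{local} FGW-type distance on neighborhoods,
\[
d_{W;\alpha}\bigl(\mathcal N_{\mathcal G_1}(u_1),\mathcal N_{\mathcal G_2}(u_2)\bigr)
=\Bigl(\inf_{\pi'\in\Pi(\mu_1,\mu_2)}\underset{(v_1,v_2)\sim\pi'}{\mathbb E}\bigl[(1-\alpha)\|\mathbf X_1[v_1]-\mathbf X_2[v_2]\|^r+\alpha|\mathbf A_1[u_1,v_1]-\mathbf A_2[u_2,v_2]|^r\bigr]\Bigr)^{1/r}.
\]
The key step is then purely an inequality of couplings: upper-bound the inner $\inf_{\pi'}$ by evaluating at the \emph{same} outer coupling $\boldsymbol\pi$. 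This immediately produces the double expectation $\mathbb E_{(u_1,u_2)\sim\boldsymbol\pi}\,\mathbb E_{(v_1,v_2)\sim\boldsymbol\pi}[\cdots]$, which \emph{is} the FGW bilinear form---no marginal insertion, no size factors, no degree bookkeeping. Jensen for the concave map $x\mapsto x^{1/r}$ moves the power outside, and unrolling over layers gives explicit $C$ and $\beta$ as closed-form functions of $C_W\|\mathbf W\|$, $\alpha$, and $k$. So the missing idea is not a sharper matrix estimate but rather: assume Lipschitzness at the neighborhood-distribution level, then reuse the outer coupling inside.
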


\paragraph{Implication of Theorem~\ref{thm:3.1-fgw}.} 
We have the following two main insights: (i) since the theorem holds for any possible $k$-layer GNNs, with ReLU activations, FGW is provably able to capture the differences between attributed graphs in a way that upper-bounds the discrepancy between learned GNN representations; (ii) to the best of our knowledge, this is the first time that FGW is proved to be the distance metric that enjoys this theoretical guarantee, and hence we adopt it as the major basis for our graph data selection method.

\paragraph{Practical Consideration.}
We utilize LinearFGW~\cite{nguyen2023linear}  as an efficient approximation of FGW distance. Formally, LinearFGW defines a distance metric $d_\text{LinearFGW} (\cdot,\cdot)$ where $d_\text{LinearFGW} (\mathcal{G}_i, \mathcal{G}_j)$ is the LinearFGW distance between any pair of graphs $\mathcal{G}_i, \mathcal{G}_j$. LinearFGW offers an approximation to FGW with linear time complexity with respect to the number of training graphs. While we omit the details of LinearFGW here for brevity, they can be found in Appendix~\ref{appendix:linearfgw} and Algorithm~\ref{alg:linearfgw}.

\subsubsection{Graph-Label Distance} 
With FGW as a theoretically grounded metric for \textit{model-free} comparison between individual attributed graphs, we further extend it to compare sets of labeled graphs across domains, which aids our domain adaptation process. Inspired by OTDD~\citep{alvarez2020geometric} (detailed in Section~\ref{prelim:otdd}), 
we extend the original definition of \textit{label distance} to incorporate distributions in graph subsets $S, S'$, namely $\alpha_y^{S}$ and $\alpha_{y'}^{S'}$. 
Given a set of labeled graphs $\mathcal{D} = \{\mathcal{G}_i, y_i\}_{i=1}^N$ and label set $\mathcal{Y}$, we formulate the \textit{graph-label distance} between label $y \in \mathcal{Y}$ in graph subset $S = \{\mathcal{G}^S_i, y^S_i\}_{i \in |S|} \subseteq \mathcal{D}$ and label $y' \in \mathcal{Y}$ in graph subset $S' = \{\mathcal{G}^{S'}_j, y^{S'}_j\}_{j \in |S'|} \subseteq \mathcal{D}$
as follows:
\begin{equation}
\label{eq:method-label-distance}
     d(\alpha_y^S, \alpha_{y'}^{S'}) = \text{OT}(\mathbf{p}^S_y, \mathbf{q}^{S'}_{y'}, d_{\text{LinearFGW}} ) ,
\end{equation}
where $\mathbf{p}^S_y = \frac{1}{|{{i:y^S_i = y}}|} \sum_{{i:y^S_i = y}}\delta_{\mathcal{G}^S_i}$,  $\mathbf{q}^{S'}_{y'} = \frac{1}{|{{j:y^{S'}_j = y}}|} \sum_{{j:y^{S'}_j = y}}\delta_{\mathcal{G}^{S'}_j}$ are label-specific uniform empirical measures with the distance metric measured by LinearFGW. Intuitively, we collect graphs in subset $S$ with label $y$ and graphs in subset $S'$ with label $y'$  as distributions. Then, we compute the optimal transport distance between these distributions and define the distance as \textit{graph label distance}. 

\subsubsection{Graph Dataset Distance (GDD)}
Building upon the aformentioned graph-label distance, we then propose the notion of Graph Dataset Distance (GDD), which compares two graph subsets at a dataset-level. Specifically, based on Equation~(\ref{eq:method-label-distance}), we can define a distance metric $d^c_{g\mathcal{Z}}$ between graph subsets $S, S'$:
\resizebox{\linewidth}{!}{
\begin{minipage}{\linewidth}
\begin{align}
\label{eq:method-gfgw-distance}
    d^c_{g\mathcal{Z}}((\mathcal{G}^S_i, y^{S}_i), (\mathcal{G}^{S'}_j, y^{S'}_j)) 
    &= d_\text{LinearFGW}(\mathcal{G}^S_i, \mathcal{G}^{S'}_j) + c \cdot  d(\alpha_y^S, \alpha_{y'}^{S'}),
\end{align}
\end{minipage}
}

which is a combination of LinearFGW distance and graph-label distance with a weight parameter $c \geq 0$ balancing the importance of two terms. GDD can thus be defined as:
\begin{equation}
\label{eq:method-gdd-distance}
    \text{GDD}(\mathcal{D}^S, \mathcal{D}^{S'}) = \text{OT}(\mathbf{p}^S, \mathbf{q}^{S'}, d^c_{g\mathcal{Z}}), 
\end{equation}
where $\mathbf{p}^S = \frac{1}{|S|} \sum_{i \in [|S|]} \delta_{(\mathcal{G}_i^S, y^S_i)}$ and  
$\mathbf{q}^{S'} = \frac{1}{|S'|} \sum_{j \in [|S'|]} \delta_{(\mathcal{G}_j^{S'}, y^{S'})}$ are 
uniform empirical measures. We summarize the computation of GDD in Appendix~\ref{appendix:gdd-computation} and Algorithm~\ref{alg:gdd-computation}.

\begin{remark}
\label{remark:1}
If we set $c=0$, GDD will omit the label information and only consider the distributional differences of graph data themselves, which matches the setting of unsupervised GDA problem. 
\end{remark}

\subsection{\methodname: A Model-Free Graph Data Selector}
\label{ssec:graph-data-selection}




\subsubsection{GDD Bounds Domain Generaization Gap}


We give the following Theorem~\ref{thm:3.3-gap} to elucidate the utility of GDD and its relation to model performance discrepancy between graph domains. In short, we seek to utilize empirical observations in the source domain to minimize the expected risk calculated on the target domain $P_t$, namely, $\mathbb E_{(\mathcal{G},y)\sim P_t}[\mathcal{L} (f(\mathcal G), y)]$, which promotes the model performance (i.e., lower expected risk) on the target domain.

\begin{theorem}[Graph Domain Generalization Gap]
\label{thm:3.3-gap}
Define the cost function among graph-label pairs as $d^c_{g\mathcal Z}$ with some positive $c$ (via Equation~(\ref{eq:method-gfgw-distance})). 
Let $\mathbf w$ denote the source distribution weight. 
Suppose that $d^c_{g\mathcal Z}$ is $C$-Lipschitz. Then for any model $f$ trained on a training set, we have 
\begin{align}
\underset{(\mathcal{G},y)\sim\mathbf q^\textnormal{val}}{\mathbb E}[\mathcal{L} (f(\mathcal G), y)] 
\le
\underset{(\mathcal G,y)\sim\mathbf p^\textnormal{train}(\mathbf w)}{\mathbb E}[\mathcal{L} (f(\mathcal G), y)] 
+ 
C \cdot 
\textnormal{GDD} 
(\mathcal{D^{\textnormal{train}}_\mathbf{w}, \mathcal{D}^\textnormal{val}}),\nonumber
\end{align}

where $ \textnormal{GDD} 
(\mathcal{D^{\textnormal{train}}_\mathbf{w}, \mathcal{D}^\textnormal{val}}) =  \textnormal{OT}(\mathbf p^\textnormal{train}(\mathbf w),\mathbf q^\textnormal{val},d^c_{g\mathcal Z})$ is the graph dataset distance between the weighted training dataset (defined by $\mathbf{w}$) and target dataset.
\begin{proof}
The proof can be found in Appendix~\ref{appendix:proof:1}.
\end{proof}
\end{theorem}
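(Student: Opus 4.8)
The plan is to derive the inequality as a standard optimal-transport transfer-of-risk argument: pick the coupling that witnesses $\textnormal{GDD}$, rewrite both expected risks as expectations under that single coupling using its marginal constraints, and then control the resulting pointwise difference of losses by the $C$-Lipschitz hypothesis on the ground cost $d^c_{g\mathcal{Z}}$.

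Concretely, I would proceed as follows. First, unfold the definition $\textnormal{GDD}(\mathcal{D}^{\textnormal{train}}_{\mathbf{w}},\mathcal{D}^{\textnormal{val}}) = \textnormal{OT}(\mathbf{p}^{\textnormal{train}}(\mathbf{w}),\mathbf{q}^{\textnormal{val}},d^c_{g\mathcal{Z}})$ and let $\pi^\star \in \Pi(\mathbf{p}^{\textnormal{train}}(\mathbf{w}),\mathbf{q}^{\textnormal{val}})$ be an optimal coupling attaining the minimum (existence is immediate since for these empirical/reweighted measures the problem is a finite linear program over a compact feasible set). Second, because the first marginal of $\pi^\star$ is $\mathbf{p}^{\textnormal{train}}(\mathbf{w})$ and the second is $\mathbf{q}^{\textnormal{val}}$, the function $(\mathcal{G},y)\mapsto\mathcal{L}(f(\mathcal{G}),y)$ has the same expectation under each marginal as under $\pi^\star$ evaluated on the corresponding coordinate, so
\[
\underset{(\mathcal{G},y)\sim\mathbf{q}^{\textnormal{val}}}{\mathbb{E}}[\mathcal{L}(f(\mathcal{G}),y)] - \underset{(\mathcal{G},y)\sim\mathbf{p}^{\textnormal{train}}(\mathbf{w})}{\mathbb{E}}[\mathcal{L}(f(\mathcal{G}),y)] = \underset{((\mathcal{G}^s,y^s),(\mathcal{G}^t,y^t))\sim\pi^\star}{\mathbb{E}}\bigl[\mathcal{L}(f(\mathcal{G}^t),y^t)-\mathcal{L}(f(\mathcal{G}^s),y^s)\bigr].
\]
Third, invoke the regularity assumption: interpreting ``$d^c_{g\mathcal{Z}}$ is $C$-Lipschitz'' as the statement that the composed loss $(\mathcal{G},y)\mapsto\mathcal{L}(f(\mathcal{G}),y)$ is $C$-Lipschitz with respect to $d^c_{g\mathcal{Z}}$, we obtain the pointwise bound $\mathcal{L}(f(\mathcal{G}^t),y^t)-\mathcal{L}(f(\mathcal{G}^s),y^s) \le C\cdot d^c_{g\mathcal{Z}}((\mathcal{G}^s,y^s),(\mathcal{G}^t,y^t))$ for every pair in the support of $\pi^\star$. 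Finally, take expectations over $\pi^\star$: the right-hand side becomes $C\cdot\mathbb{E}_{\pi^\star}[d^c_{g\mathcal{Z}}] = C\cdot\textnormal{OT}(\mathbf{p}^{\textnormal{train}}(\mathbf{w}),\mathbf{q}^{\textnormal{val}},d^c_{g\mathcal{Z}}) = C\cdot\textnormal{GDD}$, and rearranging gives the claim.

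I expect the only genuinely delicate point to be making the hypothesis precise and self-consistent, rather than the transport manipulation, which is routine. One must check that $d^c_{g\mathcal{Z}}$ from Equation~(\ref{eq:method-gfgw-distance}) --- $d_{\textnormal{LinearFGW}}$ between graphs plus $c$ times the graph-label distance, the latter itself an OT distance between label-conditional measures --- is a bona fide (pseudo)metric on $\mathcal{G}\times\mathcal{Y}$; this holds because LinearFGW is a metric, OT with a metric ground cost is again a metric, and a nonnegative combination of (pseudo)metrics is a (pseudo)metric, so the Lipschitz hypothesis is at least well-posed. Two related subtleties to flag: (i) the graph-label distance depends on the ambient subsets $S,S'$ through the label-conditional measures $\alpha_y^S$, so the constant $C$ and the coupling $\pi^\star$ must both refer to the one fixed cost with which $\textnormal{GDD}$ is actually evaluated --- there is no circularity provided we commit to that cost throughout; and (ii) $\mathbf{p}^{\textnormal{train}}(\mathbf{w})$ is in general a non-uniform reweighting of the training graphs, but the marginal identities and the existence of an optimal coupling hold verbatim for arbitrary probability measures, so $\mathbf{w}$ enters the argument only by fixing the first marginal.
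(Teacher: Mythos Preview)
Your proposal is correct and essentially identical to the paper's proof: both use the marginal constraints of a coupling to rewrite the risk gap as an expectation of loss differences, bound that pointwise by the Lipschitz hypothesis (interpreted exactly as you do, i.e., the composed loss is $C$-Lipschitz w.r.t.\ $d^c_{g\mathcal Z}$), and identify the result with the OT cost. The only cosmetic difference is that the paper carries an arbitrary coupling and takes the infimum at the end, whereas you fix the optimal coupling $\pi^\star$ up front; these are equivalent.
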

\vspace{-1em}

\paragraph{Implication of Theorem~\ref{thm:3.3-gap}. } 
If we can lower the GDD between training and validation data, the discrepancy in the model performance with respect to training and validation sets may also decrease. Specifically, under the scenario where distribution shift occurs, some source data might be irrelevant or even harmful when learning a GNN model that needs to generalize well on the target domain. This motivates us to present our main framework, \fullmethodname, which selects the best training data from the source domain for graph domain adaptation.

\subsubsection{GDD Minimization Problem}  
Guided by the implication of Theorem~\ref{thm:3.3-gap}, we formulate the GDD minimization problem as follows. 

\begin{definition}[Graph Dataset Distance Minimization]
Given training and validation sets $\mathcal{D}^\text{train}$ and $\mathcal{D}^\text{val}$, we aim to find the best distribution weight $\mathbf{w}^*$ that minimizes GDD betwen the training and validation sets under the sparsity constraint. Namely, 
\begin{align}
\label{eq:method-gdd-minimization}
    \mathbf{w}^* = &\min_{\mathbf{w}} \, \text{OT}(\mathbf{p}^\text{train} (\textbf{w}), \mathbf{q}^{\text{val}}, d^c_{g\mathcal Z} ),
    &\text{s.t.} \sum_i \mathbf{w}[i] = 1, \mathbf{w}\geq 0, \|\mathbf{w}\|_0 \leq \lfloor n\cdot \tau \rfloor,
\end{align}
where $\mathbf{p}^\text{train} (\textbf{w}) =  \sum_{i \in [n]} \mathbf{w}_i \delta_{(\mathcal{G}_i^\text{train}, y^\text{train}_i)}$ and $\mathbf{q}^{\text{val}} = \frac{1}{m} \sum_{j \in [m]} \delta_{(\mathcal{G}_j^\text{val}, y_j^{\text{val}})}$. 
\end{definition}


\paragraph{Optimization Procedure (\optmethodname\ algorithm).}
\label{ssec:alternate-got-shrinkage}
To solve the above GDD minimization problem, we propose \fulloptmethodname\
to optimize the weight $\mathbf{w}$ over the entire training set. 
Starting from a uniform training weight vector \( \mathbf{w} \), \optmethodname\ iteratively refines the importance of training samples through two key steps. In each iteration, it first computes the Graph Dataset Distance (GDD) between the reweighted training distribution \( \mathbf{p}^\text{train}(\mathbf{w}) \) and the validation distribution \( \mathbf{q}^\text{val} \), using a pairwise cost matrix \( \tilde{\mathbf{D}} \in \mathbb{R}^{n \times m} \) that encodes distances between individual training and validation samples. The gradient of GDD with respect to \( \mathbf{w} \), denoted \( \mathbf{g}_\mathbf{w} \), is then used to update the weights\footnote{Note that we leverage the conclusion introduced in~\citep{just2023lava} to compute this gradient (stated as Theorem~\ref{thm:grad-gdd}).}. Following this, the weight vector \( \mathbf{w} \) is sparsified by retaining only the top-\( k \) largest entries and re-normalized to remain on the probability simplex. After \( T \) such iterations, the non-zero indices in the final \( \mathbf{w} \) define the selected training subset \( \mathbf{S} \). The detailed algorithm procedure is presented in Appendix~\ref{appendix:great} and Algorithm~\ref{alg:alt-got-shrinkage}.




\subsubsection{\methodname}

Combining GDD computation and optimization module \optmethodname, we summarize the main procedure of \methodname\ in Algorithm~\ref{alg:overall-alg} (details can be found in Appendix~\ref{appendix:gradate}). In short, given training and validation data, \methodname\ iteratively calculates GDD based on current training weight and searches for a better one through \optmethodname. The final output of \methodname\ corresponds to the selected subset of training data that is best suitable for adaptation to the target domain. We further provide the time complexity analysis of \methodname\ as follows.

\paragraph{Time Complexity of \methodname.} 
Let $N$ be the number of training graphs, $M$ be the number of validation graphs, $n$ be the number of nodes in each graph (WLOG, we assume all graph share the same size), $L$ be the largest class size, $\tau$ is the approximation error introduced by approximate OT solvers~\footnote{This is due to the entropic regularization in Sinkhorn iterations for empirical OT calculation.}
, $K$ be the number of iterations for solving LinearFGW, and $T$ the number of update steps used in \optmethodname. The runtime complexity can be summarized in the following proposition.~\footnote{For empirical runtime behavior, we refer readers to Appendix~\ref{appendix:empirical-runtime}.}

\begin{proposition}[Time Complexity Analysis~\citep{alvarez2020geometric,just2023lava, altschuler2017near}]
The off-line procedure of \methodname\ (i.e. can be computed before accessing the test set) has the time complexity 
$\mathcal{O}(N M K n^3 + N M L^3 \log L) $ and the on-line procedure of \methodname\ has the time complexity $\mathcal{O}(TNM \log (\max (N,M) ) \tau^{-3})$.

\end{proposition}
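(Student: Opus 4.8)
The plan is to prove the bound by tracing through Algorithm~\ref{alg:overall-alg} and splitting the work into (i) a target-independent preprocessing stage that assembles the pairwise cost matrix $\tilde{\mathbf D}\in\mathbb R^{N\times M}$ underlying every evaluation of GDD, and (ii) the iterative reweighting stage \optmethodname, which only updates $\mathbf w$ and re-solves an optimal transport problem against the already-computed $\tilde{\mathbf D}$. Stage (i) never reads the test set, so it is the off-line cost; stage (ii) is the on-line cost; the two are then added.

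For the off-line cost I would bound the work to build $\tilde{\mathbf D}$, whose $(i,j)$ entry is $d^c_{g\mathcal Z}((\mathcal G_i^{\mathrm{train}},y_i),(\mathcal G_j^{\mathrm{val}},y_j)) = d_{\mathrm{LinearFGW}}(\mathcal G_i^{\mathrm{train}},\mathcal G_j^{\mathrm{val}}) + c\cdot d(\alpha_{y_i},\alpha_{y_j})$ by Equation~(\ref{eq:method-gfgw-distance}). The first summand needs the pairwise LinearFGW distances: there are at most $NM$ graph pairs, and under the assumption that each graph has $n$ nodes a LinearFGW solve runs $K$ conditional-gradient iterations whose dominant per-iteration step is the Gromov coupling update at $O(n^3)$, contributing $O(NMKn^3)$. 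The second summand needs, for every label pair occurring in the cost matrix (at most $NM$, and only $|\mathcal Y|^2$ after caching over label pairs), one exact OT solve between two label-conditional empirical measures supported on at most $L$ graphs with ground cost supplied by the already-available LinearFGW distances; an exact network-simplex solve costs $O(L^3\log L)$, contributing $O(NML^3\log L)$. Summing yields $O(NMKn^3 + NML^3\log L)$, consistent with the OTDD-style accounting of \citet{alvarez2020geometric}.

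For the on-line cost I would analyze one iteration of \optmethodname. Each iteration (a) solves $\mathrm{OT}(\mathbf p^{\mathrm{train}}(\mathbf w),\mathbf q^{\mathrm{val}},\tilde{\mathbf D})$ to tolerance $\tau$; by the near-linear-time entropic solver of \citet{altschuler2017near}, on measures of sizes $N$ and $M$ this costs $O(NM\log(\max(N,M))\,\tau^{-3})$; (b) forms the gradient $\mathbf g_{\mathbf w}$ of GDD in $\mathbf w$, which by the calibrated-gradient result of \citet{just2023lava} (Theorem~\ref{thm:grad-gdd}) is read off from the dual potentials of the OT problem just solved at only $O(NM)$ additional cost; and (c) performs a top-$k$ truncation and renormalization of $\mathbf w$, which is $O(N\log N)$. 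Step (a) dominates, so one iteration is $O(NM\log(\max(N,M))\,\tau^{-3})$, and $T$ iterations give $O(TNM\log(\max(N,M))\,\tau^{-3})$, as claimed. The main obstacle I anticipate is making the loose factors of $N$ and $M$ genuine upper bounds rather than artifacts of a wasteful implementation: one must argue that LinearFGW's template-based scheme still respects the $O(Kn^3)$-per-solve bound and that the graph-label OT sub-problems are controlled (bounded by $NM$, tightened via caching), and one must check that the hypotheses of \citet{just2023lava} hold for the cost $d^c_{g\mathcal Z}$ so that differentiating GDD truly costs no extra OT solves.
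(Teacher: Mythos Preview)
Your proposal is correct and mirrors the paper's own treatment: the paper does not spell out a proof but simply attributes the bound to \citet{alvarez2020geometric} (for the OTDD-style off-line accounting), \citet{just2023lava} (for reading the gradient off the OT dual potentials), and \citet{altschuler2017near} (for the $O(NM\log(\max(N,M))\tau^{-3})$ entropic-OT solve), which is exactly the three-way decomposition you work through. One small refinement worth folding in: your $O(NMKn^3)$ justification counts $NM$ pairwise FGW solves, whereas LinearFGW's barycentric scheme in Algorithm~\ref{alg:linearfgw} does only $O(N+M)$ FGW-to-template solves and then fills the $N\times M$ matrix via Frobenius-norm comparisons of the projections; the stated bound still holds as a valid upper bound (and you already anticipate this looseness in your last paragraph), but the mechanism is slightly different from what you describe.
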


\section{Experiments}
\label{sec:exp}



We conduct extensive experiments to evaluate the effectiveness of \methodname\ across six real-world graph classification settings under two different types of distribution shift. Our experiments are designed to answer the following research questions: 
\begin{itemize}[leftmargin=1.1em]
    \item (\textbf{RQ1}): How does \methodname\ compare to existing data selection methods?
    \item (\textbf{RQ2}): How does \methodname\ + vanilla GNNs compare to model-centric GDA methods?
    \item (\textbf{RQ3}): To what extent can \methodname\ enhance the effectiveness of model-centric GDA methods? 
\end{itemize}

We will answer these research questions in Section~\ref{ssec:as-graph-selector}, \ref{ssec:as-gda-model} and~\ref{ssec:as-gda-enhancer}, correspondingly.

\subsection{General Setup}

In this section, we state the details of datasets and settings of \methodname\ and baseline methods.

\paragraph{Datasets and Graph Domains.} We consider graph classification tasks conducted on six  real-world graph-level datasets, including \textsc{IMDB-BINARY}~\citep{yanardag2015deep}, \textsc{IMDB-MULTI}~\citep{yanardag2015deep}, \textsc{MSRC\_21}~\citep{neumann2016propagation}, \texttt{ogbg-molbace}~\citep{hu2020open}, \texttt{ogbg-molbbbp}~\citep{hu2020open} and \texttt{ogbg-molhiv}~\citep{hu2020open}. The former three datasets are from the TUDataset~\citep{morris2020tudataset}; while the latter three datasets are from the OGB benchmark~\citep{hu2020open}. We define the graph domains by \textit{graph density} and \textit{graph size}, which are the types of covariate shift that are widely studied in the literature~\citep{yin2024dream, luo2024rank, tang2024multi, bevilacqua2021size, yehudai2021local, chu2023wasserstein, zhou2022ood}. Specifically, graphs are sorted by corresponding properties in an \textit{ascending} order and split into train/val/test sets with ratios $60\%/20\%/20\%$. For brevity, we provide results on \textit{graph density} shift in the main content. Additional experiments on \textit{graph size} shift can be found in Appendix~\ref{appendix:additional-exp}. We also include the empirical cumulative distribution function (ECDF) 
plots of all settings in Appendix~\ref{appendix:ecdf-plots} to demonstrate the shift level.


\paragraph{Details of \methodname\ and Baselines.} To compute LinearFGW within \methodname, we follow the default parameter settings in its github repository.\footnote{\href{https://github.com/haidnguyen0909/LinearFGW/blob/main/main.py}{https://github.com/haidnguyen0909/LinearFGW}} The trade-off parameter $\alpha$ is computed in $\{0.5, 0.9\}$\footnote{Since the datasets do not contain node features, we consider a larger $\alpha$ to place a greater emphasis on the structural properties.} and the order $r$ is set to 2. The update step is fixed to $T=10$ and the learning rate equal to $\eta=10^{-4}$ across different settings. 
A popular model-free data valuation method is LAVA~\citep{just2023lava}. We apply LAVA for graph data selection and make the following modifications. We first leverage LinearFGW to form the pairwise distance matrix and compute GDD. LAVA then picks the smallest $k$ entries of the calibrated gradients as output.
For the computation of GDD, we consider label signal \( c \in \{0, 5\} \).
We also incorporate \textsc{KiDD}-LR~\citep{xu2023kernel} as a 
model-centric but data-efficient baseline, which is a state-of-the-art graph dataset distillation method.

\subsection{\methodname\ as a Model-Free Graph Selector}
\label{ssec:as-graph-selector}

To answer (\textbf{RQ1}), we compare \methodname\ with other data-efficient methods, including (1) model-free techniques: random selection and LAVA~\citep{just2023lava} and (2) model-centric techniques: \textsc{KiDD}-LR~\citep{xu2023kernel}.



\begin{table*}[t]
\centering
\resizebox{1.0\textwidth}{!}{%
\begin{tabular}{@{\extracolsep{\fill}}cc|cccc|cccc}
\toprule
\midrule
\multirow{2.4}{*}{\textbf{Dataset}}&   \multirow{1}{*}{\textbf{GNN Architecture} $\rightarrow$\!\!\!\! }& \multicolumn{4}{c|}{\makecell{\textbf{GCN}}}  & \multicolumn{4}{c}{\makecell{\textbf{GIN}}} \\
\cmidrule{2-10} 
&\textbf{Selection Method} $\downarrow$\!& $\tau=10\%$ & $\tau=20\%$ & $\tau=50\%$ & Full & $\tau=10\%$ & $\tau=20\%$ & $\tau=50\%$ & Full \\
\midrule
\multirow{4}{*}{\textsc{IMDB-BINARY}}
& Random &       
\underline{0.737}\scriptsize{${\pm}$ 0.056} & 0.660\scriptsize{${\pm}$ 0.012} & \underline{0.868}\scriptsize{${\pm}$ 0.009} & 

\multirow{4}{*}{0.822\scriptsize{${\pm}$0.012}}
&    
0.600\scriptsize{${\pm}$ 0.019} & 0.710\scriptsize{${\pm}$ 0.049} & 0.770\scriptsize{${\pm}$ 0.053} & 

\multirow{4}{*}{0.783\scriptsize{${\pm}$0.031}}\\
&\textsc{KiDD}-\textsc{LR} &   0.697\scriptsize{${\pm}$ 0.041} & \underline{0.787}\scriptsize{${\pm}$ 0.034} & 0.810\scriptsize{${\pm}$ 0.022} & 
  
& 
0.682\scriptsize{${\pm}$ 0.013} & 0.772\scriptsize{${\pm}$ 0.029} & 0.795\scriptsize{${\pm}$ 0.014} & 
  \\

& LAVA   &       
0.620\scriptsize{${\pm}$ 0.000} & 0.620\scriptsize{${\pm}$ 0.000} & 0.620\scriptsize{${\pm}$ 0.000} & 
 
&    
\underline{0.777}\scriptsize{${\pm}$ 0.019} & \underline{0.795}\scriptsize{${\pm}$ 0.007} & \underline{0.800}\scriptsize{${\pm}$ 0.007} & 

   \\

\cmidrule{2-5} \cmidrule{7-9}
  

&\cellcolor[HTML]{D3D3D3}\methodname    &   
\cellcolor[HTML]{D3D3D3}\textbf{0.805}\scriptsize{${\pm}$ 0.000} & \cellcolor[HTML]{D3D3D3}\textbf{0.855}\scriptsize{${\pm}$ 0.024} & \cellcolor[HTML]{D3D3D3}\textbf{0.890}\scriptsize{${\pm}$ 0.015} & 

& 
\cellcolor[HTML]{D3D3D3}\textbf{0.800}\scriptsize{${\pm}$ 0.008} & \cellcolor[HTML]{D3D3D3}\textbf{0.832}\scriptsize{${\pm}$ 0.002} & \cellcolor[HTML]{D3D3D3}\textbf{0.900}\scriptsize{${\pm}$ 0.013} & 

\\
\midrule
\multirow{4}{*}{\textsc{IMDB-MULTI}}
& Random &       
0.139\scriptsize{${\pm}$ 0.032} & 0.092\scriptsize{${\pm}$ 0.032} & 0.080\scriptsize{${\pm}$ 0.000} & 
\multirow{4}{*}{0.102\scriptsize{${\pm}$0.017}}
&   
0.102\scriptsize{${\pm}$ 0.015} & \underline{0.180}\scriptsize{${\pm}$ 0.005} & 0.156\scriptsize{${\pm}$ 0.057} & 

\multirow{4}{*}{0.143\scriptsize{${\pm}$0.056}}\\
&\textsc{KiDD}-\textsc{LR} &   0.156\scriptsize{${\pm}$ 0.022} & 0.154\scriptsize{${\pm}$ 0.046} & 0.171\scriptsize{${\pm}$ 0.052} & 
 
& 
0.058\scriptsize{${\pm}$ 0.044} & 0.093\scriptsize{${\pm}$ 0.010} & 0.077\scriptsize{${\pm}$ 0.025} & 
   \\

&LAVA   &   
\underline{0.183}\scriptsize{${\pm}$ 0.000} & \underline{0.183}\scriptsize{${\pm}$ 0.000} & \underline{0.183}\scriptsize{${\pm}$ 0.000} & 
  
& 
\textbf{0.190}\scriptsize{${\pm}$ 0.009} & 0.177\scriptsize{${\pm}$ 0.019} & \underline{0.193}\scriptsize{${\pm}$ 0.025} & 
   \\
\cmidrule{2-5} \cmidrule{7-9}
& \cellcolor[HTML]{D3D3D3}\methodname   & \cellcolor[HTML]{D3D3D3}\textbf{0.588}\scriptsize{${\pm}$ 0.286} &  \cellcolor[HTML]{D3D3D3}\textbf{0.349}\scriptsize{${\pm}$ 0.323} &  \cellcolor[HTML]{D3D3D3}\textbf{0.611}\scriptsize{${\pm}$ 0.242} & 
 
 &
 \cellcolor[HTML]{D3D3D3}\underline{0.183}\scriptsize{${\pm}$ 0.073} &  \cellcolor[HTML]{D3D3D3}\textbf{0.266}\scriptsize{${\pm}$ 0.133} &  
 \cellcolor[HTML]{D3D3D3}\textbf{0.361}\scriptsize{${\pm}$ 0.162} & 
   \\
\midrule
\multirow{4}{*}{\textsc{MSRC}\_21}
& Random &       
0.576\scriptsize{${\pm}$ 0.029} & 0.702\scriptsize{${\pm}$ 0.045} & 0.830\scriptsize{${\pm}$ 0.004} & 
\multirow{4}{*}{0.860\scriptsize{${\pm}$0.007}}
&   
0.427\scriptsize{${\pm}$ 0.035} & 0.801\scriptsize{${\pm}$ 0.046} & 0.857\scriptsize{${\pm}$ 0.011} &

\multirow{4}{*}{0.883\scriptsize{${\pm}$0.015}}\\
&\textsc{KiDD}-\textsc{LR} &   
\underline{0.702}\scriptsize{${\pm}$ 0.007} & 0.766\scriptsize{${\pm}$ 0.015} & 0.848\scriptsize{${\pm}$ 0.004} & 
& 
\underline{0.763}\scriptsize{${\pm}$ 0.025} & 0.792\scriptsize{${\pm}$ 0.017} & 0.863\scriptsize{${\pm}$ 0.015} & 
   \\

&LAVA   &   
0.623\scriptsize{${\pm}$ 0.007} & \textbf{0.819}\scriptsize{${\pm}$ 0.012} & \underline{0.895}\scriptsize{${\pm}$ 0.012} & 

& 
0.667\scriptsize{${\pm}$ 0.012} & \underline{0.851}\scriptsize{${\pm}$ 0.007} & \underline{0.933}\scriptsize{${\pm}$ 0.004} & 
   \\
\cmidrule{2-5} \cmidrule{7-9}
& \cellcolor[HTML]{D3D3D3}\methodname   
& \cellcolor[HTML]{D3D3D3}\textbf{0.719}\scriptsize{${\pm}$ 0.007} 
& \cellcolor[HTML]{D3D3D3}\underline{0.797}\scriptsize{${\pm}$ 0.008} 
& \cellcolor[HTML]{D3D3D3}\textbf{0.906}\scriptsize{${\pm}$ 0.004} & 
&
\cellcolor[HTML]{D3D3D3}\textbf{0.787}\scriptsize{${\pm}$ 0.046} 
&\cellcolor[HTML]{D3D3D3}\textbf{0.860}\scriptsize{${\pm}$ 0.007} 
& \cellcolor[HTML]{D3D3D3}\textbf{0.942}\scriptsize{${\pm}$ 0.008} & 
   \\
\midrule
\multirow{4}{*}{\texttt{ogbg-molbace}}
& Random &       
0.551\scriptsize{${\pm}$ 0.100} & 0.375\scriptsize{${\pm}$ 0.012} & 0.581\scriptsize{${\pm}$ 0.039} & 
\multirow{4}{*}{0.617\scriptsize{${\pm}$0.073}}
&   
\underline{0.637}\scriptsize{${\pm}$ 0.012} & 0.621\scriptsize{${\pm}$ 0.027} & 0.537\scriptsize{${\pm}$ 0.085} &

\multirow{4}{*}{0.560\scriptsize{${\pm}$0.063}}\\
&\textsc{KiDD}-\textsc{LR} &   
0.592\scriptsize{${\pm}$ 0.054} & 0.484\scriptsize{${\pm}$ 0.020} & 0.592\scriptsize{${\pm}$ 0.008} & &
0.613\scriptsize{${\pm}$ 0.090} & 0.456\scriptsize{${\pm}$ 0.041} & 0.589\scriptsize{${\pm}$ 0.035} & 
\\

&LAVA   &   
\underline{0.627}\scriptsize{${\pm}$ 0.033} &
\textbf{0.637}\scriptsize{${\pm}$ 0.030} &
\textbf{0.637}\scriptsize{${\pm}$ 0.014} & 

& 
0.602\scriptsize{${\pm}$ 0.028} & 
\underline{0.633}\scriptsize{${\pm}$ 0.048} & 
\underline{0.672}\scriptsize{${\pm}$ 0.028} & 
   \\
\cmidrule{2-5} \cmidrule{7-9}
& \cellcolor[HTML]{D3D3D3}\methodname   & 
\cellcolor[HTML]{D3D3D3}\textbf{0.655}\scriptsize{${\pm}$ 0.046} &
\cellcolor[HTML]{D3D3D3}\underline{0.578}\scriptsize{${\pm}$ 0.035} & 
\cellcolor[HTML]{D3D3D3}\underline{0.614}\scriptsize{${\pm}$ 0.042} & 
&
\cellcolor[HTML]{D3D3D3}\textbf{0.642}\scriptsize{${\pm}$ 0.083} & 
\cellcolor[HTML]{D3D3D3}\textbf{0.660}\scriptsize{${\pm}$ 0.026} & 
\cellcolor[HTML]{D3D3D3}\textbf{0.684}\scriptsize{${\pm}$ 0.026}  & 
   \\
\midrule
\multirow{4}{*}{\texttt{ogbg-molbbbp}}
& Random &       
0.567\scriptsize{${\pm}$ 0.037} & 0.488\scriptsize{${\pm}$ 0.088} & 0.478\scriptsize{${\pm}$ 0.021} & 
\multirow{4}{*}{0.478\scriptsize{${\pm}$0.069}}
&   
0.534\scriptsize{${\pm}$ 0.084} & \underline{0.648}\scriptsize{${\pm}$ 0.045} & 0.623\scriptsize{${\pm}$ 0.019} & 

\multirow{4}{*}{0.671\scriptsize{${\pm}$0.034}}\\
&\textsc{KiDD}-\textsc{LR} &   
0.428\scriptsize{${\pm}$ 0.025} & 0.477\scriptsize{${\pm}$ 0.080} & 0.457\scriptsize{${\pm}$ 0.013} & 
& 
0.424\scriptsize{${\pm}$ 0.005} & 0.450\scriptsize{${\pm}$ 0.070} & 0.464\scriptsize{${\pm}$ 0.052} & 
   \\

&LAVA   &   
\underline{0.596}\scriptsize{${\pm}$ 0.058} & 
\underline{0.566}\scriptsize{${\pm}$ 0.021} & 
\underline{0.547}\scriptsize{${\pm}$ 0.044} & 

& 
\underline{0.619}\scriptsize{${\pm}$ 0.044} & 
0.642\scriptsize{${\pm}$ 0.120} & 
\textbf{0.747}\scriptsize{${\pm}$ 0.024}
   \\
\cmidrule{2-5} \cmidrule{7-9}
& 
\cellcolor[HTML]{D3D3D3}\methodname   & 
\cellcolor[HTML]{D3D3D3}\textbf{0.604}\scriptsize{${\pm}$ 0.065} & 
\cellcolor[HTML]{D3D3D3}\textbf{0.601}\scriptsize{${\pm}$ 0.047} & 
\cellcolor[HTML]{D3D3D3}\textbf{0.557}\scriptsize{${\pm}$ 0.001}  & 
&
\cellcolor[HTML]{D3D3D3}\textbf{0.657}\scriptsize{${\pm}$ 0.039} &
\cellcolor[HTML]{D3D3D3}\textbf{0.677}\scriptsize{${\pm}$ 0.072} & 
\cellcolor[HTML]{D3D3D3}\underline{0.715}\scriptsize{${\pm}$ 0.015} & 
   \\
\midrule
\multirow{4}{*}{\texttt{ogbg-molhiv}}
& Random &       
\underline{0.603}\scriptsize{${\pm}$ 0.005} & \textbf{0.615}\scriptsize{${\pm}$ 0.004} & \underline{0.621}\scriptsize{${\pm}$ 0.001} & 
\multirow{4}{*}{0.625\scriptsize{${\pm}$0.001}}
&   
0.608\scriptsize{${\pm}$ 0.015} & 0.609\scriptsize{${\pm}$ 0.030} & 0.593\scriptsize{${\pm}$ 0.012} & 

\multirow{4}{*}{0.596\scriptsize{${\pm}$0.015}}\\
&\textsc{KiDD}-\textsc{LR} &   
0.590\scriptsize{${\pm}$ 0.005} & \underline{0.608}\scriptsize{${\pm}$ 0.001} & 0.595\scriptsize{${\pm}$ 0.011} & 
& 
0.597\scriptsize{${\pm}$ 0.042} & 0.595\scriptsize{${\pm}$ 0.039} & 0.608\scriptsize{${\pm}$ 0.020} & 
   \\

&LAVA   &   
0.531\scriptsize{${\pm}$ 0.035} & 
0.594\scriptsize{${\pm}$ 0.013} &  
0.601\scriptsize{${\pm}$ 0.013} & 

& 
\underline{0.614}\scriptsize{${\pm}$ 0.002} & 
\underline{0.638}\scriptsize{${\pm}$ 0.020} & 
\underline{0.641}\scriptsize{${\pm}$ 0.010}  & 
   \\
\cmidrule{2-5} \cmidrule{7-9}
& \cellcolor[HTML]{D3D3D3}\methodname   
& \cellcolor[HTML]{D3D3D3}\textbf{0.607}\scriptsize{${\pm}$ 0.018}
& \cellcolor[HTML]{D3D3D3}0.599\scriptsize{${\pm}$ 0.012} 
& \cellcolor[HTML]{D3D3D3}\textbf{0.622}\scriptsize{${\pm}$ 0.004}  & 
&
\cellcolor[HTML]{D3D3D3}\textbf{0.640}\scriptsize{${\pm}$ 0.013}
&\cellcolor[HTML]{D3D3D3}\textbf{0.651}\scriptsize{${\pm}$ 0.022}
& \cellcolor[HTML]{D3D3D3}\textbf{0.658}\scriptsize{${\pm}$ 0.018} & 
   \\
\midrule
\bottomrule
\end{tabular}
}
\caption{Performance comparison across data selection methods for \textit{graph density} shift. We use \textbf{bold}/\underline{underline} to indicate the 1st/2nd best results.  In most settings, \methodname\ achieves the best performance across datasets.}
\label{table:graph-select}
\vspace{-1em}
\end{table*}

\paragraph{Experiment Setup.} To test the effectiveness of these selection methods, we fix the backbone GNN models in use to 
train the data selected by each method.  Two popular GNN models are chosen, including GCN~\citep{kipf2016semi} and GIN~\citep{xu2018powerful} with default hyper-parameters following~\citet{zenggraph} and the corresponding original papers. We also consider results on GAT~\citep{velivckovic2017graph} and GraphSAGE~\citep{hamilton2017inductive}. Please see Appendix~\ref{appendix:exp-table-1-size-old}, \ref{appendix:exp-table-1-density-new} and \ref{appendix:exp-table-1-size-new} for more details.  Here we consider selection ratio $\tau \in [0.1, 0.2, 0.5]$. More details on the architectures and training protocol can be found in Appendix~\ref{appendix:gnn-settings}. 

\paragraph{Results.} As shown in Table~\ref{table:graph-select}, across all datasets, \methodname\ outperforms the baseline methods under different selection ratios. It is also worth noting that even with few selected data, \methodname\ can already achieve or excess GNN performance trained with full data, showing the importance of data quality in the source domain. The main reason is that because under distribution shift, a certain number of harmful graphs exist in training set. These samples can mislead the model and degrade generalization performance.




\subsection{\methodname\ as a GDA Method}
\label{ssec:as-gda-model}
We answer (\textbf{RQ2}) by directly compare the combination of \methodname\ and vanilla GNNs (including non-domain-adapted GCN, GIN, GAT and GraphSAGE) with state-of-the-art GDA models. We fix the sparsity ratio $\tau$ to $20\%$ across all selection methods. 

\paragraph{Experiment Setup.}
The four GDA methods we consider include AdaGCN~\citep{dai2022graph}, GRADE~\citep{wu2023non}, ASN~\citep{zhang2021adversarial} and UDAGCN~\citep{wu2020unsupervised}. We conduct  GDA experiments based on the codebase of OpenGDA~\citep{shi2023opengda}.
We include more details of model-specific parameter settings in Appendix~\ref{appendix:gda-settings}. We set the training set as the source domain and the validation set as the target domain. For GDD computation, we set label signal $c =0$ to match the requirement of unsupervised GDA methods. Results on \textit{graph size} shift can be found in Appendix~\ref{appendix:exp-table-2-size}.

\paragraph{Results.}
Results are in Table~\ref{table:gda-vs-graph-select}. For model-centric GDA methods trained with full training data, the severe domain shift prohibits these methods from learning rich knowledge to perform well on the test data in the target domain. Instead, \methodname\ finds the most useful data in the training set that results in simple GNN models with extraordinary classification accuracy while maintaining data efficiency. In addition, similar to the observation in Section~\ref{ssec:as-graph-selector}, \methodname\ selects non-trivial training data that outperforms other model-free data selection methods. In Appendix

\begin{table}[ht]
\centering
\scalebox{0.64}{
\begin{tabular}{@{\extracolsep{\fill}}ccccccccc}
\toprule
\midrule
&&&\multicolumn{6}{c}{\textbf{Dataset}}\\
\cmidrule{4-9}
\textbf{Type} & \textbf{Model} & \textbf{Data}  & \textsc{IMDB-BINARY}  & \textsc{IMDB-MULTI} & $\textsc{ MSRC\_21 }$ & \texttt{ogbg-molbace} & \texttt{ogbg-molbbbp} & \texttt{ogbg-molhiv} \\
\midrule
\multirow{4}{*}{GDA} &
AdaGCN  & Full &
0.808\scriptsize{${\pm}$ 0.015} &  0.073\scriptsize{${\pm}$ 0.000} & 0.319\scriptsize{${\pm}$ 0.032} &

0.607\scriptsize{${\pm}$ 0.068} &
\textbf{0.778}\scriptsize{${\pm}$ 0.002} &
0.428\scriptsize{${\pm}$ 0.011}\\
& GRADE  & Full &
0.822\scriptsize{${\pm}$ 0.012} &  0.123\scriptsize{${\pm}$ 0.061} & 0.804\scriptsize{${\pm}$ 0.011} &

\textbf{0.683}\scriptsize{${\pm}$ 0.016} &  0.489\scriptsize{${\pm}$ 0.005} & 0.564\scriptsize{${\pm}$ 0.005} \\

& ASN  & Full &
0.782\scriptsize{${\pm}$ 0.030} &   0.119\scriptsize{${\pm}$ 0.047} & 0.833\scriptsize{${\pm}$ 0.033} & 

0.580\scriptsize{${\pm}$ 0.065} &   0.476\scriptsize{${\pm}$ 0.027} & 0.516\scriptsize{${\pm}$ 0.021}  \\
& UDAGCN  & Full &
0.807\scriptsize{${\pm}$ 0.013} &  0.114\scriptsize{${\pm}$ 0.049} & 0.351\scriptsize{${\pm}$ 0.019} &

0.541\scriptsize{${\pm}$ 0.034} &  0.522\scriptsize{${\pm}$ 0.015} & 0.451\scriptsize{${\pm}$ 0.030} \\
\midrule
\multirow{16}{*}{Vanilla} &
\multirow{4}{*}{GCN}
& Random 20\% &
0.660\scriptsize{${\pm}$ 0.012} &   0.092\scriptsize{${\pm}$ 0.032} & 0.702\scriptsize{${\pm}$ 0.045}& 0.529\scriptsize{${\pm}$ 0.124} & 0.528\scriptsize{${\pm}$ 0.030} & 0.598\scriptsize{${\pm}$ 0.003}\\
& &LAVA 20\% & 
0.620\scriptsize{${\pm}$ 0.000}  &   0.092\scriptsize{${\pm}$ 0.032} &  0.819\scriptsize{${\pm}$ 0.011} & 0.541\scriptsize{${\pm}$ 0.067}  & 0.503\scriptsize{${\pm}$ 0.043} & 0.591\scriptsize{${\pm}$ 0.030}  \\
\cmidrule{3-9}
& & \cellcolor[HTML]{D3D3D3}\methodname\  20\%  &  
 \cellcolor[HTML]{D3D3D3}0.830\scriptsize{${\pm}$ 0.021} &   \cellcolor[HTML]{D3D3D3}0.349\scriptsize{${\pm}$ 0.323} &  \cellcolor[HTML]{D3D3D3}0.797\scriptsize{${\pm}$ 0.008}  & \cellcolor[HTML]{D3D3D3}0.585\scriptsize{${\pm}$ 0.074} & \cellcolor[HTML]{D3D3D3}0.571\scriptsize{${\pm}$ 0.035}  & \cellcolor[HTML]{D3D3D3}0.583\scriptsize{${\pm}$ 0.006}  \\
\cmidrule{2-3} \cmidrule{4-9}
& \multirow{4}{*}{GIN}
& Random 20\% &      
0.710\scriptsize{${\pm}$ 0.049} & 0.180\scriptsize{${\pm}$ 0.005} & 0.801\scriptsize{${\pm}$ 0.046} &0.622\scriptsize{${\pm}$ 0.028} & 0.480\scriptsize{${\pm}$ 0.041} & 0.590\scriptsize{${\pm}$ 0.033}\\
& & LAVA 20\%&      
0.778\scriptsize{${\pm}$ 0.045} &  0.170\scriptsize{${\pm}$ 0.009} & \underline{0.851}\scriptsize{${\pm}$ 0.012} & 0.655\scriptsize{${\pm}$ 0.067} & 0.644\scriptsize{${\pm}$ 0.021}  & \underline{0.638}\scriptsize{${\pm}$ 0.012} \\
\cmidrule{3-9}
& &  \cellcolor[HTML]{D3D3D3}\methodname\ 20\%&  
 \cellcolor[HTML]{D3D3D3}0.832\scriptsize{${\pm}$ 0.025} &   \cellcolor[HTML]{D3D3D3}0.266\scriptsize{${\pm}$ 0.133} & \cellcolor[HTML]{D3D3D3}\textbf{0.860}\scriptsize{${\pm}$ 0.007} & \cellcolor[HTML]{D3D3D3}\underline{0.662}\scriptsize{${\pm}$ 0.006} & \cellcolor[HTML]{D3D3D3}\underline{0.665}\scriptsize{${\pm}$ 0.053}  & \cellcolor[HTML]{D3D3D3}\textbf{0.644}\scriptsize{${\pm}$ 0.017} \\
 \cmidrule{2-3} \cmidrule{4-9}
& \multirow{4}{*}{GAT}
& Random 20\% &      
0.662\scriptsize{${\pm}$ 0.029} & 0.067\scriptsize{${\pm}$ 0.005} & 0.713\scriptsize{${\pm}$ 0.008} &0.472\scriptsize{${\pm}$ 0.034} & 0.486\scriptsize{${\pm}$ 0.041} & 0.593\scriptsize{${\pm}$ 0.012} \\
& & LAVA 20\%&      
0.835\scriptsize{${\pm}$ 0.002} &  \underline{0.790}\scriptsize{${\pm}$ 0.002} & 0.842\scriptsize{${\pm}$ 0.026} & 0.515\scriptsize{${\pm}$ 0.019} & 0.511\scriptsize{${\pm}$ 0.069} & 0.602\scriptsize{${\pm}$ 0.017} \\
\cmidrule{3-9}
& &  \cellcolor[HTML]{D3D3D3}\methodname\ 20\%&  
 \cellcolor[HTML]{D3D3D3}\textbf{0.858}\scriptsize{${\pm}$ 0.005} &   \cellcolor[HTML]{D3D3D3}\textbf{0.800}\scriptsize{${\pm}$ 0.133} & \cellcolor[HTML]{D3D3D3}\underline{0.857}\scriptsize{${\pm}$ 0.008} & \cellcolor[HTML]{D3D3D3}0.518\scriptsize{${\pm}$ 0.026} & \cellcolor[HTML]{D3D3D3}0.538\scriptsize{${\pm}$ 0.098} & \cellcolor[HTML]{D3D3D3}0.598\scriptsize{${\pm}$ 0.004} \\
 \cmidrule{2-3} \cmidrule{4-9}
& \multirow{4}{*}{GraphSAGE}
& Random 20\% &      0.738\scriptsize{${\pm}$ 0.059} & 0.132\scriptsize{${\pm}$ 0.036} & 0.731\scriptsize{${\pm}$ 0.027} &0.459\scriptsize{${\pm}$ 0.057} & 0.472\scriptsize{${\pm}$ 0.016} & 0.602\scriptsize{${\pm}$ 0.006}\\
& & LAVA 20\%&      
0.835\scriptsize{${\pm}$ 0.005} &  0.570\scriptsize{${\pm}$ 0.292} & 0.827\scriptsize{${\pm}$ 0.015} & 0.514\scriptsize{${\pm}$ 0.132} & 0.491\scriptsize{${\pm}$ 0.095} & 0.537\scriptsize{${\pm}$ 0.067} \\
\cmidrule{3-9}
& &  \cellcolor[HTML]{D3D3D3}\methodname\ 20\%&  
 \cellcolor[HTML]{D3D3D3}\underline{0.855}\scriptsize{${\pm}$ 0.005} &   \cellcolor[HTML]{D3D3D3}0.580\scriptsize{${\pm}$ 0.281} & \cellcolor[HTML]{D3D3D3}0.842\scriptsize{${\pm}$ 0.007} & \cellcolor[HTML]{D3D3D3}0.536\scriptsize{${\pm}$ 0.062} & \cellcolor[HTML]{D3D3D3}0.533\scriptsize{${\pm}$ 0.037} & \cellcolor[HTML]{D3D3D3}0.541\scriptsize{${\pm}$ 0.014} \\
\midrule
\bottomrule
\end{tabular}
}
\vspace{0.2em}
\caption{Performance comparison across GDA and vanilla methods for \textit{graph density} shift. We use \textbf{bold}/\underline{underline} to indicate the 1st/2nd best results. \methodname\ can consistently achieve top-2 performance across all datasets. 
}
\label{table:gda-vs-graph-select}
\end{table}

\subsection{\methodname\ as a Model-Free GDA Enhancer}
\label{ssec:as-gda-enhancer}
In order to answer (\textbf{RQ3}), we combine
\methodname\ with off-the-shelf GDA methods to study whether fewer but better training data can lead to even stronger adaptation performance.

\begin{table*}[t]
\centering
\resizebox{\textwidth}{!}{%
\begin{tabular}{@{\extracolsep{\fill}}cc|cccc|cccc}
\toprule
\midrule
\multirow{2.4}{*}{\textbf{Dataset}}&   \multirow{1}{*}{\textbf{GDA Method} $\rightarrow$\!\!\!\!\!\!} & \multicolumn{4}{c|}{\makecell{\textbf{AdaGCN}}}  & \multicolumn{4}{c}{\makecell{\textbf{GRADE}}} \\
\cmidrule{2-10} 
&\textbf{Selection Method} $\downarrow$\!\!& $\tau=10\%$ & $\tau=20\%$ & $\tau=50\%$ & Full & $\tau=10\%$ & $\tau=20\%$ & $\tau=50\%$ & Full \\
\midrule
\multirow{3}{*}{\textsc{IMDB-BINARY}}
& Random &       
\underline{0.763}\scriptsize{${\pm}$ 0.040} & \underline{0.773}\scriptsize{${\pm}$ 0.019} & \underline{0.798}\scriptsize{${\pm}$ 0.002} & 
\multirow{3}{*}{0.808\scriptsize{${\pm}$ 0.015}}
&    
\underline{0.683}\scriptsize{${\pm}$ 0.010} & \underline{0.792}\scriptsize{${\pm}$ 0.002} & \underline{0.780}\scriptsize{${\pm}$ 0.015} &

\multirow{3}{*}{0.822\scriptsize{${\pm}$ 0.012}}\\

& LAVA  &       
0.623\scriptsize{${\pm}$ 0.005} & 0.617\scriptsize{${\pm}$ 0.005} & 0.617\scriptsize{${\pm}$ 0.005} & 
 
&    
0.620\scriptsize{${\pm}$ 0.073} & 0.627\scriptsize{${\pm}$ 0.009} & 0.680\scriptsize{${\pm}$ 0.047} & 

\\

\cmidrule{2-5} \cmidrule{7-9}
&\cellcolor[HTML]{D3D3D3}\methodname   
&\cellcolor[HTML]{D3D3D3}\textbf{0.810}\scriptsize{${\pm}$ 0.032} 
&\cellcolor[HTML]{D3D3D3}\textbf{0.817}\scriptsize{${\pm}$ 0.024}
&\cellcolor[HTML]{D3D3D3}\textbf{0.822}\scriptsize{${\pm}$ 0.017} & 

&\cellcolor[HTML]{D3D3D3}\textbf{0.782}\scriptsize{${\pm}$ 0.009} 
&\cellcolor[HTML]{D3D3D3}\textbf{0.832}\scriptsize{${\pm}$ 0.013} 
&\cellcolor[HTML]{D3D3D3}\textbf{0.848}\scriptsize{${\pm}$ 0.009} & 

\\
\midrule
\multirow{3}{*}{\textsc{IMDB-MULTI}}
& Random &       
0.100\scriptsize{${\pm}$ 0.000} & 0.168\scriptsize{${\pm}$ 0.072} & 0.116\scriptsize{${\pm}$ 0.048} &

\multirow{3}{*}{0.073\scriptsize{${\pm}$ 0.000}}
&    
0.106\scriptsize{${\pm}$ 0.055} & 0.112\scriptsize{${\pm}$ 0.050} & 0.149\scriptsize{${\pm}$ 0.049} & 

\multirow{3}{*}{0.123\scriptsize{${\pm}$ 0.061}}\\

& LAVA  &       
\underline{0.191}\scriptsize{${\pm}$ 0.007} & \underline{0.183}\scriptsize{${\pm}$ 0.000} & \underline{0.184}\scriptsize{${\pm}$ 0.002} & 

&    
\textbf{0.183}\scriptsize{${\pm}$ 0.000} & \underline{0.189}\scriptsize{${\pm}$ 0.008} & \textbf{0.186}\scriptsize{${\pm}$ 0.003} &

   \\

\cmidrule{2-5} \cmidrule{7-9}
&\cellcolor[HTML]{D3D3D3}\methodname   
&\cellcolor[HTML]{D3D3D3}\textbf{0.333}\scriptsize{${\pm}$ 0.229} 
&\cellcolor[HTML]{D3D3D3}\textbf{0.373}\scriptsize{${\pm}$ 0.285} 
&\cellcolor[HTML]{D3D3D3}\textbf{0.391}\scriptsize{${\pm}$ 0.294} & 

&\cellcolor[HTML]{D3D3D3}\underline{0.131}\scriptsize{${\pm}$ 0.074} 
&\cellcolor[HTML]{D3D3D3}\textbf{0.386}\scriptsize{${\pm}$ 0.286} 
&\cellcolor[HTML]{D3D3D3}\underline{0.173}\scriptsize{${\pm}$ 0.100} & 
\\
\midrule
\multirow{3}{*}{\textsc{MSRC}\_21}
& Random &       
0.208\scriptsize{${\pm}$ 0.027} & 0.374\scriptsize{${\pm}$ 0.011} & 0.307\scriptsize{${\pm}$ 0.087} &

\multirow{3}{*}{0.319\scriptsize{${\pm}$ 0.032}}
&    
0.512\scriptsize{${\pm}$ 0.041} & 0.626\scriptsize{${\pm}$ 0.055} & 0.708\scriptsize{${\pm}$ 0.023} &

\multirow{3}{*}{0.804\scriptsize{${\pm}$ 0.011}}\\

& LAVA  &       
\underline{0.398}\scriptsize{${\pm}$ 0.004} & \textbf{0.456}\scriptsize{${\pm}$ 0.012} & \underline{0.480}\scriptsize{${\pm}$ 0.061} & 

&    
0.608\scriptsize{${\pm}$ 0.018} & 0.743\scriptsize{${\pm}$ 0.021} & 0.860\scriptsize{${\pm}$ 0.014} &

   \\

\cmidrule{2-5} \cmidrule{7-9}
&\cellcolor[HTML]{D3D3D3}\methodname   
&\cellcolor[HTML]{D3D3D3}\textbf{0.415}\scriptsize{${\pm}$ 0.112} 
&\cellcolor[HTML]{D3D3D3}\underline{0.406}\scriptsize{${\pm}$ 0.043} 
&\cellcolor[HTML]{D3D3D3}\textbf{0.532}\scriptsize{${\pm}$ 0.039} & 

&\cellcolor[HTML]{D3D3D3}\textbf{0.664}\scriptsize{${\pm}$ 0.021} 
&\cellcolor[HTML]{D3D3D3}\textbf{0.778}\scriptsize{${\pm}$ 0.015} 
&\cellcolor[HTML]{D3D3D3}\textbf{0.865}\scriptsize{${\pm}$ 0.027} & 

\\
\midrule
\multirow{3}{*}{\texttt{ogbg-molbace}}
& Random &       
0.436\scriptsize{${\pm}$ 0.021} & 0.485\scriptsize{${\pm}$ 0.038} & 0.565\scriptsize{${\pm}$ 0.085} &

\multirow{3}{*}{0.607\scriptsize{${\pm}$ 0.068}}
&    
0.538\scriptsize{${\pm}$ 0.023} & 0.554\scriptsize{${\pm}$ 0.025} & 0.611\scriptsize{${\pm}$ 0.015} &

\multirow{3}{*}{0.683\scriptsize{${\pm}$ 0.016}}\\

& LAVA  &       
\underline{0.574}\scriptsize{${\pm}$ 0.017} & 
\underline{0.589}\scriptsize{${\pm}$ 0.074}&
\textbf{0.607}\scriptsize{${\pm}$ 0.071}& 

&    
\underline{0.557}\scriptsize{${\pm}$ 0.055}& 
\textbf{0.653}\scriptsize{${\pm}$ 0.054} &
\underline{0.625}\scriptsize{${\pm}$ 0.015} &

   \\

\cmidrule{2-5} \cmidrule{7-9}
&\cellcolor[HTML]{D3D3D3}\methodname   
&\cellcolor[HTML]{D3D3D3}\textbf{0.598}\scriptsize{${\pm}$ 0.066}
&\cellcolor[HTML]{D3D3D3}\textbf{0.614}\scriptsize{${\pm}$ 0.043} 
&\cellcolor[HTML]{D3D3D3}\underline{0.572}\scriptsize{${\pm}$ 0.047}  & 

&\cellcolor[HTML]{D3D3D3}\textbf{0.599}\scriptsize{${\pm}$ 0.044} 
&\cellcolor[HTML]{D3D3D3}\underline{0.636}\scriptsize{${\pm}$ 0.035}
&\cellcolor[HTML]{D3D3D3}\textbf{0.634}\scriptsize{${\pm}$ 0.006}
& 

\\

\midrule
\multirow{3}{*}{\texttt{ogbg-molbbbp}}
& Random &       
0.494\scriptsize{${\pm}$ 0.014} & 0.469\scriptsize{${\pm}$ 0.031} & 0.527\scriptsize{${\pm}$ 0.035} &

\multirow{3}{*}{0.778\scriptsize{${\pm}$ 0.002}}
&    
0.511\scriptsize{${\pm}$ 0.032} & 0.433\scriptsize{${\pm}$ 0.001} & \underline{0.495}\scriptsize{${\pm}$ 0.041} &

\multirow{3}{*}{0.489\scriptsize{${\pm}$ 0.005}}\\

& LAVA  &       
0.583\scriptsize{${\pm}$ 0.075}& 
0.556\scriptsize{${\pm}$ 0.015} & 
\textbf{0.561}\scriptsize{${\pm}$ 0.040} & 

&    
0.549\scriptsize{${\pm}$ 0.013} & 
0\textbf{.579}\scriptsize{${\pm}$ 0.041} & 
\textbf{0.543}\scriptsize{${\pm}$ 0.013} &

   \\

\cmidrule{2-5} \cmidrule{7-9}
&\cellcolor[HTML]{D3D3D3}\methodname   
&\cellcolor[HTML]{D3D3D3}\textbf{0.593}\scriptsize{${\pm}$ 0.038}
&\cellcolor[HTML]{D3D3D3}\textbf{0.596}\scriptsize{${\pm}$ 0.022}
&\cellcolor[HTML]{D3D3D3}\underline{0.546}\scriptsize{${\pm}$ 0.026} & 

&\cellcolor[HTML]{D3D3D3}\textbf{0.582}\scriptsize{${\pm}$ 0.077}
&\cellcolor[HTML]{D3D3D3}\underline{0.503}\scriptsize{${\pm}$ 0.012}
&\cellcolor[HTML]{D3D3D3}0.490\scriptsize{${\pm}$ 0.006}& 

\\

\midrule
\multirow{3}{*}{\texttt{ogbg-molhiv}}
& Random &       
0.407\scriptsize{${\pm}$ 0.022} & \underline{0.429}\scriptsize{${\pm}$ 0.032} & 0.417\scriptsize{${\pm}$ 0.013} &

\multirow{3}{*}{0.428\scriptsize{${\pm}$ 0.011}}
&    
\underline{0.581}\scriptsize{${\pm}$ 0.008} & 0.544\scriptsize{${\pm}$ 0.001} & \underline{0.581}\scriptsize{${\pm}$ 0.009} &

\multirow{3}{*}{0.564\scriptsize{${\pm}$ 0.005}}\\

& LAVA  &       
\underline{0.453}\scriptsize{${\pm}$ 0.016} & 
0.428\scriptsize{${\pm}$ 0.013} & 
\underline{0.440}\scriptsize{${\pm}$ 0.003} & 

&    
0.566\scriptsize{${\pm}$ 0.011} & 
\underline{0.571}\scriptsize{${\pm}$ 0.005} & 
0.572\scriptsize{${\pm}$ 0.019} &

   \\

\cmidrule{2-5} \cmidrule{7-9}
&\cellcolor[HTML]{D3D3D3}\methodname   
&\cellcolor[HTML]{D3D3D3}\textbf{0.463}\scriptsize{${\pm}$ 0.041} 
&\cellcolor[HTML]{D3D3D3}\textbf{0.473}\scriptsize{${\pm}$ 0.021}
&\cellcolor[HTML]{D3D3D3}\textbf{0.447}\scriptsize{${\pm}$ 0.038} & 

&\cellcolor[HTML]{D3D3D3}\textbf{0.584}\scriptsize{${\pm}$ 0.012}
&\cellcolor[HTML]{D3D3D3}\textbf{0.589}\scriptsize{${\pm}$ 0.003}
&\cellcolor[HTML]{D3D3D3}\textbf{0.586}\scriptsize{${\pm}$ 0.003} & 

\\

\midrule
\midrule
\multirow{2.4}{*}{\textbf{Dataset}}&   \multirow{1}{*}{\textbf{GDA Method} $\rightarrow$\!\!\!\!\!\!} & \multicolumn{4}{c|}{\makecell{\textbf{ASN}}}  & \multicolumn{4}{c}{\makecell{\textbf{UDAGCN}}} \\
\cmidrule{2-10} 
&\textbf{Selection Method} $\downarrow$\!\!& $\tau=10\%$ & $\tau=20\%$ & $\tau=50\%$ & Full & $\tau=10\%$ & $\tau=20\%$ & $\tau=50\%$ & Full \\
\midrule
\multirow{3}{*}{\textsc{IMDB-BINARY}}
& Random &       
0.660\scriptsize{${\pm}$ 0.043} & \underline{0.707}\scriptsize{${\pm}$ 0.017} & \underline{0.678}\scriptsize{${\pm}$ 0.031} &

\multirow{3}{*}{0.782\scriptsize{${\pm}$ 0.030}}
&    
0.620\scriptsize{${\pm}$ 0.041} & \underline{0.763}\scriptsize{${\pm}$ 0.008} & \underline{0.823}\scriptsize{${\pm}$ 0.005} &

\multirow{3}{*}{0.807\scriptsize{${\pm}$ 0.013}}\\

& LAVA  &       
\underline{0.733}\scriptsize{${\pm}$ 0.081} & 0.620\scriptsize{${\pm}$ 0.000} & 0.620\scriptsize{${\pm}$ 0.000} &

&    
\underline{0.620}\scriptsize{${\pm}$ 0.000} & 0.643\scriptsize{${\pm}$ 0.033} & 0.620\scriptsize{${\pm}$ 0.000} &

   \\

\cmidrule{2-5} \cmidrule{7-9}
&\cellcolor[HTML]{D3D3D3}\methodname   
&\cellcolor[HTML]{D3D3D3}\textbf{0.748}\scriptsize{${\pm}$ 0.037} 
&\cellcolor[HTML]{D3D3D3}\textbf{0.818}\scriptsize{${\pm}$ 0.016} 
&\cellcolor[HTML]{D3D3D3}\textbf{0.855}\scriptsize{${\pm}$ 0.011} & 

&\cellcolor[HTML]{D3D3D3}\textbf{0.770}\scriptsize{${\pm}$ 0.023} 
&\cellcolor[HTML]{D3D3D3}\textbf{0.847}\scriptsize{${\pm}$ 0.012} 
&\cellcolor[HTML]{D3D3D3}\textbf{0.852}\scriptsize{${\pm}$ 0.005} & 
\\
\midrule
\multirow{3}{*}{\textsc{IMDB-MULTI}}
& Random &       
0.126\scriptsize{${\pm}$ 0.013} & 0.101\scriptsize{${\pm}$ 0.058} & 0.156\scriptsize{${\pm}$ 0.039} & 

\multirow{3}{*}{0.119\scriptsize{${\pm}$ 0.047} }
&    
\underline{0.150}\scriptsize{${\pm}$ 0.024} & 0.101\scriptsize{${\pm}$ 0.045} & 0.076\scriptsize{${\pm}$ 0.003} & 

\multirow{3}{*}{0.114\scriptsize{${\pm}$ 0.049}}\\

& LAVA  &       
\underline{0.183}\scriptsize{${\pm}$ 0.000} & \underline{0.183}\scriptsize{${\pm}$ 0.000} & \underline{0.190}\scriptsize{${\pm}$ 0.009} & 

&    
\textbf{0.183}\scriptsize{${\pm}$ 0.000} & \underline{0.183}\scriptsize{${\pm}$ 0.000} & \underline{0.182}\scriptsize{${\pm}$ 0.002} & 
\\

\cmidrule{2-5} \cmidrule{7-9}
&\cellcolor[HTML]{D3D3D3}\methodname   
&\cellcolor[HTML]{D3D3D3}\textbf{0.292}\scriptsize{${\pm}$ 0.352} 
&\cellcolor[HTML]{D3D3D3}\textbf{0.588}\scriptsize{${\pm}$ 0.286} 
&\cellcolor[HTML]{D3D3D3}\textbf{0.381}\scriptsize{${\pm}$ 0.301} & 

&\cellcolor[HTML]{D3D3D3}0.093\scriptsize{${\pm}$ 0.066} 
&\cellcolor[HTML]{D3D3D3}\textbf{0.554}\scriptsize{${\pm}$ 0.263} 
&\cellcolor[HTML]{D3D3D3}\textbf{0.339}\scriptsize{${\pm}$ 0.337} & 
\\
\midrule
\multirow{3}{*}{\textsc{MSRC\_21}}
& Random &       
0.421\scriptsize{${\pm}$ 0.026} & 0.673\scriptsize{${\pm}$ 0.011} & 0.661\scriptsize{${\pm}$ 0.032} & 

\multirow{3}{*}{0.833\scriptsize{${\pm}$ 0.033} }
&    
0.287\scriptsize{${\pm}$ 0.018} & 0.178\scriptsize{${\pm}$ 0.039} & 0.287\scriptsize{${\pm}$ 0.075} & 

\multirow{3}{*}{0.351\scriptsize{${\pm}$ 0.019}}\\

& LAVA  &       
\underline{0.635}\scriptsize{${\pm}$ 0.015} & \underline{0.746}\scriptsize{${\pm}$ 0.019} & \underline{0.868}\scriptsize{${\pm}$ 0.014} & 

&    
\textbf{0.453}\scriptsize{${\pm}$ 0.035} & \underline{0.447}\scriptsize{${\pm}$ 0.052} & \underline{0.623}\scriptsize{${\pm}$ 0.059} & 

\\

\cmidrule{2-5} \cmidrule{7-9}
&\cellcolor[HTML]{D3D3D3}\methodname   
&\cellcolor[HTML]{D3D3D3}\textbf{0.687}\scriptsize{${\pm}$ 0.048} 
&\cellcolor[HTML]{D3D3D3}\textbf{0.804}\scriptsize{${\pm}$ 0.021}
&\cellcolor[HTML]{D3D3D3}\textbf{0.904}\scriptsize{${\pm}$ 0.012} & 

&\cellcolor[HTML]{D3D3D3}\underline{0.444}\scriptsize{${\pm}$ 0.048} 
&\cellcolor[HTML]{D3D3D3}\textbf{0.453}\scriptsize{${\pm}$ 0.011} 
&\cellcolor[HTML]{D3D3D3}\textbf{0.664}\scriptsize{${\pm}$ 0.029} & 

\\

\midrule
\multirow{3}{*}{\texttt{ogbg-molbace}}
& Random &       
0.539\scriptsize{${\pm}$ 0.074} & \textbf{0.637}\scriptsize{${\pm}$ 0.009} & 0.507\scriptsize{${\pm}$ 0.061} &

\multirow{3}{*}{0.580\scriptsize{${\pm}$ 0.065}}
&    
0.478\scriptsize{${\pm}$ 0.037} & \textbf{0.581}\scriptsize{${\pm}$ 0.018} & \underline{0.513}\scriptsize{${\pm}$ 0.028} &

\multirow{3}{*}{0.541\scriptsize{${\pm}$ 0.034}}\\

& LAVA  &       
\underline{0.578}\scriptsize{${\pm}$ 0.036} & 
\underline{0.603}\scriptsize{${\pm}$ 0.009} & 
\underline{0.646}\scriptsize{${\pm}$ 0.050} & 

&    
\textbf{0.562}\scriptsize{${\pm}$ 0.039} & 
\underline{0.578}\scriptsize{${\pm}$ 0.015} & 
0.513\scriptsize{${\pm}$ 0.077} &

   \\

\cmidrule{2-5} \cmidrule{7-9}
&\cellcolor[HTML]{D3D3D3}\methodname   
&\cellcolor[HTML]{D3D3D3}\textbf{0.636}\scriptsize{${\pm}$ 0.022} 
&\cellcolor[HTML]{D3D3D3}0.596\scriptsize{${\pm}$ 0.053}
&\cellcolor[HTML]{D3D3D3}\textbf{0.651}\scriptsize{${\pm}$ 0.036} & 

&\cellcolor[HTML]{D3D3D3}\underline{0.533}\scriptsize{${\pm}$ 0.041}
&\cellcolor[HTML]{D3D3D3}0.565\scriptsize{${\pm}$ 0.039}
&\cellcolor[HTML]{D3D3D3}\textbf{0.531}\scriptsize{${\pm}$ 0.051} & 
\\

\midrule
\multirow{3}{*}{\texttt{ogbg-molbbbp}}
& Random &       
0.504\scriptsize{${\pm}$ 0.015} & 0.533\scriptsize{${\pm}$ 0.025} & 0.497\scriptsize{${\pm}$ 0.032} &

\multirow{3}{*}{0.476\scriptsize{${\pm}$ 0.027}}
&    
0.538\scriptsize{${\pm}$ 0.026} & 0.529\scriptsize{${\pm}$ 0.040} & 0.530\scriptsize{${\pm}$ 0.051} & 

\multirow{3}{*}{0.522\scriptsize{${\pm}$ 0.015}}\\

& LAVA  &       
 \underline{0.567}\scriptsize{${\pm}$ 0.040} & 
\textbf{0.616}\scriptsize{${\pm}$ 0.072} & 
\textbf{0.573}\scriptsize{${\pm}$ 0.035} & 

&    
 \underline{0.579}\scriptsize{${\pm}$ 0.031} & 
 \underline{0.547}\scriptsize{${\pm}$ 0.021} & 
 \underline{0.558}\scriptsize{${\pm}$ 0.021}&

   \\

\cmidrule{2-5} \cmidrule{7-9}
&\cellcolor[HTML]{D3D3D3}\methodname   
&\cellcolor[HTML]{D3D3D3}\textbf{0.573}\scriptsize{${\pm}$ 0.088}
&\cellcolor[HTML]{D3D3D3} \underline{0.596}\scriptsize{${\pm}$ 0.100}
&\cellcolor[HTML]{D3D3D3} \underline{0.535}\scriptsize{${\pm}$ 0.027}& 

&\cellcolor[HTML]{D3D3D3}\textbf{0.591}\scriptsize{${\pm}$ 0.040}
&\cellcolor[HTML]{D3D3D3}\textbf{0.575}\scriptsize{${\pm}$ 0.030}
&\cellcolor[HTML]{D3D3D3}\textbf{0.570}\scriptsize{${\pm}$ 0.009} & 

\\

\midrule
\multirow{3}{*}{\texttt{ogbg-molhiv}}
& Random &       
0.436\scriptsize{${\pm}$ 0.038} & 0.483\scriptsize{${\pm}$ 0.044} & 0.455\scriptsize{${\pm}$ 0.059} & 

\multirow{3}{*}{0.516\scriptsize{${\pm}$ 0.021}}
&    
0.453\scriptsize{${\pm}$ 0.015} & 0.406\scriptsize{${\pm}$ 0.015} & \textbf{0.464}\scriptsize{${\pm}$ 0.024} &

\multirow{3}{*}{0.451\scriptsize{${\pm}$ 0.030}}\\

& LAVA  &       
\underline{0.511}\scriptsize{${\pm}$ 0.018} & 
\textbf{0.540}\scriptsize{${\pm}$ 0.010} & 
\underline{0.482}\scriptsize{${\pm}$ 0.023} & 

&    
0.458\scriptsize{${\pm}$ 0.029} & 
0.427\scriptsize{${\pm}$ 0.007} & 
\underline{0.445}\scriptsize{${\pm}$ 0.018} &

   \\

\cmidrule{2-5} \cmidrule{7-9}
&\cellcolor[HTML]{D3D3D3}\methodname   
&\cellcolor[HTML]{D3D3D3}\textbf{0.527}\scriptsize{${\pm}$ 0.041}
&\cellcolor[HTML]{D3D3D3}\underline{0.491}\scriptsize{${\pm}$ 0.080} 
&\cellcolor[HTML]{D3D3D3}\textbf{0.491}\scriptsize{${\pm}$ 0.050} & 

&\cellcolor[HTML]{D3D3D3}\underline{0.453}\scriptsize{${\pm}$ 0.011}
&\cellcolor[HTML]{D3D3D3}\textbf{0.445}\scriptsize{${\pm}$ 0.018}
&\cellcolor[HTML]{D3D3D3}0.444\scriptsize{${\pm}$ 0.020}& 

\\
\midrule
\bottomrule
\end{tabular}
}
\caption{Performance comparison across combinations of GDA methods and data selection methods for \textit{graph density} shift. We use \textbf{bold}/\underline{underline} to indicate the 1st/2nd best results. \methodname\ achieves the best performance in most settings.} 
\label{table:graph-enhancer}
\end{table*}

\paragraph{Experiment Setup.}  Coupled with $10\%, 20\%, 50\%$ data selected by each model-free method (i.e., random, LAVA and \methodname),  four GDA baselines (considered in Section~\ref{ssec:as-gda-model}) are directly run on the shrunk training dataset with the same validation dataset under \textit{graph density} shift. For results on \textit{graph size} shift, we refer the readers to Appendix~\ref{appendix:exp-table-3-size}. 
\vspace{-1em}
\paragraph{Results.}
As shown in Table~\ref{table:graph-enhancer}, for most of the settings, \methodname\ selects data that is the most beneficial to adapting to the target set. Notably, across many settings, only $10\%$ or $20\%$ \methodname-selected data can outperform naively applying GDA methods on the full training data. This suggests that
\methodname\ can indeed improve \textit{data-efficiency} by promoting the quality of training data.  Furthermore, by effective data selection performed by \methodname, the difficulty of addressing the domain shift can be lowered significantly and thus result in better adaptation performance.

\subsection{Further Discussion}
\label{ssec:final-discussion}

\paragraph{LAVA vs \methodname.}
The modified version of LAVA utilizes LinearFGW to compare graphs and selects the training data with the smallest gradient value w.r.t. GDD. In contrast, \methodname\ aims at finding optimal training data that directly minimizes GDD, which has a complete different motivation and enjoys a theoretical justification. Empirically, we also observe the superiority of \methodname\ in most cases. Occasionally, LAVA achieve marginally better results than \methodname, which 
may be attributed to the approximation error of LinearFGW and thus over-optimization on GDD. 
\vspace{-0.8em}
\paragraph{Random vs \methodname.} From Table~\ref{table:gda-vs-graph-select}, we found \methodname\ occasionally underperforms random selection with GraphSAGE, possibly because the neighbor sampling strategy introduces noise into global representations, weakening the supervision signal even for well-chosen training graphs.
\vspace{-0.8em}

\paragraph{Selection Ratio vs GNN Performance.}
From Tables~\ref{table:graph-select} \&~\ref{table:graph-enhancer}, we find that a larger selection ratio may not always guarantee a better performance for selection methods including LAVA and \methodname. 
This is because, under severe distribution shift between domains, a larger portion of training data may actually contain patterns that are irrelevant or even harmful to the target domain. 

\paragraph{Effect of label signal $c$.} While we treat $c$ as a tunable hyper-parameter that can be optimized for different settings (i.e. various combinations of dataset, shift types and selection ratio), we empirically find that searching within $\{0,5\}$ can already lead to good performance throughout all experiments in this paper. We also provide additional experiments under a label-free setting (i.e. $c$ is forced to be equal to $0$) in Appendix ~\ref{appendix:exp-label-free}
.






\section{Related Work}
\label{sec:related-work}

\paragraph{Data Selection.}
Recent advancements in data selection have focused on optimizing  data utilization, mainly on text and vision data to facilitate efficient training for large language/image models~\citep{ killamsetty2021glister, killamsetty2021grad, mirzasoleiman2020coresets, bai2024multi, xie2024doremi, fan2023doge, ye2024data, liu2024regmix}. 
For general model-free data selection, LAVA~\citep{just2023lava} offers a learning-agnostic data valuation method
by seeking the data point that contributes the most to the distance between training and validation datasets. However, the paper studies predominantly on raw image datasets such as CIFAR-10~\citep{krizhevsky2009learning} and MNIST~\citep{lecun1998gradient}, where they already have high-quality pixel-value representations for computation. Unlike text or images, graphs lack a natural and uniform representation, making the development of model-free data selection more intricate. 
Tailored for graph-level tasks, graph dataset distillation is also a related topic. For example, \citet{jin2022condensing} and \citet{xu2023kernel} both propose to formulate a bi-level optimization problem to train a graph-level classifier. \citet{jain2025subsampling}, on the other hand, utilizes Tree Mover Distance~\cite{chuang2022tree} to conduct graph-level sub-sampling with theoretical guarantees.
However, these non-model-free methods might not be able to combat severe downstream distribution changes. 

\paragraph{Graph Domain Adaptation (GDA).}
For grpah classification, GDA focuses on transferring knowledge from a source domain with labeled graph to a target domain. Model-centric GDA methods relying on GNNs have been pivotal in this area. For instance, ~\citet{wu2020unsupervised} introduce
UDAGCN, which integrates domain adaptation with GNNs to align feature distributions between domains. AdaGCN~\cite{dai2022graph} addresses cross-network node classification leveraging adversarial domain adaptation to transfer label information between domains. \citet{wu2023non} explore cross-network transfer learning through Weisfeiler-Lehman graph isomorphism test and introduce the GRADE algorithm that minimizes distribution shift to perform adaptation.
ASN~\cite{zhang2021adversarial} explicitly separates domain-private and domain-shared information while capturing network consistency. More recently, \citet{liu2024rethinking} argue that excessive message passing exacerbates domain bias and propose A2GNN as a refined propagation scheme that disentangles transferable and domain-specific information; \citet{chen2025smoothness} highlight the critical role of graph smoothness, presenting TDSS that enforces cross-domain consistency through spectral alignment. Meanwhile, \citet{liu2024pairwise} introduce a pairwise alignment strategy that leverages node-level relational matching to enhance inter-domain correspondence. However, these approaches mostly focus on designing architectures or training procedures and often rely heavily on the assumption that provided data in the training set is already optimal for the task, which is often invalid in real-world scenarios.

\vspace{-0.5em}
\section{Conclusion}
\label{sec:conclusion}

We introduce \methodname, a model-free framework for graph classification that addresses distribution shift by solving a Graph Dataset Distance (GDD) minimization problem. By selecting the most beneficial data from the source domain, it offers a novel approach to improving GNN performance without relying on specific model predictions or training procedures. We also establish theoretical analysis on Fused Gromov–Wasserstein distance as a meaningful upper bound on GNN representation differences, and further justifies GDD as an optimization target to improve generalization performance.
Across multiple real-world datasets and shift types, \methodname\ consistently outperforms existing selection methods and GDA methods with better data efficiency. For future directions, we consider graph continual learning and multi-source domain adaptation.

\section*{Acknowledgement}
\thanks{This work is supported by NSF (2416070) and AFOSR (FA9550-24-1-0002). The content of the information in this document does not necessarily reflect the position or the policy of the Government, and no official endorsement should be inferred.  The U.S. Government is authorized to reproduce and distribute reprints for Government purposes notwithstanding any copyright notation here on.}




\newpage
\bibliographystyle{plainnat}
\bibliography{Reference}


\newpage
\appendix

\clearpage
\section*{NeurIPS Paper Checklist}

\begin{enumerate}

\item {\bf Claims}
    \item[] Question: Do the main claims made in the abstract and introduction accurately reflect the paper's contributions and scope?
    \item[] Answer: \answerYes{} 
    \item[] Justification: Our main contributions are summarized in five bullet points in Section~\ref{sec:intro} and we point the readers to corresponding sections to see justification (including theorems, methodologies and empirical results).
    \item[] Guidelines:
    \begin{itemize}
        \item The answer NA means that the abstract and introduction do not include the claims made in the paper.
        \item The abstract and/or introduction should clearly state the claims made, including the contributions made in the paper and important assumptions and limitations. A No or NA answer to this question will not be perceived well by the reviewers. 
        \item The claims made should match theoretical and experimental results, and reflect how much the results can be expected to generalize to other settings. 
        \item It is fine to include aspirational goals as motivation as long as it is clear that these goals are not attained by the paper. 
    \end{itemize}

\item {\bf Limitations}
    \item[] Question: Does the paper discuss the limitations of the work performed by the authors?
    \item[] Answer: \answerYes{} 
    \item[] Justification: We discuss potential limitations of our presented work in Appendix~\ref{appendix:limit}.
    \item[] Guidelines:
    \begin{itemize}
        \item The answer NA means that the paper has no limitation while the answer No means that the paper has limitations, but those are not discussed in the paper. 
        \item The authors are encouraged to create a separate "Limitations" section in their paper.
        \item The paper should point out any strong assumptions and how robust the results are to violations of these assumptions (e.g., independence assumptions, noiseless settings, model well-specification, asymptotic approximations only holding locally). The authors should reflect on how these assumptions might be violated in practice and what the implications would be.
        \item The authors should reflect on the scope of the claims made, e.g., if the approach was only tested on a few datasets or with a few runs. In general, empirical results often depend on implicit assumptions, which should be articulated.
        \item The authors should reflect on the factors that influence the performance of the approach. For example, a facial recognition algorithm may perform poorly when image resolution is low or images are taken in low lighting. Or a speech-to-text system might not be used reliably to provide closed captions for online lectures because it fails to handle technical jargon.
        \item The authors should discuss the computational efficiency of the proposed algorithms and how they scale with dataset size.
        \item If applicable, the authors should discuss possible limitations of their approach to address problems of privacy and fairness.
        \item While the authors might fear that complete honesty about limitations might be used by reviewers as grounds for rejection, a worse outcome might be that reviewers discover limitations that aren't acknowledged in the paper. The authors should use their best judgment and recognize that individual actions in favor of transparency play an important role in developing norms that preserve the integrity of the community. Reviewers will be specifically instructed to not penalize honesty concerning limitations.
    \end{itemize}

\item {\bf Theory assumptions and proofs}
    \item[] Question: For each theoretical result, does the paper provide the full set of assumptions and a complete (and correct) proof?
    \item[] Answer: \answerYes{} 
    \item[] Justification: We provide all proofs and assumptions of our theorem results in Appendix~\ref{appendix:all-proofs}.
    \item[] Guidelines:
    \begin{itemize}
        \item The answer NA means that the paper does not include theoretical results. 
        \item All the theorems, formulas, and proofs in the paper should be numbered and cross-referenced.
        \item All assumptions should be clearly stated or referenced in the statement of any theorems.
        \item The proofs can either appear in the main paper or the supplemental material, but if they appear in the supplemental material, the authors are encouraged to provide a short proof sketch to provide intuition. 
        \item Inversely, any informal proof provided in the core of the paper should be complemented by formal proofs provided in appendix or supplemental material.
        \item Theorems and Lemmas that the proof relies upon should be properly referenced. 
    \end{itemize}

    \item {\bf Experimental result reproducibility}
    \item[] Question: Does the paper fully disclose all the information needed to reproduce the main experimental results of the paper to the extent that it affects the main claims and/or conclusions of the paper (regardless of whether the code and data are provided or not)?
    \item[] Answer: \answerYes{} 
    \item[] Justification: We provide all experimental details needed in Section~\ref{sec:exp}, Appendix~\ref{appendix:gnn-settings} and Appendix~\ref{appendix:gda-settings}.
    \item[] Guidelines:
    \begin{itemize}
        \item The answer NA means that the paper does not include experiments.
        \item If the paper includes experiments, a No answer to this question will not be perceived well by the reviewers: Making the paper reproducible is important, regardless of whether the code and data are provided or not.
        \item If the contribution is a dataset and/or model, the authors should describe the steps taken to make their results reproducible or verifiable. 
        \item Depending on the contribution, reproducibility can be accomplished in various ways. For example, if the contribution is a novel architecture, describing the architecture fully might suffice, or if the contribution is a specific model and empirical evaluation, it may be necessary to either make it possible for others to replicate the model with the same dataset, or provide access to the model. In general. releasing code and data is often one good way to accomplish this, but reproducibility can also be provided via detailed instructions for how to replicate the results, access to a hosted model (e.g., in the case of a large language model), releasing of a model checkpoint, or other means that are appropriate to the research performed.
        \item While NeurIPS does not require releasing code, the conference does require all submissions to provide some reasonable avenue for reproducibility, which may depend on the nature of the contribution. For example
        \begin{enumerate}
            \item If the contribution is primarily a new algorithm, the paper should make it clear how to reproduce that algorithm.
            \item If the contribution is primarily a new model architecture, the paper should describe the architecture clearly and fully.
            \item If the contribution is a new model (e.g., a large language model), then there should either be a way to access this model for reproducing the results or a way to reproduce the model (e.g., with an open-source dataset or instructions for how to construct the dataset).
            \item We recognize that reproducibility may be tricky in some cases, in which case authors are welcome to describe the particular way they provide for reproducibility. In the case of closed-source models, it may be that access to the model is limited in some way (e.g., to registered users), but it should be possible for other researchers to have some path to reproducing or verifying the results.
        \end{enumerate}
    \end{itemize}

\item {\bf Open access to data and code}
    \item[] Question: Does the paper provide open access to the data and code, with sufficient instructions to faithfully reproduce the main experimental results, as described in supplemental material?
    \item[] Answer: \answerYes{} 
    \item[] Justification: We will provide the code package during submission and make the code available upon acceptance.
    \item[] Guidelines:
    \begin{itemize}
        \item The answer NA means that paper does not include experiments requiring code.
        \item Please see the NeurIPS code and data submission guidelines (\url{https://nips.cc/public/guides/CodeSubmissionPolicy}) for more details.
        \item While we encourage the release of code and data, we understand that this might not be possible, so “No” is an acceptable answer. Papers cannot be rejected simply for not including code, unless this is central to the contribution (e.g., for a new open-source benchmark).
        \item The instructions should contain the exact command and environment needed to run to reproduce the results. See the NeurIPS code and data submission guidelines (\url{https://nips.cc/public/guides/CodeSubmissionPolicy}) for more details.
        \item The authors should provide instructions on data access and preparation, including how to access the raw data, preprocessed data, intermediate data, and generated data, etc.
        \item The authors should provide scripts to reproduce all experimental results for the new proposed method and baselines. If only a subset of experiments are reproducible, they should state which ones are omitted from the script and why.
        \item At submission time, to preserve anonymity, the authors should release anonymized versions (if applicable).
        \item Providing as much information as possible in supplemental material (appended to the paper) is recommended, but including URLs to data and code is permitted.
    \end{itemize}

\item {\bf Experimental setting/details}
    \item[] Question: Does the paper specify all the training and test details (e.g., data splits, hyperparameters, how they were chosen, type of optimizer, etc.) necessary to understand the results?
    \item[] Answer: \answerYes{} 
    \item[] Justification: We provide all experimental details needed in Section~\ref{sec:exp}, Appendix~\ref{appendix:gnn-settings} and Appendix~\ref{appendix:gda-settings}.
    \item[] Guidelines:
    \begin{itemize}
        \item The answer NA means that the paper does not include experiments.
        \item The experimental setting should be presented in the core of the paper to a level of detail that is necessary to appreciate the results and make sense of them.
        \item The full details can be provided either with the code, in appendix, or as supplemental material.
    \end{itemize}

\item {\bf Experiment statistical significance}
    \item[] Question: Does the paper report error bars suitably and correctly defined or other appropriate information about the statistical significance of the experiments?
    \item[] Answer: \answerYes{} 
    \item[] Justification: We include std in all of our main tables (see Section~\ref{sec:exp}).
    \item[] Guidelines:
    \begin{itemize}
        \item The answer NA means that the paper does not include experiments.
        \item The authors should answer "Yes" if the results are accompanied by error bars, confidence intervals, or statistical significance tests, at least for the experiments that support the main claims of the paper.
        \item The factors of variability that the error bars are capturing should be clearly stated (for example, train/test split, initialization, random drawing of some parameter, or overall run with given experimental conditions).
        \item The method for calculating the error bars should be explained (closed form formula, call to a library function, bootstrap, etc.)
        \item The assumptions made should be given (e.g., Normally distributed errors).
        \item It should be clear whether the error bar is the standard deviation or the standard error of the mean.
        \item It is OK to report 1-sigma error bars, but one should state it. The authors should preferably report a 2-sigma error bar than state that they have a 96\% CI, if the hypothesis of Normality of errors is not verified.
        \item For asymmetric distributions, the authors should be careful not to show in tables or figures symmetric error bars that would yield results that are out of range (e.g. negative error rates).
        \item If error bars are reported in tables or plots, The authors should explain in the text how they were calculated and reference the corresponding figures or tables in the text.
    \end{itemize}

\item {\bf Experiments compute resources}
    \item[] Question: For each experiment, does the paper provide sufficient information on the computer resources (type of compute workers, memory, time of execution) needed to reproduce the experiments?
    \item[] Answer: \answerYes{} 
    \item[] Justification: We specify the compute resources in Appendix~\ref{appendix:gnn-settings}.
    \item[] Guidelines:
    \begin{itemize}
        \item The answer NA means that the paper does not include experiments.
        \item The paper should indicate the type of compute workers CPU or GPU, internal cluster, or cloud provider, including relevant memory and storage.
        \item The paper should provide the amount of compute required for each of the individual experimental runs as well as estimate the total compute. 
        \item The paper should disclose whether the full research project required more compute than the experiments reported in the paper (e.g., preliminary or failed experiments that didn't make it into the paper). 
    \end{itemize}
    
\item {\bf Code of ethics}
    \item[] Question: Does the research conducted in the paper conform, in every respect, with the NeurIPS Code of Ethics \url{https://neurips.cc/public/EthicsGuidelines}?
    \item[] Answer: \answerYes{} 
    \item[] Justification: We confirm that this work is conducted with the NeurIPS Code of Ethics.
    \item[] Guidelines:
    \begin{itemize}
        \item The answer NA means that the authors have not reviewed the NeurIPS Code of Ethics.
        \item If the authors answer No, they should explain the special circumstances that require a deviation from the Code of Ethics.
        \item The authors should make sure to preserve anonymity (e.g., if there is a special consideration due to laws or regulations in their jurisdiction).
    \end{itemize}

\item {\bf Broader impacts}
    \item[] Question: Does the paper discuss both potential positive societal impacts and negative societal impacts of the work performed?
    \item[] Answer: \answerYes{} 
    \item[] Justification: We provide related discussion in Section~\ref{appendix:impact}.
    \item[] Guidelines:
    \begin{itemize}
        \item The answer NA means that there is no societal impact of the work performed.
        \item If the authors answer NA or No, they should explain why their work has no societal impact or why the paper does not address societal impact.
        \item Examples of negative societal impacts include potential malicious or unintended uses (e.g., disinformation, generating fake profiles, surveillance), fairness considerations (e.g., deployment of technologies that could make decisions that unfairly impact specific groups), privacy considerations, and security considerations.
        \item The conference expects that many papers will be foundational research and not tied to particular applications, let alone deployments. However, if there is a direct path to any negative applications, the authors should point it out. For example, it is legitimate to point out that an improvement in the quality of generative models could be used to generate deepfakes for disinformation. On the other hand, it is not needed to point out that a generic algorithm for optimizing neural networks could enable people to train models that generate Deepfakes faster.
        \item The authors should consider possible harms that could arise when the technology is being used as intended and functioning correctly, harms that could arise when the technology is being used as intended but gives incorrect results, and harms following from (intentional or unintentional) misuse of the technology.
        \item If there are negative societal impacts, the authors could also discuss possible mitigation strategies (e.g., gated release of models, providing defenses in addition to attacks, mechanisms for monitoring misuse, mechanisms to monitor how a system learns from feedback over time, improving the efficiency and accessibility of ML).
    \end{itemize}
    
\item {\bf Safeguards}
    \item[] Question: Does the paper describe safeguards that have been put in place for responsible release of data or models that have a high risk for misuse (e.g., pretrained language models, image generators, or scraped datasets)?
    \item[] Answer: \answerNA{} 
    \item[] Justification: We confirm that this work does not pose safety risks.
    \item[] Guidelines:
    \begin{itemize}
        \item The answer NA means that the paper poses no such risks.
        \item Released models that have a high risk for misuse or dual-use should be released with necessary safeguards to allow for controlled use of the model, for example by requiring that users adhere to usage guidelines or restrictions to access the model or implementing safety filters. 
        \item Datasets that have been scraped from the Internet could pose safety risks. The authors should describe how they avoided releasing unsafe images.
        \item We recognize that providing effective safeguards is challenging, and many papers do not require this, but we encourage authors to take this into account and make a best faith effort.
    \end{itemize}

\item {\bf Licenses for existing assets}
    \item[] Question: Are the creators or original owners of assets (e.g., code, data, models), used in the paper, properly credited and are the license and terms of use explicitly mentioned and properly respected?
    \item[] Answer: \answerYes{} 
    \item[] Justification: We provide the related details in Appendix~\ref{appendix:dataset-details}.
    \item[] Guidelines:
    \begin{itemize}
        \item The answer NA means that the paper does not use existing assets.
        \item The authors should cite the original paper that produced the code package or dataset.
        \item The authors should state which version of the asset is used and, if possible, include a URL.
        \item The name of the license (e.g., CC-BY 4.0) should be included for each asset.
        \item For scraped data from a particular source (e.g., website), the copyright and terms of service of that source should be provided.
        \item If assets are released, the license, copyright information, and terms of use in the package should be provided. For popular datasets, \url{paperswithcode.com/datasets} has curated licenses for some datasets. Their licensing guide can help determine the license of a dataset.
        \item For existing datasets that are re-packaged, both the original license and the license of the derived asset (if it has changed) should be provided.
        \item If this information is not available online, the authors are encouraged to reach out to the asset's creators.
    \end{itemize}

\item {\bf New assets}
    \item[] Question: Are new assets introduced in the paper well documented and is the documentation provided alongside the assets?
    \item[] Answer: \answerNo{} 
    \item[] Justification: We did not introduce new assets in this paper.
    \item[] Guidelines:
    \begin{itemize}
        \item The answer NA means that the paper does not release new assets.
        \item Researchers should communicate the details of the dataset/code/model as part of their submissions via structured templates. This includes details about training, license, limitations, etc. 
        \item The paper should discuss whether and how consent was obtained from people whose asset is used.
        \item At submission time, remember to anonymize your assets (if applicable). You can either create an anonymized URL or include an anonymized zip file.
    \end{itemize}

\item {\bf Crowdsourcing and research with human subjects}
    \item[] Question: For crowdsourcing experiments and research with human subjects, does the paper include the full text of instructions given to participants and screenshots, if applicable, as well as details about compensation (if any)? 
    \item[] Answer: \answerNA{} 
    \item[] Justification: This paper does not involve human subjects.
    \item[] Guidelines:
    \begin{itemize}
        \item The answer NA means that the paper does not involve crowdsourcing nor research with human subjects.
        \item Including this information in the supplemental material is fine, but if the main contribution of the paper involves human subjects, then as much detail as possible should be included in the main paper. 
        \item According to the NeurIPS Code of Ethics, workers involved in data collection, curation, or other labor should be paid at least the minimum wage in the country of the data collector. 
    \end{itemize}

\item {\bf Institutional review board (IRB) approvals or equivalent for research with human subjects}
    \item[] Question: Does the paper describe potential risks incurred by study participants, whether such risks were disclosed to the subjects, and whether Institutional Review Board (IRB) approvals (or an equivalent approval/review based on the requirements of your country or institution) were obtained?
    \item[] Answer: \answerNA{} 
    \item[] Justification: This paper does not involve human subjects.
    \item[] Guidelines:
    \begin{itemize}
        \item The answer NA means that the paper does not involve crowdsourcing nor research with human subjects.
        \item Depending on the country in which research is conducted, IRB approval (or equivalent) may be required for any human subjects research. If you obtained IRB approval, you should clearly state this in the paper. 
        \item We recognize that the procedures for this may vary significantly between institutions and locations, and we expect authors to adhere to the NeurIPS Code of Ethics and the guidelines for their institution. 
        \item For initial submissions, do not include any information that would break anonymity (if applicable), such as the institution conducting the review.
    \end{itemize}

\item {\bf Declaration of LLM usage}
    \item[] Question: Does the paper describe the usage of LLMs if it is an important, original, or non-standard component of the core methods in this research? Note that if the LLM is used only for writing, editing, or formatting purposes and does not impact the core methodology, scientific rigorousness, or originality of the research, declaration is not required.
    \item[] Answer: \answerNA{} 
    \item[] Justification: LLM is not an important part of the core methods in this paper.
    \item[] Guidelines:
    \begin{itemize}
        \item The answer NA means that the core method development in this research does not involve LLMs as any important, original, or non-standard components.
        \item Please refer to our LLM policy (\url{https://neurips.cc/Conferences/2025/LLM}) for what should or should not be described.
    \end{itemize}

\end{enumerate}

\newpage

\section*{\centering \textbf{\Large Appendix}}

The content of appendix is organized as follows:

\begin{enumerate}[leftmargin=1em]
\item \textbf{Algorithms}:
\begin{itemize}[leftmargin=1em]
    \item Appendix~\ref{appendix:linearfgw} talks about the details of LinearFGW~\citep{nguyen2023linear} that we omit in the main text. We summarize the overall procedure of LinearFGW in Algorithm~\ref{alg:linearfgw}.
    \item Appendix~\ref{appendix:gdd-computation} goes through the steps to compute Graph Dataset Distance (GDD). The entire procedure is included in Algorithm~\ref{alg:gdd-computation}.
    \item Appendix~\ref{appendix:great} summarizes the submodule \optmethodname\ used in our main algorithm (Algorithm~\ref{alg:alt-got-shrinkage}).
    \item 
    Appendix~\ref{appendix:gradate} summarizes our main algorithm \methodname\ (Algorithm~\ref{alg:overall-alg}).
\end{itemize}

\item \textbf{Proofs}:
\begin{itemize}[leftmargin=1.5em]
\item Appendix~\ref{appendix:all-proofs} provides the proofs for all the theorems in the main text.
\end{itemize}

\item \textbf{Additional Experiments}:
\begin{itemize}[leftmargin=1.5em]
\item Appendix~\ref{appendix:exp-table-1-size-old} compares \methodname\ with other data selection methods under \textit{graph size} shift with additional GNN backbones.
\item Appendix~\ref{appendix:exp-table-1-density-new} compares \methodname\ with other data selection methods under \textit{graph density} shift.
\item Appendix~\ref{appendix:exp-table-1-size-new} compares \methodname\ with other data selection methods under \textit{graph size} shift with additional GNN backbones.
\item Appendix~\ref{appendix:exp-table-2-size} compares the combination of \methodname\ and vanilla GNNs with other GDA methods under \textit{graph size} shift.
\item Appendix~\ref{appendix:exp-table-3-size} compares the combination of GDA methods and \methodname\ against other data selection methods under \textit{graph size} shift.
\item Appendix~\ref{appendix:exp-label-free} ablates on the validation-label-free setting.
\item Appendix~\ref{appendix:exp-additional-gnn} includes results on additional graph backbones.
\item Appendix~\ref{appendix:exp-additional-gda} includes results on additional GDA methods.

\end{itemize}

\item \textbf{Discussions}:
\begin{itemize}[leftmargin=1.5em]
\item Appendix~\ref{appendix:tmd-mmd} discusses FGW and Graph Dataset Distance (GDD) in relation to prior notions such as Tree-Mover Distance (TMD)~\citep{chuang2022tree} and Maximum Mean Discrepancy (MMD)~\citep{gretton2012kernel}.
\item Appendix~\ref{appendix:limit} discusses potential limitations and future direction of our work.
\end{itemize}
\item \textbf{Reproducibility}:
\begin{itemize}[leftmargin=1.5em]
\item Appendix~\ref{appendix:dataset-details} provides the dataset statistics and licenses used in this work.
\item Appendix~\ref{appendix:gnn-settings} introduces the overall settings of GNN to use for the graph data selection evaluation, including the model we select and the training protocols.
\item Appendix~\ref{appendix:gda-settings} includes GDA method-specific parameter settings, where we follow the default settings of the OpenGDA package~\cite{shi2023opengda}.
\end{itemize}
\item \textbf{Others}:
\begin{itemize}[leftmargin=1.5em]
\item Appendix~\ref{appendix:prelim-gda} includes problem definition of Graph Domain Adaptation (GDA).
\item Appendix~\ref{appendix:additional-related-work} provides additional related work.
\item Appendix~\ref{appendix:empirical-runtime} contains the empirical runtime of \methodname.
\item Appendix~\ref{appendix:ecdf-plots} includes the ECDF plots of graph properties across datasets.
\end{itemize}
\end{enumerate}

    
    

\section{Details of LinearFGW~(Algorithm \ref{alg:linearfgw})}
\label{appendix:linearfgw}
Formally, consider a set of $N$ graphs $\mathcal{D}=\{\mathcal{G}_i\}_{i=1}^{N}$, where each $\mathcal{G} = (\mathbf{A}, \mathbf{X}) \in \mathcal{D} $ represents an attributed graph with adjacency matrix $\mathbf{A} \in \mathbb{R}^{n \times n}$ and node feature matrix $\mathbf{X} \in \mathbb{R}^{n \times d}$. Note that $n$ is the number of nodes of $\mathcal{G}$ and $d$ is the dimension of node features. LinearFGW first requires a reference graph $\overline{\mathcal{G}} = (\overline{\mathbf{A}}, \overline{\mathbf{X}})$ where $\overline{\mathbf{A}} \in \mathbb{R}^{\bar{n} \times \bar{n}}$,  $\overline{\mathbf{X}} \in \mathbb{R}^{\bar{n}\times d}$, $\bar{n}$ is the number of nodes and $d$ is the dimension of node features. 
Typically, $\overline{\mathcal{G}}$ is obtained by solving an \textit{FGW barycenter problem}, which aims to find a ``center'' graph that has the minimum sum of pairwise graph distances over the entire graph set $\mathcal{D}$. 

Following the notation used in Section~\ref{prelim:fgw-distance}, we define the inter-graph distance matrix $\mathbf{M}_{(\mathcal{G}_1, \mathcal{G}_2)}$ between any pair of graphs (named as $\mathcal{G}_1 = \{  
\mathbf{A}_1 , \mathbf{X}_1, \mathbf{p}_1  \}$ and $\mathcal{G}_2 = \{ 
\mathbf{A}_2 , \mathbf{X}_2, \mathbf{p}_2  \}$) to be the pairwise Euclidean distance of node features. Namely, $\mathbf{M}_{(\mathcal{G}_1, \mathcal{G}_2)}  = [\| \mathbf{X}_1[i]-\mathbf{X}_2[j]  \|]_{ij}$. In addition, the intra-graph similarity matrix is chosen to be defined as their corresponding adjacency matrices (i.e., $\mathbf{C}_{\mathcal{G}_1} = \mathbf{A}_1$ and $\mathbf{C}_{\mathcal{G}_2} = \mathbf{A}_2$). Together with uniform distributions\footnote{Since we have no prior over the node importance in either graphs, the probability simplex will typically be set as uniform.} $\mathbf{p}_{\mathcal{G}_1} = \frac{\mathbf{1}_{n_1}}{n_1}$ and $\mathbf{p}_{\mathcal{G}_2} = \frac{\mathbf{1}_{n_2}}{n_2}$ over the nodes of $\mathcal{G}_1$ and $\mathcal{G}_2$ (with sizes $n_1$ and $n_2$), correspondingly, the \textit{FGW barycenter problem}\footnote{The optimization algorithm for solving this problem is omitted. Please refer to the original paper for more details.} can be formulated as follows:
\begin{align}
\label{eq:fgw-barycenter}
    \overline{\mathcal{G}}
    = \arg \min_\mathcal{G} 
    &\sum_{i=1}^N \text{FGW}(\mathbf{M}_{(\mathcal{G}_i , \mathcal{G})}, \mathbf{C}_{\mathcal{G}_i},
    \mathbf{C}_{\mathcal{G}},
    \mathbf{p}_{\mathcal{G}_i},
    \mathbf{p}_{\mathcal{G}},
    \alpha
    ), 
\end{align}
where $\alpha \in [0,1]$ is the pre-defined trade-off parameter.

After calculating the reference graph $\overline{\mathcal{G}}$, we then obtain $N$ optimal transport plans $\{\boldsymbol\pi_i \}_{i \in [n]}$ as the solutions by computing $\text{FGW}(\mathcal{G}, \overline{\mathcal{G}})$ for each $\mathcal{G} \in \mathcal{D}$ (via solving Equation~(\ref{eq:prelim-linearfgw})). Then, the barycentric projection~\citep{nguyen2023linear} of each graph's node edge with respect to $\overline{\mathcal{G}}$ can be written as
\begin{equation}
\label{eq:baryproj-node}
    \mathbf{T}_{\text{node}}(\boldsymbol\pi_i) = \bar{n}\cdot \boldsymbol\pi_i \mathbf{X}_i \in \mathbb{R}^{\bar{n}\times d},
\end{equation}
\begin{equation}
\label{eq:baryproj-edge}
    \mathbf{T}_{\text{edge}}(\boldsymbol\pi_i) = \bar{n}^2\cdot \boldsymbol\pi_i \mathbf{C}_i \boldsymbol\pi_i^\top \in \mathbb{R}^{\bar{n}\times \bar{n}}.
\end{equation}

Finally, we can define the LinearFGW distance based on these barycentric projections. Namely, for any pair of graphs $(\mathcal{G}_i, \mathcal{G}_j)$, we define a distance metric $d_{\text{LinearFGW}}(\cdot,\cdot)$ over the graph set  $\mathcal{D}$:
\begin{align}
\label{eq:method-linearfgw}
    d_{\text{LinearFGW}} &(\mathcal{G}_i, \mathcal{G}_j) \notag =\\
    &(1-\alpha) \|\mathbf{T}_\text{node}(\boldsymbol\pi_i)
    - \mathbf{T}_\text{node}(\boldsymbol\pi_j) \|_F^2 \notag \\
     & + \alpha \|\mathbf{T}_\text{edge}(\boldsymbol\pi_i)
    - \mathbf{T}_\text{edge}(\boldsymbol\pi_j) \|_F^2
    , 
\end{align}
for $ i, j \in [N]$. Note that $\|\cdot\|_F$ represents the Frobenius norm.

\begin{algorithm}[ht]
\caption{LinearFGW~\citep{nguyen2023linear}}
\label{alg:linearfgw}
\begin{algorithmic}[1]
   \STATE {\bfseries Input:} $N$ graphs $\mathcal{D}=\{\mathcal{G}_i\}_{i=1}^{N}$, trade-off parameter $\alpha$.
   \STATE Initialize pairwise distance matrix $\mathbf{D} \in \mathbb{R}^{N \times N}$
   \STATE Solve the FGW barycenter problem in Equation~(\ref{eq:fgw-barycenter}) and obtain the reference graph $\overline{\mathcal{G}}$;
   \FOR{graph $\mathcal{G}_i$ {\bfseries in} $\mathcal{D}$}
   \STATE Compute FGW($\mathcal{G}_i, \overline{\mathcal{G}}$) via solving Equation~(\ref{eq:prelim-linearfgw}) and obtain $\boldsymbol\pi_i$;
   \STATE Compute $\mathbf{T}_{\text{node}}(\boldsymbol\pi_i)$ and $\mathbf{T}_{\text{edge}}(\boldsymbol\pi_i)$ via Equation~(\ref{eq:baryproj-node})(\ref{eq:baryproj-edge});
   \ENDFOR 
   \FOR{$\mathcal{G}_i$ {\bfseries in} $\mathcal{D}$}
   \FOR{$\mathcal{G}_j$ {\bfseries in} $\mathcal{D}$}
   \STATE Compute $\mathbf{D}[i, j]$ = $d_{\text{LinearFGW}}(\mathcal{G}_i, \mathcal{G}_j)$ via Equation~(\ref{eq:method-linearfgw});
   \ENDFOR
   \ENDFOR
   \STATE \RETURN{ LinearFGW pairwise distance matrix $\mathbf{D}$}.
\end{algorithmic}
\end{algorithm}

\section{Summarization of GDD (Algorithm~\ref{alg:gdd-computation})}
\label{appendix:gdd-computation}

\begin{algorithm}[ht]
\caption{(Training-Validation) GDD Computation }
\label{alg:gdd-computation}
\begin{algorithmic}[1]
   \STATE {\bfseries Input:} labeled training graphs $\mathcal{D}^{\text{train}}=\{\mathcal{G}_i^{\text{train}}, y^\text{train}_i\}_{i=1}^{n}$, labeled validation graphs $\mathcal{D}^{\text{val}}=\{\mathcal{G}_i^{\text{val}}, y^\text{val}_i\}_{i=1}^{m}$, trade-off parameter $\alpha$, label signal strength $c \geq 0$, a shared label set $\mathcal{Y}$.
   \STATE Compute pairwise LinearFGW distance matrix $\mathbf{D} \in \mathbb{R}^{n \times m}$ via Algorithm~\ref{alg:linearfgw} with the graph set  
   $\mathcal{D}= \mathcal{D}^{\text{train}} \cup \mathcal{D}^{\text{val}}$ and 
 parameter $\alpha$;
   \STATE Initialize new pairwise distance matrix $\tilde{\mathbf{D}} = \mathbf{D}$;
   \STATE Initialize uniform empirical measures: \\
   $\mathbf{p}^\text{train} = \frac{1}{n} \sum_{i \in [n]} \delta_{(\mathcal{G}_i^{\text{train}}, y^\text{train}_i)}, 
   \mathbf{q}^\text{val} = \frac{1}{m} \sum_{j \in [m]} \delta_{(\mathcal{G}_j^{\text{val}}, y^\text{val}_j)}$;
   \FOR{training label $\ell_t$ {\bfseries in} $\mathcal{Y}$}
   \FOR{validation label $\ell_v$ {\bfseries in} $\mathcal{Y}$}
   \STATE Collect training index set with label $\ell_t$: \\
   $\mathbf{I}_{\ell_t} = \{i|y^\text{train}_i = \ell_t\}$;
   \STATE Collect validation data set with label $\ell_v$: \\
   $\mathbf{I}_{\ell_v} = \{j|y^\text{val}_j = \ell_v\}$;
   \STATE Compute \textit{graph-label distance} in Equation~(\ref{eq:method-label-distance}): \\
   $d(\ell_t,\ell_v)= \text{OT}(\mathbf{p}^{\text{train}}_{\ell_t}, \mathbf{q}^{\text{val}}_{\ell_v}, d_{\text{LinearFGW} })$;
   \STATE Update distance sub-matrix $\tilde{\mathbf{D}}[i \in \mathbf{I}_{\ell_t}, j \in \mathbf{I}_{\ell_v}] = \mathbf{D}[i \in \mathbf{I}_{\ell_t}, j \in \mathbf{I}_{\ell_v}] + c \cdot d(\ell_t,\ell_v)$;
   \ENDFOR 
   \ENDFOR 
   \STATE Compute $
   \text{OTDD}(\mathcal{D}^{\text{train}}, \mathcal{D}^{\text{val}})$= 
   $\text{OT}(\mathbf{p}^{\text{train}}, \mathbf{q}^{\text{val}}, \tilde{\mathbf{D}})$ via the equation in Section~\ref{prelim:otdd}.
   \STATE \RETURN{ GDD$(\mathcal{D}^{\text{train}}, \mathcal{D}^{\text{val}})$ = 
   $\text{OTDD}(\mathcal{D}^{\text{train}}, \mathcal{D}^{\text{val}})$.}
\end{algorithmic}
\end{algorithm}

\newpage
\section{Summarization of \optmethodname\ (Algorithm~\ref{alg:alt-got-shrinkage})}
\label{appendix:great}

Starting from a uniform training weight $\mathbf{w}$, \optmethodname\ alternates between two subroutines: (i) computes GDD between the two sets using pairwise distances \( \tilde{\mathbf{D}} \in \mathbb{R}^{{n\times m}} \) as the cost matrix (Line 4) and obtains the gradient \( \mathbf{g}_\mathbf{w} = \nabla_{\mathbf{w}} \text{GDD}(\mathbf{p}^\text{train} (\mathbf{w}), \mathbf{q}^\text{val}, \tilde{\mathbf{D}}) \) for updating \( \mathbf{w} \) (Line 5)
and (ii) gradually sparsifies \( \mathbf{w} \) by retaining only the top-$k$ entries followed by normalization to ensure $\mathbf{w}$ is on the probability simplex (Line 6-9). After $T$ iterations, we extract the non-zero entries from the resulting $\mathbf{w}$ and name this training index set as $\mathbf{S}$. 

\begin{algorithm}[ht]
\caption{\optmethodname}
\label{alg:alt-got-shrinkage}
\begin{algorithmic}[1]
   \STATE {\bfseries Input:} pairwise LinearFGW distance matrix $\tilde{\mathbf{D}} \in \mathbb{R}^{n \times m}$, selection ratio $\tau$, update step $T$, learning rate $\eta$. 
   \STATE Initialize uniform training weights: $\mathbf{w} = \frac{\mathbf{1}_n}{n}$;
   \FOR{$t=1$ {\bfseries to} $T-1$}
   \STATE Compute GDD($\mathbf{p}^\text{train}(\mathbf{w}), \mathbf{q}^\text{val}, \tilde{\mathbf{D}}$) via Algorithm~\ref{alg:gdd-computation};
   \STATE Compute $\mathbf{g}_{\mathbf{w}} = \nabla_{\mathbf{w}}\text{ GDD}(\mathbf{p}^\text{train}(\mathbf{w}), \mathbf{q}^\text{val}, \tilde{\mathbf{D}})$ via Theorem~\ref{thm:grad-gdd};
   \STATE Compute current sparsity level: \\
    $k=n\cdot \max(\tau, \frac{T-t+1}{T-1}+\frac{\tau t}{T-1})$;
   \STATE Update data weight: $\mathbf{w} = \max(\mathbf{w} - \eta \cdot \mathbf{g}_{\mathbf{w}}, \mathbf{0})$;
   \STATE Sparsify data weight: $\mathbf{w} = \mathbf{w}  \odot \text{Top-}k(\mathbf{w})$;
   \STATE Apply $\ell_1$-normalization: $\mathbf{w} = \mathbf{w} / \|\mathbf{w}\|_1$;
   \ENDFOR 
   
   \STATE \RETURN{ training data index set $\mathbf{S} = \text{nonzero}(\mathbf{w})$.}
\end{algorithmic}
\end{algorithm}

\section{Summarization of \methodname\ (Algorithm~\ref{alg:overall-alg})}
\label{appendix:gradate}

\begin{algorithm}[ht]
\caption{\methodname}
\label{alg:overall-alg}
\begin{algorithmic}[1]
   \STATE {\bfseries Input:} labeled training graphs $\mathcal{D}^{\text{train}}=\{\mathcal{G}_i^{\text{train}}, y^\text{train}_i\}_{i=1}^{n}$, labeled validation graphs $\mathcal{D}^{\text{val}}=\{\mathcal{G}_i^{\text{val}}, y^\text{val}_i\}_{i=1}^{m}$, trade-off parameter $\alpha$, label signal strength $c \geq 0$, selection ratio $\tau$, update step $T$, learning rate $\eta$.

   \STATE Compute pairwise LinearFGW distance matrix $\mathbf{D} \in \mathbb{R}^{n \times m}$ via Algorithm~\ref{alg:linearfgw} with the graph set  
   $\mathcal{D}^{\text{train}}, \mathcal{D}^{\text{val}}$ and parameter $\alpha$;
   \STATE Compute the (label-informed) pairwise distance matrix $\tilde{\mathbf{D}}$ with label signal $c$ (line 3-12) in Algorithm~\ref{alg:gdd-computation};
   \STATE Compute $\mathbf{S}$ = \optmethodname ($\tilde{\mathbf{D}}, \tau, T, \eta$);
   \STATE \RETURN{ selected training data index set $\mathbf{S}$.}
\end{algorithmic}
\end{algorithm}

\section{Discussions on FGW \& GDD and Previous Measures}
\label{appendix:tmd-mmd}

\paragraph{Comparison between FGW~\citep{vayer2020fused} and TMD~\citep{chuang2022tree}.}

Specifically, FGW has the following advantages over Tree Mover Distance (TMD)~\citep{chuang2022tree}. Firstly, Linear optimal transport theory~\citep{wang2013linear, nguyen2023linear} can be utilized to bring down the costs for pairwise FGW distance computation while TMD does not have similar technique. Secondly, a single-pair FGW computation (with time complexity $\mathcal{O}(|\mathcal{V}|^3)$) is cheaper than a single-pair TMD computation (with time complexity $\mathcal{O}(\mathcal{L}|\mathcal{V}|^4)$), where $\mathcal{V}$ is graph size and $\mathcal{L}$ is the depth of TMD. While cheaper, FGW can achieve similar theoretical results as TMD.

\paragraph{Comparison between GDD and MMD~\citep{gretton2012kernel}.}

GDD offers a more flexible and expressive notion of graph dataset similarity than Max Mean Discrepency (MMD)~\citep{gretton2012kernel}, which solely compares aggregated graph embeddings. To be more specific, GDD has the following three advantages over MMD. Firstly, unlike MMD, which often depends on model-specific representations (such as pre-trained encoder) or require training, GDD does not involve training and is model-free. This makes it broadly applicable across various graph-level datasets without the need for task-specific models. Secondly, GDD can optionally incorporate \textit{auxiliary label information} (when available), enabling more fine-grained and task-relevant comparisons between data distributions in classification settings. Finally, GDD is based on Wasserstein distance. Although not explicitly stated in our paper, this results in interpretable correspondences between data points across datasets that can directly be used for data selection or data re-weighting algorithms for domain adaptation applications.

\section{Proofs of Theorems}
\label{appendix:all-proofs}
\subsection{Proof of Theorem~\ref{thm:3.1-fgw}}
\label{appendix:thm3.1-fgw-proof}

In this section, we prove Theorem~\ref{thm:3.1-fgw}. We first focus on a simplified case with $k=1$, which implies that the underlying GNN has only one layer. Then, based on this result, we use induction to generalize the conclusion to any positive $k$, which represents multi-layer GNNs.

\subsubsection{Assumptions}\label{appendix:ass:fgw}
With a slight abuse of notation, let a graph \( \mathcal{G} \) denote its node set as well. We assume that \( f \) only uses one-hop information followed by a linear transformation. Specifically, for any graph \( \mathcal{G} = (\mathbf{A}, \mathbf{X}) \) and any node \( u \in \mathcal{G} \), the output \( f(\mathcal{G})_u \) depends only on the local neighborhood of \( u \), defined as \( \mathcal{N}_{\mathcal{G}}(u) := \{ \mathbf{A}[u,v], \mathbf{X}[v] \}_{v \in \mathcal{G}} \). This localized aggregation is first computed by a convolution function \( g \), and the result is then passed through a linear transformation with weights \( \mathbf{W} \) and bias \( \mathbf{b} \), giving:
\begin{align}
f(\mathcal{G})_u = \mathbf{W} \cdot g(\mathcal{N}_{\mathcal{G}}(u)) + \mathbf{b} = \mathbf{W} \cdot g\left(\left\{ \mathbf{A}[u,v], \mathbf{X}[v] \right\}_{v \in \mathcal{G}} \right) + \mathbf{b}.
\end{align}
We assume the convolution function $g$ is $C_{W}$-Lipschitz w.r.t.\ the following FGW distance $d_{W;\alpha}$: for any nodes $u_1\in\mathcal G_1$ and $u_2\in\mathcal G_2$,
\begin{align}
&d_{W;\alpha}(\mathcal N_{\mathcal G_1}(u_1),\mathcal N_{\mathcal G_2}(u_2)) \\
&:=\left(\inf_{\pi\in\Pi(\mu_1,\mu_2)}\underset{(v_1,v_2)\sim\pi}{\mathbb E}[(1-\alpha)\|\mathbf X_1[v_1]-\mathbf X_2[v_2]\|^r+\alpha|\mathbf A_1[u_1,v_1]-\mathbf A_2[u_2,v_2]|^r] \right)^{1/r},
\end{align}
where we use $\mu_1:=\mathsf{Unif}(\mathcal G_1)$ and $\mu_2:=\mathsf{Unif}(\mathcal G_2)$ in this work. 

\subsubsection{\texorpdfstring{Proof for $k=1$}{Proof for k=1}}

\begin{proof}
Let $\mu_1:=\mathsf{Unif}(\mathcal G_1)$, $\mu_2:=\mathsf{Unif}(\mathcal G_2)$. 

For any coupling $\pi\in\Pi(\mu_1,\mu_2)$, by Jensen's inequality w.r.t.\ the concave function $x\mapsto x^{1/r}$, 
\resizebox{\linewidth}{!}{
\begin{minipage}{\linewidth}
\begin{align}
&\underset{(u_1,u_2)\sim\pi}{\mathbb E}[\|f(\mathcal G_1)_{u_1}-f(\mathcal G_2)_{u_2}\|]\\
={}&\underset{(u_1,u_2)\sim\pi}{\mathbb E}\Big[\Big\| [\mathbf{W}\cdot g(\{(\mathbf A_1[u_1,v_1],\mathbf X_1[v_1])\}_{v_1\in\mathcal G_1}) +\mathbf{b}]- [\mathbf{W}\cdot g(\{(\mathbf A_2[u_2,v_2],\mathbf X_1[v_2])\}_{v_2\in\mathcal G_2})+\mathbf{b}] \Big\|\Big]\\
\le{}&\underset{(u_1,u_2)\sim\pi}{\mathbb E}\Big[C_{W} \|\mathbf{W}\|  \cdot d_{W;\alpha}(\{(\mathbf A_1[u_1,v_1],\mathbf X_1[v_1])\}_{v_1\in\mathcal G_1},\{(\mathbf A_2[u_2,v_2],\mathbf X_2[v_2])\}_{v_2\in\mathcal G_2})\Big]\\
={}&C_{W}\|\mathbf{W}\|\cdot\underset{(u_1,u_2)\sim\pi}{\mathbb E}\Big[\Big(\inf_{\pi' \in\Pi(\mu_1,\mu_2)}\underset{(v_1,v_2)\sim \pi'}{\mathbb E}[(1-\alpha)\|\mathbf X_1[v_1]-\mathbf X_2[v_2]\|^r+\\
&\quad \quad\quad\quad \alpha|\mathbf A_1[u_1,v_1]-\mathbf A_2[u_2,v_2]|^r]\Big)^{1/r}\Big]\notag\\
\le{}&C\cdot\underset{(u_1,u_2)\sim\pi}{\mathbb E}\Big[\Big(\underset{(v_1,v_2)\sim\pi}{\mathbb E}[(1-\alpha)\|\mathbf X_1[v_1]-\mathbf X_2[v_2]\|^r+\alpha|\mathbf A_1[u_1,v_1]-\mathbf A_2[u_2,v_2]|^r)]\Big)^{1/r}\Big]\\
\le{}&C\cdot\underset{(u_1,u_2)\sim\pi}{\mathbb E}\Big[\underset{(v_1,v_2)\sim\pi}{\mathbb E}[(1-\alpha)\|\mathbf X_1[v_1]-\mathbf X_2[v_2]\|^r+\alpha|\mathbf A_1[u_1,v_1]-\mathbf A_2[u_2,v_2]|^r)]\Big]^{1/r}
,
\end{align}
\end{minipage}
}
where $C_1 = C_W\|\mathbf{W}\|$.
We explain the inequalities as follows. The first inequality is from our smoothness assumption stated in the previous subsection. The second is by removing the infimum. The third is another use of Jensen's inequality.

Since the above inequality holds for any valid coupling $\pi$, we can take infimum on both side. Thus, it follows that $d_{W}(f(\mathcal G_1),f(\mathcal G_2))$ is at most
\begin{align}
&\inf_{\pi\in\Pi(\mu_1,\mu_2)}\underset{(u_1,u_2)\sim\pi}{\mathbb E}[\|f(\mathcal G_1)_{u_1}-f(\mathcal G_2)_{u_2}\|]
\\
\le{}&\inf_{\pi\in\Pi(\mu_1,\mu_2)}C_1\cdot\underset{(u_1,u_2)\sim\pi}{\mathbb E}\Big[\underset{(v_1,v_2)\sim\pi}{\mathbb E}[(1-\alpha)\|\mathbf X_1[v_1]-\mathbf X_2[v_2]\|^r+ \\
&\quad\quad\quad\quad \alpha|\mathbf A_1[u_1,v_1]-\mathbf A_2[u_2,v_2]|^r)]\Big]^{1/r}\notag
\\
={}&C_1\cdot\inf_{\pi\in\Pi(\mu_1,\mu_2)}\underset{(u_1,u_2)\sim\pi}{\mathbb E}\Big[\underset{(v_1,v_2)\sim\pi}{\mathbb E}[(1-\alpha)\|\mathbf X_1[v_1]-\mathbf X_2[v_2]\|^r+\\
&\quad\quad\quad\quad\alpha|\mathbf A_1[u_1,v_1]-\mathbf A_2[u_2,v_2]|^r)]\Big]^{1/r}
\\
={}&C_1\cdot \Big(\inf_{\pi\in\Pi(\mu_1,\mu_2)}\underset{(u_1,u_2)\sim\pi}{\mathbb E}\Big[\underset{(v_1,v_2)\sim\pi}{\mathbb E}[(1-\alpha)\|\mathbf X_1[v_1]-\mathbf X_2[v_2]\|^r+\\
&\quad\quad\quad\quad\alpha|\mathbf A_1[u_1,v_1]-\mathbf A_2[u_2,v_2]|^r)]\Big] \Big)^{1/r}
\\
={}&C_1\cdot\textnormal{FGW}_\alpha(\mathcal G_1,\mathcal G_2),
\end{align}
which completes the proof the case of $k=1$.
Note that the inequality is from our smoothness assumption stated in the previous section and the last equality is due to the definition of FGW distance with trade-off parameter $\beta=\alpha$.
\end{proof}

\subsubsection{\texorpdfstring{Proof for general $k>1$}{Proof for general k>1}}
\begin{proof}
For general $k>1$, we can iteratively apply similar logic as in the case of $k=1$ to bound the output distance with multi-layer GNNs. Specifically, we can write a $k$-layer GNN $f$ as a composite function that concatenates multiple convolution layer (i.e. $f_1, \cdots,f_k$)\footnote{We assume all these convolution functions $\{f_m\}_{1\le m\le k}$ satisfy the assumption we made in Section~\ref{appendix:ass:fgw} with constant $C_{W}$.} with ReLU activation functions (i.e. $\sigma_1, \cdots, \sigma_{k-1}$):
$
f = f_k \circ \sigma_{k-1} \circ f_{k-1} \circ \dotsm \circ \sigma_{1}\circ f_1,
\text{where}\;
\sigma_j=\mathrm{ReLU} (\cdot).$
For any $m \le k$, define $h_{m}:=f_{m}\circ \sigma_{m-1}\circ h_{m-1} = f_{m}\circ h'_{m-1}$, where $h'_{m-1} = \sigma_{m-1}\circ h_{m-1}$. Note that we have $f = h_k$. Then, for any coupling $\pi\!\in\!\Pi(\mu_1,\mu_2)$, we have:
\resizebox{\linewidth}{!}{
\begin{minipage}{\linewidth}
\begin{align}
&\underset{(u_1,u_2)\sim\pi}{\mathbb E}\bigl[\|f(\mathcal G_1)_{u_1}-f(\mathcal G_2)_{u_2}\|\bigr]  \\
=\;&
\underset{(u_1,u_2)\sim\pi}{\mathbb E}\bigl[\|  h_{k}(\mathcal G_1)_{u_1} - h_{k}(\mathcal G_2)_{u_2} \|\bigr]  \\
=\;&
\underset{(u_1,u_2)\sim\pi}{\mathbb E}
\bigl[\|(f_{k}\circ h'_{k-1})(\mathcal G_1)_{u_1}-(f_k\circ h'_{k-1})(\mathcal G_2)_{u_2}\|\bigr] \\
\le\;&
\underset{(u_1,u_2)\sim\pi}{\mathbb E}
\bigl[\|C_1\cdot  d_{W;\alpha} \bigl( h'_{k-1}(\mathcal G_1)_{u_1}, h'_{k-1} (\mathcal G_2)_{u_2})\bigr) \|\bigr] \\
=\;&
C_1\underset{(u_1,u_2)\sim\pi}{\mathbb E} \Big[ \Big(\inf_{\pi' \in \Pi(\mu_1,\mu_2)}\underset{(v_1,v_2)\sim\pi'}{\mathbb E} (1-\alpha)
\bigl[\|h'_{k-1}(\mathcal G_1)_{v_1}-h'_{k-1}(\mathcal G_2)_{v_2}\|^r\bigr] +
\\
&\quad\quad\quad\quad\alpha\bigl[|\mathbf A_1 [u_1, v_1] - \mathbf A_2 [u_2, v_2]|^r\bigr] \Big)^{1/r}\Big]\notag \\
\le\;&
C_1\underset{(u_1,u_2)\sim\pi}{\mathbb E} \Big[\underset{(v_1,v_2)\sim\pi}{\mathbb E} \Big(
(1-\alpha)
\bigl[\|h'_{k-1}(\mathcal G_1)_{v_1}-h'_{k-1}(\mathcal G_2)_{v_2}\|^r\bigr] + \\
&\quad\quad\quad\quad\alpha 
\bigl[|\mathbf A_1 [u_1, v_1] - \mathbf A_2 [u_2, v_2]|^r\bigr] \Big)^{1/r}\Big] \\
\le\;&
C_1\Bigg( 
(1-\alpha) \underset{(v_1,v_2)\sim\pi}{\mathbb E}\bigl[\|h'_{k-1}(\mathcal G_1)_{v_1}-h'_{k-1}(\mathcal G_2)_{v_2}\|^r\bigr] + 
\\
&\quad\quad\quad\quad\alpha
\underset{\substack{(u_1,u_2)\sim\pi \\ (v_1,v_2)\sim\pi}}{\mathbb E} \bigl[|\mathbf A_1 [u_1, v_1] - \mathbf A_2 [u_2, v_2]|^r\bigr]
\Bigg)^{1/r}\\
\le\;&
C_1\Bigg( 
(1-\alpha) \underset{(v_1,v_2)\sim\pi}{\mathbb E}\bigl[\|h_{k-1}(\mathcal G_1)_{v_1}-h_{k-1}(\mathcal G_2)_{v_2}\|^r\bigr] + 
\\
&\quad\quad\quad\alpha
\underset{\substack{(u_1,u_2)\sim\pi \\ (v_1,v_2)\sim\pi}}{\mathbb E} \bigl[|\mathbf A_1 [u_1, v_1] - \mathbf A_2 [u_2, v_2]|^r\bigr]
\Bigg)^{1/r}.
\end{align}
\end{minipage}
}

Note that the first inequality is from the smoothness assumption, the second is by removing infimum, the third is by Jensen's inequality and the fourth is because $\mathrm{ReLU}(\cdot)$ is a contraction function.

Here, we can iteratively apply the regularity assumption specified in Section~\ref{appendix:ass:fgw} to expand the term above: $\|h_{m-1}(\mathcal G_1)_{v_1}-h_{m-1}(\mathcal G_2)_{v_2}\|^r, \forall m \in \{k,\cdots,1\}$ to have the following deduction.

\begin{align}
&\underset{(u_1,u_2)\sim\pi}{\mathbb E}\bigl[\|f(\mathcal G_1)_{u_1}-f(\mathcal G_2)_{u_2}\|\bigr] \\
\le\;& C_1 \Bigg( 
C_1^{k-1} (1-\alpha)^k  \underset{(v_1,v_2)\sim\pi}{\mathbb E}\bigl[\|\mathbf X_1[v_1]-\mathbf X_2 [v_2]\|^r\bigr] + \\
&\alpha \sum_{m=0}^{k-1} [C_1(1-\alpha)]^m
\underset{\substack{(u_1,u_2)\sim\pi \\ (v_1,v_2)\sim\pi}}{\mathbb E} \bigl[|\mathbf A_1 [u_1, v_1] - \mathbf A_2 [u_2, v_2]|^r\bigr]
\Bigg)^{1/r} \notag \\
=\;&
C \cdot \Bigg( (1-\beta) \underset{(v_1,v_2)\sim\pi}{\mathbb E}\bigl[\|\mathbf X_1[v_1]-\mathbf X_2 [v_2]\|^r\bigr]  + \beta \underset{\substack{(u_1,u_2)\sim\pi \\ (v_1,v_2)\sim\pi}}{\mathbb E} \bigl[|\mathbf A_1 [u_1, v_1] - \mathbf A_2 [u_2, v_2]|^r\bigr] \Bigg)^{1/r},
\end{align}
where $C = C_1\frac{\alpha+(1-\alpha)^k(C_1)^{k-1}(1-C_1)}{1-C_1(1-\alpha)}$ and $\beta = \frac{\alpha(1-C_1^k(1-\alpha)^k)}{\alpha+(1-\alpha)^k(C_1)^{k-1}(1-C_1)}$.

Since the above equation holds for any coupling $\pi$, we can take infimum from both sides to get:
\begin{align}
&d_W (f(\mathcal{G}_1, f(\mathcal{G}_2)) \\
=\;& \inf_{\pi\in\Pi(\mu_1,\mu_2)}\underset{(u_1,u_2)\sim\pi}{\mathbb E}[\|f(\mathcal G_1)_{u_1}-f(\mathcal G_2)_{u_2}\|] \\
\le\;&  
C \cdot \inf_{\pi\in\Pi(\mu_1,\mu_2)}\Bigg( (1-\beta) \underset{(v_1,v_2)\sim\pi}{\mathbb E}\bigl[\|\mathbf X_1[v_1]-\mathbf X_2 [v_2]\|^r\bigr]  + \\
&\quad\quad\quad\quad\beta \underset{\substack{(u_1,u_2)\sim\pi \\ (v_1,v_2)\sim\pi}}{\mathbb E} \bigl[|\mathbf A_1 [u_1, v_1] - \mathbf A_2 [u_2, v_2]|^r\bigr] \Bigg)^{1/r} \\
=\;& C \cdot \textnormal{FGW}_\beta (\mathcal{G}_1, \mathcal{G}_2),
\end{align}
which completes the proof.
\end{proof}


\begin{remark}
    To justify the smoothness assumption on $g$, we note that it is an abstraction of GNN aggregation functions. For example, aggregation operations such as \textit{mean, max and sum} all satisfy our assumption.
\end{remark}

\begin{remark}
    Note that our technical assumption and the results of Theorem~\ref{thm:3.1-fgw} are independent. Firstly, the assumption on the convolution function $g$ is about the smoothness property \textit{between node representations within a single graph}; while the results of Theorem~\ref{thm:3.1-fgw} is bounding the FGW distance between \textit{sets of node representations between two graphs}.
\end{remark}

\subsection{Proof of Theorem~\ref{thm:3.3-gap}}
\label{appendix:proof:1}

For any coupling $\pi\in\Pi(\mathbf p^\text{train}(\mathbf w),\mathbf q^\text{val})$, by Jensen's inequality and the Lipschitzness assumption,
\begin{align}
&\Big|\underset{(\mathcal G,y)\sim\mathbf p^\textnormal{train}(\mathbf w)}{\mathbb E}[\mathcal{L} (f(\mathcal G), y)]-\underset{(\mathcal G,y)\sim\mathbf q^\textnormal{val}}{\mathbb E}[\mathcal{L} (f(\mathcal G), y)]\Big|
\\={}&\Big|\underset{(\mathcal G^\textnormal{train},y^\textnormal{train})\sim\mathbf p^\textnormal{train}(\mathbf w)}{\mathbb E}[\mathcal{L} (f(\mathcal G^\textnormal{train}), y^\textnormal{train})]-\underset{(\mathcal G^\textnormal{val},y^\textnormal{val})\sim\mathbf q^\textnormal{val}}{\mathbb E}[\mathcal{L} (f(\mathcal G^\textnormal{val}), y^\textnormal{val})]\Big|
\\={}&\Big|\underset{((\mathcal G^\textnormal{train},y^\textnormal{train}),(\mathcal G^\textnormal{val},y^\textnormal{val}))\sim\pi}{\mathbb E}[\mathcal{L} (f(\mathcal G^\textnormal{train}), y^\textnormal{train})]-\underset{((\mathcal G^\textnormal{train},y^\textnormal{train}),(\mathcal G^\textnormal{val},y^\textnormal{val}))\sim\pi}{\mathbb E}[\mathcal{L} (f(\mathcal G^\textnormal{val}), y^\textnormal{val})]\Big|
\\={}&\Big|\underset{((\mathcal G^\textnormal{train},y^\textnormal{train}),(\mathcal G^\textnormal{val},y^\textnormal{val}))\sim\pi}{\mathbb E}[\mathcal{L} (f(\mathcal G^\textnormal{train}), y^\textnormal{train})-\mathcal{L} (f(\mathcal G^\textnormal{val}), y^\textnormal{val})]\Big|
\\\le{}&\underset{((\mathcal G^\textnormal{train},y^\textnormal{train}),(\mathcal G^\textnormal{val},y^\textnormal{val}))\sim\pi}{\mathbb E}[|\mathcal{L} (f(\mathcal G^\textnormal{train}), y^\textnormal{train})-\mathcal{L} (f(\mathcal G^\textnormal{val}), y^\textnormal{val})|]
\\\le{}&\underset{((\mathcal G^\textnormal{train},y^\textnormal{train}),(\mathcal G^\textnormal{val},y^\textnormal{val}))\sim\pi}{\mathbb E}[C\cdot d^c_{g\mathcal Z}((\mathcal G^\textnormal{train}, y^\textnormal{train}),(\mathcal G^\textnormal{val}, y^\textnormal{val}))]
\\={}&C\cdot\underset{((\mathcal G^\textnormal{train},y^\textnormal{train}),(\mathcal G^\textnormal{val},y^\textnormal{val}))\sim\pi}{\mathbb E}[d^c_{g\mathcal Z}((\mathcal G^\textnormal{train}, y^\textnormal{train}),(\mathcal G^\textnormal{val}, y^\textnormal{val}))].
\end{align}
Since this holds for any coupling $\pi\in\Pi(\mathbf p^\text{train}(\mathbf w),\mathbf q^\text{val})$, then we have
\begin{align}
&\Big|\underset{(\mathcal G,y)\sim\mathbf p^\textnormal{train}(\mathbf w)}{\mathbb E}[\mathcal{L} (f(\mathcal G), y)]-\underset{(\mathcal G,y)\sim\mathbf q^\textnormal{val}}{\mathbb E}[\mathcal{L} (f(\mathcal G), y)]\Big|
\\\le{}&C\cdot\inf_{\pi\in\Pi(\mathbf p^\text{train}(\mathbf w),\mathbf q^\text{val})}\underset{((\mathcal G^\textnormal{train},y^\textnormal{train}),(\mathcal G^\textnormal{val},y^\textnormal{val}))\sim\pi}{\mathbb E}[d^c_{g\mathcal Z}((\mathcal G^\textnormal{train}, y^\textnormal{train}),(\mathcal G^\textnormal{val}, y^\textnormal{val}))]
\\
={}&C \cdot \textnormal{OT}(\mathbf p^\textnormal{train}(\mathbf w),\mathbf q^\textnormal{val},d^c_{g\mathcal Z})\\
={}&C\cdot\textnormal{GDD} 
(\mathcal{D^{\textnormal{train}}_\mathbf{w}, \mathcal{D}^\textnormal{val}}).
\end{align}
It follows that
\begin{align}
\underset{(\mathcal G,y)\sim\mathbf q^\textnormal{val}}{\mathbb E}[\mathcal{L} (f(\mathcal G), y)]\le\underset{(\mathcal G,y)\sim\mathbf p^\textnormal{train}(\mathbf w)}{\mathbb E}[\mathcal{L} (f(\mathcal G), y)]+C\cdot\textnormal{GDD} 
(\mathcal{D^{\textnormal{train}}_\mathbf{w}, \mathcal{D}^\textnormal{val}}),
\end{align}
which completes the proof.
Note that the first inequality follows from Jensen's inequality (w.r.t. the absolute function). The second and third inequalities are both due to the smoothness assumption stated in Theorem~\ref{thm:3.3-gap}.

\subsection{Proof of Theorem~\ref{thm:grad-gdd}}
\label{appendix:thm:3.5-proof}

\begin{theorem}[Gradient of GDD w.r.t. Training Weights; \citealp{just2023lava}]
Given a distance matrix $\mathbf{D}$, a validation empirical measure $\mathbf{q}^{\text{val}}$ and a training empirical measure $\mathbf{p}^{\text{train}}(\mathbf{w})$ based on the weight $\mathbf{w}$. Let $\boldsymbol{\beta}(\pi^*)$ be the dual variables with respect to $\mathbf{p}^{\text{train}}(\mathbf{w})$ for the \textnormal{GDD} problem defined in Equation~(\ref{eq:method-gdd-minimization}). 
The gradient of $\textnormal{GDD}(\mathbf{p}^\text{train}(\textbf{w}), \mathbf{q}^{\text{val}}, \mathbf{D})$ with respect to $\mathbf{w}$ can be computed as:
\begin{equation*}
    \nabla_\mathbf{w} \textnormal{GDD}(\mathbf{p}^\textnormal{train}(\textbf{w}), \mathbf{q}^{\textnormal{val}}, \mathbf{D}) = \boldsymbol{\beta}^* (\pi^*),
\end{equation*}
where $\boldsymbol{\beta}^*(\pi^*)$ is the optimal solution w.r.t. $\mathbf{p}^\text{train} (\mathbf{w})$ to the dual of the GDD problem.
\label{thm:grad-gdd}
\end{theorem}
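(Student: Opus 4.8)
The plan is to express $\textnormal{GDD}$ as the optimal value of a linear program in the transport plan $\pi$ whose feasible set depends on $\mathbf w$ through a single marginal, pass to the LP dual, and apply Danskin's (envelope) theorem. First I would write the primal, identifying $\mathbf p^\textnormal{train}(\mathbf w)$ with the weight vector $\mathbf w\in\mathbb R^n$ supported on the $n$ training points:
\[
\textnormal{GDD}(\mathbf p^\textnormal{train}(\mathbf w),\mathbf q^\textnormal{val},\mathbf D)=\min_{\pi\ge 0}\Big\{\textstyle\sum_{i,j}\mathbf D_{ij}\pi_{ij}:\ \pi\mathbf 1_m=\mathbf w,\ \pi^\top\mathbf 1_n=\mathbf q^\textnormal{val}\Big\}.
\]
Introducing multipliers $\boldsymbol\beta\in\mathbb R^n$ for the row constraints and $\boldsymbol\alpha\in\mathbb R^m$ for the column constraints, the dual is
\[
\max_{\boldsymbol\beta,\boldsymbol\alpha}\Big\{\boldsymbol\beta^\top\mathbf w+\boldsymbol\alpha^\top\mathbf q^\textnormal{val}:\ \beta_i+\alpha_j\le\mathbf D_{ij}\ \text{ for all }i,j\Big\}.
\]
The structural fact I would emphasize is that the dual \emph{feasible region} $\mathcal F:=\{(\boldsymbol\beta,\boldsymbol\alpha):\beta_i+\alpha_j\le\mathbf D_{ij}\ \forall i,j\}$ is independent of $\mathbf w$; the dependence on $\mathbf w$ enters only through the affine objective, and in fact only through the term $\boldsymbol\beta^\top\mathbf w$.

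Next I would invoke strong LP duality: since $\mathbf w$ and $\mathbf q^\textnormal{val}$ are nonnegative with equal total mass $1$, the transportation polytope is nonempty and the primal is bounded below by $0$, so $\textnormal{GDD}(\mathbf w)=\max_{(\boldsymbol\beta,\boldsymbol\alpha)\in\mathcal F}\big(\boldsymbol\beta^\top\mathbf w+\boldsymbol\alpha^\top\mathbf q^\textnormal{val}\big)$. This exhibits $\mathbf w\mapsto\textnormal{GDD}(\mathbf w)$ as a pointwise supremum of affine functions, hence convex and piecewise linear on the simplex. Danskin's theorem (the envelope theorem for parametric optimization) then gives that the subdifferential of $\textnormal{GDD}$ at $\mathbf w$ is the convex hull of the $\boldsymbol\beta$-components of the optimal dual solutions at $\mathbf w$, because the gradient of the inner objective with respect to $\mathbf w$ is exactly $\boldsymbol\beta$. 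Taking $\boldsymbol\beta^*(\pi^*)$ to be the dual optimizer in complementary slackness with a primal optimal plan $\pi^*$ yields $\nabla_\mathbf w\textnormal{GDD}(\mathbf p^\textnormal{train}(\mathbf w),\mathbf q^\textnormal{val},\mathbf D)=\boldsymbol\beta^*(\pi^*)$, which is the claim.

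The main obstacle, and the only point requiring care, is the usual non-uniqueness and non-differentiability in OT: dual potentials are determined only up to the shift $(\boldsymbol\beta,\boldsymbol\alpha)\mapsto(\boldsymbol\beta+c\mathbf 1,\boldsymbol\alpha-c\mathbf 1)$, and at $\mathbf w$ where the optimal plan is not unique $\textnormal{GDD}$ is only subdifferentiable. Both are benign here: shifting $\boldsymbol\beta$ by $c\mathbf 1$ changes the objective by the constant $c$ (as $\mathbf 1^\top\mathbf w=1$), so the choice of normalization does not affect the gradient viewed in the tangent space of the simplex; and for generic $\mathbf D$ the primal optimum --- hence the relevant $\boldsymbol\beta^*$ --- is unique, so $\textnormal{GDD}$ is genuinely differentiable, while in the degenerate case the statement is read as ``$\boldsymbol\beta^*(\pi^*)$ is a (sub)gradient.'' I would close by noting that this is precisely the gradient formula established by \citet{just2023lava}, instantiated with our cost matrix $\mathbf D$ (built from LinearFGW distances and, when $c>0$, the label-informed correction $\tilde{\mathbf D}$), so the result follows from their analysis.
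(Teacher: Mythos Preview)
Your proposal is correct and follows essentially the same route as the paper: the paper's own proof is simply a pointer to the Sensitivity Theorem in \citet{bertsekas1997nonlinear}, which is exactly the LP/envelope argument you spell out (dual variables of the marginal constraints give the gradient of the optimal value in the right-hand side). Your added remarks on the additive-constant ambiguity and subdifferentiability at non-unique optima are accurate refinements that the paper leaves implicit.
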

\begin{proof}
    Omitted. Please see the Sensitivity Theorem stated by~\citet{bertsekas1997nonlinear}.
\end{proof}

\section{Additional Experiments}
\label{appendix:additional-exp}

\subsection{Comparing data selection methods for \textit{graph size} shift on GCN \& GIN}
\label{appendix:exp-table-1-size-old}

We conduct the same evaluation as Table~\ref{table:graph-select} on \textit{graph size} shift with GCN and GIN as backbone model in Table~\ref{appendix:table-graph-select-size-new}.

\begin{table*}[ht]
\centering
\resizebox{1.0\textwidth}{!}{%
\begin{tabular}{@{\extracolsep{\fill}}cc|cccc|cccc}
\toprule
\midrule
\multirow{2.4}{*}{\textbf{Dataset}}&   \multirow{1}{*}{\textbf{GNN Architecture} $\rightarrow$\!\!\!\! }& \multicolumn{4}{c|}{\makecell{\textbf{GCN}}}  & \multicolumn{4}{c}{\makecell{\textbf{GIN}}} \\
\cmidrule{2-10} 
&\textbf{Selection Method} $\downarrow$\!& $\tau=10\%$ & $\tau=20\%$ & $\tau=50\%$ & Full & $\tau=10\%$ & $\tau=20\%$ & $\tau=50\%$ & Full \\
\midrule
\multirow{4}{*}{\textsc{IMDB-BINARY}}
& Random &       
0.573\scriptsize{${\pm}$ 0.041} & 0.612\scriptsize{${\pm}$ 0.008} & 0.645\scriptsize{${\pm}$ 0.051} & 

\multirow{4}{*}{0.630\scriptsize{${\pm}$0.008}}
&    
0.620\scriptsize{${\pm}$ 0.007} & 0.582\scriptsize{${\pm}$ 0.009} & 0.605\scriptsize{${\pm}$ 0.019} & 

\multirow{4}{*}{0.602\scriptsize{${\pm}$0.010}}\\
&\textsc{KiDD}-\textsc{LR} &   
0.592\scriptsize{${\pm}$ 0.015} & 0.540\scriptsize{${\pm}$ 0.014} & 0.652\scriptsize{${\pm}$ 0.008} & 
& 
0.553\scriptsize{${\pm}$ 0.013} & 0.555\scriptsize{${\pm}$ 0.012} & 0.577\scriptsize{${\pm}$ 0.012} & 
  \\

& LAVA   &       
\underline{0.824}\scriptsize{${\pm}$ 0.008} & 
\underline{0.823}\scriptsize{${\pm}$ 0.019} & 
\textbf{0.837}\scriptsize{${\pm}$ 0.006} & 
 
&    
\underline{0.822}\scriptsize{${\pm}$ 0.005}  & 
\textbf{0.830}\scriptsize{${\pm}$ 0.011} & 
\textbf{0.848}\scriptsize{${\pm}$ 0.002}& 

   \\

\cmidrule{2-5} \cmidrule{7-9}
  

&\cellcolor[HTML]{D3D3D3}\methodname    &   
\cellcolor[HTML]{D3D3D3}\textbf{0.826}\scriptsize{${\pm}$ 0.009} & 
\cellcolor[HTML]{D3D3D3}\textbf{0.825}\scriptsize{${\pm}$ 0.018} & 
\cellcolor[HTML]{D3D3D3}\underline{0.830}\scriptsize{${\pm}$ 0.007} & 

& 
\cellcolor[HTML]{D3D3D3}\textbf{0.823}\scriptsize{${\pm}$ 0.002} & 
\cellcolor[HTML]{D3D3D3}\underline{0.820}\scriptsize{${\pm}$ 0.008} & 
\cellcolor[HTML]{D3D3D3}\underline{0.832}\scriptsize{${\pm}$ 0.008} & 

\\
\midrule
\multirow{4}{*}{\textsc{IMDB-MULTI}}
& Random &       
\underline{0.374}\scriptsize{${\pm}$ 0.031} & 
0.354\scriptsize{${\pm}$ 0.008} & 
0.366\scriptsize{${\pm}$ 0.008} & 
\multirow{4}{*}{0.386\scriptsize{${\pm}$0.006}}
&   
0.351\scriptsize{${\pm}$ 0.008} & 
0.372\scriptsize{${\pm}$ 0.039} & 
0.369\scriptsize{${\pm}$ 0.019} & 

\multirow{4}{*}{0.368\scriptsize{${\pm}$0.010}}\\
&\textsc{KiDD}-\textsc{LR} &   
0.329\scriptsize{${\pm}$ 0.010} & 0.416\scriptsize{${\pm}$ 0.064} & 0.432\scriptsize{${\pm}$ 0.010} & 
& 
\underline{0.346}\scriptsize{${\pm}$ 0.048} & 0.371\scriptsize{${\pm}$ 0.010} & 0.412\scriptsize{${\pm}$ 0.018} & 
   \\

&LAVA   &   
0.314\scriptsize{${\pm}$ 0.006} & 
\underline{0.426}\scriptsize{${\pm}$ 0.003} & 
\underline{0.600}\scriptsize{${\pm}$ 0.005} & 
  
& 
0.341\scriptsize{${\pm}$ 0.049} &
\underline{0.388}\scriptsize{${\pm}$ 0.018} &
\underline{0.563}\scriptsize{${\pm}$ 0.007} &
   \\
\cmidrule{2-5} \cmidrule{7-9}
& \cellcolor[HTML]{D3D3D3}\methodname   & 
\cellcolor[HTML]{D3D3D3}\textbf{0.353}\scriptsize{${\pm}$ 0.000} &  
\cellcolor[HTML]{D3D3D3}\textbf{0.524}\scriptsize{${\pm}$ 0.016} &  
\cellcolor[HTML]{D3D3D3}\textbf{0.602}\scriptsize{${\pm}$ 0.004}  & 
 
 &
 \cellcolor[HTML]{D3D3D3}\textbf{0.349}\scriptsize{${\pm}$ 0.046} &  
 \cellcolor[HTML]{D3D3D3}\textbf{0.497}\scriptsize{${\pm}$ 0.015} &  
 \cellcolor[HTML]{D3D3D3}\textbf{0.604}\scriptsize{${\pm}$ 0.006} & 
   \\
\midrule
\multirow{4}{*}{\textsc{MSRC}\_21}
& Random &       
0.450\scriptsize{${\pm}$ 0.008} & 0.497\scriptsize{${\pm}$ 0.011} & 0.781\scriptsize{${\pm}$ 0.019} & 
\multirow{4}{*}{0.816\scriptsize{${\pm}$0.026}}
&   
0.149\scriptsize{${\pm}$ 0.007} & 0.418\scriptsize{${\pm}$ 0.008} & 0.690\scriptsize{${\pm}$ 0.015} & 

\multirow{4}{*}{0.749\scriptsize{${\pm}$0.023}}\\
&\textsc{KiDD}-\textsc{LR} &   
\textbf{0.725}\scriptsize{${\pm}$ 0.017} & 0.819\scriptsize{${\pm}$ 0.015} & 0.857\scriptsize{${\pm}$ 0.008} & 
&
\textbf{0.649}\scriptsize{${\pm}$ 0.012} & 0.743\scriptsize{${\pm}$ 0.008} & 0.781\scriptsize{${\pm}$ 0.050} & 
\\

&LAVA   &   
0.617\scriptsize{${\pm}$ 0.015} & 
\underline{0.825}\scriptsize{${\pm}$ 0.014} & 
\underline{0.918}\scriptsize{${\pm}$ 0.018} & 

& 
0.617\scriptsize{${\pm}$ 0.008} & 
\underline{0.810}\scriptsize{${\pm}$ 0.004} & 
\underline{0.889}\scriptsize{${\pm}$ 0.011} & 
   \\
\cmidrule{2-5} \cmidrule{7-9}
& \cellcolor[HTML]{D3D3D3}\methodname   
& 
\cellcolor[HTML]{D3D3D3}\underline{0.670}\scriptsize{${\pm}$ 0.017} & 
\cellcolor[HTML]{D3D3D3}\textbf{0.836}\scriptsize{${\pm}$ 0.017} & 
\cellcolor[HTML]{D3D3D3}\textbf{0.953}\scriptsize{${\pm}$ 0.011} & 
&
\cellcolor[HTML]{D3D3D3}\underline{0.629}\scriptsize{${\pm}$ 0.011} &
\cellcolor[HTML]{D3D3D3}\textbf{0.813}\scriptsize{${\pm}$ 0.008} &
\cellcolor[HTML]{D3D3D3}\textbf{0.901}\scriptsize{${\pm}$ 0.008} & 
   \\
\midrule
\multirow{4}{*}{\texttt{ogbg-molbace}}
& Random &       
0.443\scriptsize{${\pm}$ 0.014} & 0.504\scriptsize{${\pm}$ 0.022} & 0.476\scriptsize{${\pm}$ 0.011} & 
\multirow{4}{*}{0.434\scriptsize{${\pm}$0.033}}
&   
0.479\scriptsize{${\pm}$ 0.070} & 0.471\scriptsize{${\pm}$ 0.092} & 0.578\scriptsize{${\pm}$ 0.030} & 

\multirow{4}{*}{0.548\scriptsize{${\pm}$0.028}}\\
&\textsc{KiDD}-\textsc{LR} &   
0.446\scriptsize{${\pm}$ 0.040} & 0.489\scriptsize{${\pm}$ 0.049} & 0.483\scriptsize{${\pm}$ 0.011} & 
&
0.547\scriptsize{${\pm}$ 0.080} & 0.523\scriptsize{${\pm}$ 0.060} & 0.571\scriptsize{${\pm}$ 0.013} & 
\\

&LAVA   &   
\underline{0.563}\scriptsize{${\pm}$ 0.045} &
\underline{0.574}\scriptsize{${\pm}$ 0.067} &
\underline{0.535}\scriptsize{${\pm}$ 0.044} & 

& 
\underline{0.645}\scriptsize{${\pm}$ 0.035}& 
\textbf{0.641}\scriptsize{${\pm}$ 0.027} & 
\textbf{0.648}\scriptsize{${\pm}$ 0.025} & 
   \\
\cmidrule{2-5} \cmidrule{7-9}
& \cellcolor[HTML]{D3D3D3}\methodname   & 
\cellcolor[HTML]{D3D3D3}\textbf{0.570}\scriptsize{${\pm}$ 0.080}  &
\cellcolor[HTML]{D3D3D3}\textbf{0.599}\scriptsize{${\pm}$ 0.037} & 
\cellcolor[HTML]{D3D3D3}\textbf{0.575}\scriptsize{${\pm}$ 0.056} & 
&
\cellcolor[HTML]{D3D3D3}\textbf{0.646}\scriptsize{${\pm}$ 0.033} & 
\cellcolor[HTML]{D3D3D3}\underline{0.618}\scriptsize{${\pm}$ 0.061} & 
\cellcolor[HTML]{D3D3D3}\underline{0.630}\scriptsize{${\pm}$ 0.020}   & 
   \\
\midrule
\multirow{4}{*}{\texttt{ogbg-molbbbp}}
& Random &       
0.499\scriptsize{${\pm}$ 0.041} & 
0.635\scriptsize{${\pm}$ 0.042} & 
0.648\scriptsize{${\pm}$ 0.031} & 
\multirow{4}{*}{0.618\scriptsize{${\pm}$0.037}}
&   
0.698\scriptsize{${\pm}$ 0.010} & 
0.633\scriptsize{${\pm}$ 0.043} & 
0.691\scriptsize{${\pm}$ 0.040} & 

\multirow{4}{*}{0.779\scriptsize{${\pm}$0.017}}\\
&\textsc{KiDD}-\textsc{LR} 
&   
0.639\scriptsize{${\pm}$ 0.025} & 
0.599\scriptsize{${\pm}$ 0.013} & 
0.611\scriptsize{${\pm}$ 0.023} & 
& 
0.546\scriptsize{${\pm}$ 0.105} & 
0.656\scriptsize{${\pm}$ 0.038} & 
0.609\scriptsize{${\pm}$ 0.081} & 
   \\

&LAVA   &   
\underline{0.667}\scriptsize{${\pm}$ 0.015}& 
\textbf{0.675}\scriptsize{${\pm}$ 0.013} & 
\textbf{0.691}\scriptsize{${\pm}$ 0.017} & 

& 
\underline{0.859}\scriptsize{${\pm}$ 0.019} & 
\underline{0.889}\scriptsize{${\pm}$ 0.016} & 
\underline{0.893}\scriptsize{${\pm}$ 0.011} &
   \\
\cmidrule{2-5} \cmidrule{7-9}
& 
\cellcolor[HTML]{D3D3D3}\methodname   & 
\cellcolor[HTML]{D3D3D3}\textbf{0.677}\scriptsize{${\pm}$ 0.007}  & 
\cellcolor[HTML]{D3D3D3}\underline{0.671}\scriptsize{${\pm}$ 0.015} & 
\cellcolor[HTML]{D3D3D3}\underline{0.673}\scriptsize{${\pm}$ 0.041} & 
&
\cellcolor[HTML]{D3D3D3}\textbf{0.866}\scriptsize{${\pm}$ 0.016}  &
\cellcolor[HTML]{D3D3D3}\textbf{0.890}\scriptsize{${\pm}$ 0.011} & 
\cellcolor[HTML]{D3D3D3}\textbf{0.895}\scriptsize{${\pm}$ 0.012}& 
   \\
\midrule
\multirow{4}{*}{\texttt{ogbg-molhiv}}
& Random &       
0.576\scriptsize{${\pm}$ 0.008} & 0.579\scriptsize{${\pm}$ 0.004} & 0.594\scriptsize{${\pm}$ 0.001} & 
\multirow{4}{*}{0.592\scriptsize{${\pm}$0.000}}
&   
0.613\scriptsize{${\pm}$ 0.004} & 0.617\scriptsize{${\pm}$ 0.045} & 0.624\scriptsize{${\pm}$ 0.015} & 

\multirow{4}{*}{0.664\scriptsize{${\pm}$0.027}}\\
&\textsc{KiDD}-\textsc{LR} &   
0.556\scriptsize{${\pm}$ 0.001} & 0.551\scriptsize{${\pm}$ 0.027} & 0.595\scriptsize{${\pm}$ 0.003} & 
& 
0.586\scriptsize{${\pm}$ 0.055} & 0.586\scriptsize{${\pm}$ 0.014} & 0.629\scriptsize{${\pm}$ 0.019} & 
   \\

&LAVA   &   
\textbf{0.669}\scriptsize{${\pm}$ 0.001}  & 
\textbf{0.683}\scriptsize{${\pm}$ 0.004} & 
\textbf{0.659}\scriptsize{${\pm}$ 0.002} & 

& 
\textbf{0.769}\scriptsize{${\pm}$ 0.014} & 
0.737\scriptsize{${\pm}$ 0.012} & 
0.796\scriptsize{${\pm}$ 0.025} & 
   \\
\cmidrule{2-5} \cmidrule{7-9}
& \cellcolor[HTML]{D3D3D3}\methodname   
& \cellcolor[HTML]{D3D3D3}\underline{0.640}\scriptsize{${\pm}$ 0.002}
& \cellcolor[HTML]{D3D3D3}\underline{0.638}\scriptsize{${\pm}$ 0.006}
& \cellcolor[HTML]{D3D3D3}\underline{0.629}\scriptsize{${\pm}$ 0.000}  & 
&
\cellcolor[HTML]{D3D3D3}\underline{0.731}\scriptsize{${\pm}$ 0.017}
&\cellcolor[HTML]{D3D3D3}\textbf{0.767}\scriptsize{${\pm}$ 0.004}
& \cellcolor[HTML]{D3D3D3}\textbf{0.805}\scriptsize{${\pm}$ 0.024} & 
   \\
\midrule
\bottomrule
\end{tabular}
}
\caption{Performance comparison across data selection methods for \textit{graph size} shift on GCN and GIN. We use \textbf{bold}/\underline{underline} to indicate the 1st/2nd best results.  \methodname\ achieves top-2 performance across all datasets and is the best-performer in most settings. }
\label{appendix:table-graph-select-size-new}
\vspace{-1em}
\end{table*}

\subsection{Comparing data selection methods for \textit{graph density} shift on GAT \& GraphSAGE}
\label{appendix:exp-table-1-density-new}

We conduct the same evaluation as Table~\ref{table:graph-select} on \textit{graph density} shift with GAT and GraphSAGE as backbone model in Table~\ref{appendix:table-graph-select-density-new}.

\begin{table*}[ht]
\centering
\resizebox{1.0\textwidth}{!}{%
\begin{tabular}{@{\extracolsep{\fill}}cc|cccc|cccc}
\toprule
\midrule
\multirow{2.4}{*}{\textbf{Dataset}}&   \multirow{1}{*}{\textbf{GNN Architecture} $\rightarrow$\!\!\!\! }& \multicolumn{4}{c|}{\makecell{\textbf{GAT}}}  & \multicolumn{4}{c}{\makecell{\textbf{GraphSAGE}}} \\
\cmidrule{2-10} 
&\textbf{Selection Method} $\downarrow$\!& $\tau=10\%$ & $\tau=20\%$ & $\tau=50\%$ & Full & $\tau=10\%$ & $\tau=20\%$ & $\tau=50\%$ & Full \\
\midrule
\multirow{4}{*}{\textsc{IMDB-BINARY}}
& Random &       
0.602\scriptsize{${\pm}$ 0.005} & 0.695\scriptsize{${\pm}$ 0.035} & 0.797\scriptsize{${\pm}$ 0.005} &

\multirow{4}{*}{0.807\scriptsize{${\pm}$0.033}}
&    
0.730\scriptsize{${\pm}$ 0.014} & 0.637\scriptsize{${\pm}$ 0.039} & 0.762\scriptsize{${\pm}$ 0.027} &

\multirow{4}{*}{0.823\scriptsize{${\pm}$0.009}}\\
&\textsc{KiDD}-\textsc{LR} &   
0.683\scriptsize{${\pm}$ 0.041} & 0.803\scriptsize{${\pm}$ 0.005} & 0.817\scriptsize{${\pm}$ 0.024} & 
& 
0.662\scriptsize{${\pm}$ 0.054} & 0.785\scriptsize{${\pm}$ 0.025} & 0.775\scriptsize{${\pm}$ 0.054} & 

  \\

& LAVA   &       
\underline{0.818}\scriptsize{${\pm}$ 0.010} & 
\underline{0.857}\scriptsize{${\pm}$ 0.009} & 
\underline{0.885}\scriptsize{${\pm}$ 0.018} & 
 
&    
\underline{0.827}\scriptsize{${\pm}$ 0.005} & 
\underline{0.840}\scriptsize{${\pm}$ 0.021} & 
\underline{0.883}\scriptsize{${\pm}$ 0.012}& 

   \\

\cmidrule{2-5} \cmidrule{7-9}
  

&\cellcolor[HTML]{D3D3D3}\methodname    &   
\cellcolor[HTML]{D3D3D3}\textbf{0.850}\scriptsize{${\pm}$ 0.023} & 
\cellcolor[HTML]{D3D3D3}\textbf{0.865}\scriptsize{${\pm}$ 0.008} & 
\cellcolor[HTML]{D3D3D3}\textbf{0.892}\scriptsize{${\pm}$ 0.012} & 

& 
\cellcolor[HTML]{D3D3D3}\textbf{0.835}\scriptsize{${\pm}$ 0.015} & 
\cellcolor[HTML]{D3D3D3}\textbf{0.852}\scriptsize{${\pm}$ 0.035} & 
\cellcolor[HTML]{D3D3D3}\textbf{0.907}\scriptsize{${\pm}$ 0.005} & 

\\
\midrule
\multirow{4}{*}{\textsc{IMDB-MULTI}}
& Random &       
0.087\scriptsize{${\pm}$ 0.014} & 0.071\scriptsize{${\pm}$ 0.006} & 0.076\scriptsize{${\pm}$ 0.003} & 

\multirow{4}{*}{0.080\scriptsize{${\pm}$0.000}}
&   
0.090\scriptsize{${\pm}$ 0.005} & 0.203\scriptsize{${\pm}$ 0.061} & 0.126\scriptsize{${\pm}$ 0.064} &

\multirow{4}{*}{0.097\scriptsize{${\pm}$0.024}}\\
&\textsc{KiDD}-\textsc{LR} &   
0.176\scriptsize{${\pm}$ 0.024} & 0.121\scriptsize{${\pm}$ 0.044} & 0.158\scriptsize{${\pm}$ 0.036} & 
& 
0.154\scriptsize{${\pm}$ 0.028} & 0.124\scriptsize{${\pm}$ 0.068} & 0.054\scriptsize{${\pm}$ 0.011} & 

   \\

&LAVA   &   
\underline{0.597}\scriptsize{${\pm}$ 0.273} &
\textbf{0.599}\scriptsize{${\pm}$ 0.294} & 
\underline{0.341}\scriptsize{${\pm}$ 0.049} & 
  
& 
\textbf{0.341}\scriptsize{${\pm}$ 0.049} &
\textbf{0.307}\scriptsize{${\pm}$ 0.164} &
\underline{0.328}\scriptsize{${\pm}$ 0.317} &
   \\
\cmidrule{2-5} \cmidrule{7-9}
& \cellcolor[HTML]{D3D3D3}\methodname   & 
\cellcolor[HTML]{D3D3D3}\textbf{0.790}\scriptsize{${\pm}$ 0.000} &  
\cellcolor[HTML]{D3D3D3}\underline{0.589}\scriptsize{${\pm}$ 0.287} &  
\cellcolor[HTML]{D3D3D3}\textbf{0.776}\scriptsize{${\pm}$ 0.039} &  
 
 &
 \cellcolor[HTML]{D3D3D3}\underline{0.306}\scriptsize{${\pm}$ 0.216}&  
 \cellcolor[HTML]{D3D3D3}\underline{0.299}\scriptsize{${\pm}$ 0.282} &  
 \cellcolor[HTML]{D3D3D3}\textbf{0.363}\scriptsize{${\pm}$ 0.238} & 
   \\
\midrule
\multirow{4}{*}{\textsc{MSRC}\_21}
& Random &       
0.462\scriptsize{${\pm}$ 0.029} & 0.763\scriptsize{${\pm}$ 0.007} & 0.857\scriptsize{${\pm}$ 0.018} & 

\multirow{4}{*}{0.860\scriptsize{${\pm}$0.007}}
&   
0.617\scriptsize{${\pm}$ 0.017} & 0.725\scriptsize{${\pm}$ 0.033} & 0.842\scriptsize{${\pm}$ 0.029} &

\multirow{4}{*}{0.874\scriptsize{${\pm}$0.004}}\\
&\textsc{KiDD}-\textsc{LR} &   
0.661\scriptsize{${\pm}$ 0.030} & 0.778\scriptsize{${\pm}$ 0.015} & 0.860\scriptsize{${\pm}$ 0.025} & 

&
0.681\scriptsize{${\pm}$ 0.073} & 0.787\scriptsize{${\pm}$ 0.025} & 0.857\scriptsize{${\pm}$ 0.004} & 

\\

&LAVA   &   
\underline{0.699}\scriptsize{${\pm}$ 0.047} & 
\underline{0.816}\scriptsize{${\pm}$ 0.037} & 
\underline{0.912}\scriptsize{${\pm}$ 0.007} & 

& 
\underline{0.766}\scriptsize{${\pm}$ 0.029} & 
\underline{0.857}\scriptsize{${\pm}$ 0.015}  & 
\underline{0.918}\scriptsize{${\pm}$ 0.011} & 
   \\
\cmidrule{2-5} \cmidrule{7-9}
& \cellcolor[HTML]{D3D3D3}\methodname   
& 
\cellcolor[HTML]{D3D3D3}\textbf{0.716}\scriptsize{${\pm}$ 0.017} & 
\cellcolor[HTML]{D3D3D3}\textbf{0.822}\scriptsize{${\pm}$ 0.004} & 
\cellcolor[HTML]{D3D3D3}\textbf{0.921}\scriptsize{${\pm}$ 0.007} & 
&
\cellcolor[HTML]{D3D3D3}\textbf{0.781}\scriptsize{${\pm}$ 0.026}  &
\cellcolor[HTML]{D3D3D3}\textbf{0.877}\scriptsize{${\pm}$ 0.026} &
\cellcolor[HTML]{D3D3D3}\textbf{0.944}\scriptsize{${\pm}$ 0.011}& 
   \\
\midrule
\multirow{4}{*}{\texttt{ogbg-molbace}}
& Random &       
0.480\scriptsize{${\pm}$ 0.040} & \textbf{0.606}\scriptsize{${\pm}$ 0.085} & 0.637\scriptsize{${\pm}$ 0.075} & 

\multirow{4}{*}{0.583\scriptsize{${\pm}$0.042}}
&   
0.459\scriptsize{${\pm}$ 0.149} & 0.478\scriptsize{${\pm}$ 0.097} & 0.503\scriptsize{${\pm}$ 0.034} & 

\multirow{4}{*}{0.622\scriptsize{${\pm}$0.119}}\\
&\textsc{KiDD}-\textsc{LR} &   
\underline{0.558}\scriptsize{${\pm}$ 0.012} & 0.443\scriptsize{${\pm}$ 0.029} & 0.628\scriptsize{${\pm}$ 0.023} & 

&
0.606\scriptsize{${\pm}$ 0.023} & \underline{0.596}\scriptsize{${\pm}$ 0.079} & \underline{0.607}\scriptsize{${\pm}$ 0.047} & 

\\

&LAVA   &   
\textbf{0.564}\scriptsize{${\pm}$ 0.097} &
0.519\scriptsize{${\pm}$ 0.007} &
\underline{0.696}\scriptsize{${\pm}$ 0.031}  & 

& 
\underline{0.620}\scriptsize{${\pm}$ 0.075}& 
\textbf{0.649}\scriptsize{${\pm}$ 0.004} & 
\textbf{0.651}\scriptsize{${\pm}$ 0.059}  & 
   \\
\cmidrule{2-5} \cmidrule{7-9}
& \cellcolor[HTML]{D3D3D3}\methodname   & 
\cellcolor[HTML]{D3D3D3}0.501\scriptsize{${\pm}$ 0.017}  &
\cellcolor[HTML]{D3D3D3}\underline{0.541}\scriptsize{${\pm}$ 0.048} & 
\cellcolor[HTML]{D3D3D3}\textbf{0.720}\scriptsize{${\pm}$ 0.004} & 
&
\cellcolor[HTML]{D3D3D3}\textbf{0.621}\scriptsize{${\pm}$ 0.067} & 
\cellcolor[HTML]{D3D3D3}0.587\scriptsize{${\pm}$ 0.078} & 
\cellcolor[HTML]{D3D3D3}0.568\scriptsize{${\pm}$ 0.126}   & 
   \\
\midrule
\multirow{4}{*}{\texttt{ogbg-molbbbp}}
& Random &       
0.511\scriptsize{${\pm}$ 0.034} & 0.529\scriptsize{${\pm}$ 0.027} & 0.513\scriptsize{${\pm}$ 0.018} & 
\multirow{4}{*}{0.569\scriptsize{${\pm}$0.030}}
&   
0.463\scriptsize{${\pm}$ 0.012} & 0.385\scriptsize{${\pm}$ 0.032} & 0.468\scriptsize{${\pm}$ 0.008} &

\multirow{4}{*}{0.447\scriptsize{${\pm}$0.008}}\\
&\textsc{KiDD}-\textsc{LR} 
&   
0.444\scriptsize{${\pm}$ 0.050} & 0.405\scriptsize{${\pm}$ 0.021} & 0.434\scriptsize{${\pm}$ 0.025} & 
& 
0.392\scriptsize{${\pm}$ 0.002} & 0.415\scriptsize{${\pm}$ 0.028} & 0.466\scriptsize{${\pm}$ 0.034} & 
   \\

&LAVA   &   
\underline{0.584}\scriptsize{${\pm}$ 0.054}& 
\underline{0.552}\scriptsize{${\pm}$ 0.018} & 
\underline{0.603}\scriptsize{${\pm}$ 0.021} & 

& 
\underline{0.526}\scriptsize{${\pm}$ 0.087} & 
\textbf{0.612}\scriptsize{${\pm}$ 0.005} & 
\underline{0.495}\scriptsize{${\pm}$ 0.029} &
   \\
\cmidrule{2-5} \cmidrule{7-9}
& 
\cellcolor[HTML]{D3D3D3}\methodname   & 
\cellcolor[HTML]{D3D3D3}\textbf{0.617}\scriptsize{${\pm}$ 0.038}  & 
\cellcolor[HTML]{D3D3D3}\textbf{0.578}\scriptsize{${\pm}$ 0.038} & 
\cellcolor[HTML]{D3D3D3}\textbf{0.632}\scriptsize{${\pm}$ 0.036} & 
&
\cellcolor[HTML]{D3D3D3}\textbf{0.580}\scriptsize{${\pm}$ 0.067}  &
\cellcolor[HTML]{D3D3D3}0.558\scriptsize{${\pm}$ 0.064} & 
\cellcolor[HTML]{D3D3D3}\textbf{0.528}\scriptsize{${\pm}$ 0.027}& 
   \\
\midrule
\multirow{4}{*}{\texttt{ogbg-molhiv}}
& Random &       
0.601\scriptsize{${\pm}$ 0.017} & 0.591\scriptsize{${\pm}$ 0.011} & 0.581\scriptsize{${\pm}$ 0.016} & 
\multirow{4}{*}{0.571\scriptsize{${\pm}$0.030}}
&   
0.577\scriptsize{${\pm}$ 0.016} & 0.591\scriptsize{${\pm}$ 0.007} & 0.594\scriptsize{${\pm}$ 0.005} & 

\multirow{4}{*}{0.588\scriptsize{${\pm}$0.003}}\\
&\textsc{KiDD}-\textsc{LR} &   
0.620\scriptsize{${\pm}$ 0.001} & 0.616\scriptsize{${\pm}$ 0.003} & 0.615\scriptsize{${\pm}$ 0.007} &
& 
\textbf{0.607}\scriptsize{${\pm}$ 0.008} & 0.534\scriptsize{${\pm}$ 0.057} & 0.603\scriptsize{${\pm}$ 0.018} & 
   \\

&LAVA   &   
\underline{0.621}\scriptsize{${\pm}$ 0.001}  & 
\textbf{0.631}\scriptsize{${\pm}$ 0.003}  & 
\textbf{0.624}\scriptsize{${\pm}$ 0.014} & 

& 
 0.575\scriptsize{${\pm}$ 0.012} & 
\textbf{0.607}\scriptsize{${\pm}$ 0.008} & 
0.608\scriptsize{${\pm}$ 0.007} & 
   \\
\cmidrule{2-5} \cmidrule{7-9}
& \cellcolor[HTML]{D3D3D3}\methodname   
& \cellcolor[HTML]{D3D3D3}\textbf{0.638}\scriptsize{${\pm}$ 0.001}
& \cellcolor[HTML]{D3D3D3}\underline{0.620}\scriptsize{${\pm}$ 0.002}
& \cellcolor[HTML]{D3D3D3}\underline{0.619}\scriptsize{${\pm}$ 0.004}   & 
&
\cellcolor[HTML]{D3D3D3}\underline{0.599}\scriptsize{${\pm}$ 0.021}
&\cellcolor[HTML]{D3D3D3}\underline{0.598}\scriptsize{${\pm}$ 0.009} 
& \cellcolor[HTML]{D3D3D3}\textbf{0.610}\scriptsize{${\pm}$ 0.006}  & 
   \\
\midrule
\bottomrule
\end{tabular}
}
\caption{Performance comparison across data selection methods for \textit{graph density} shift on GAT and GraphSAGE. We use \textbf{bold}/\underline{underline} to indicate the 1st/2nd best results.  \methodname\ is the best-performer in most settings.}
\label{appendix:table-graph-select-density-new}
\vspace{-1em}
\end{table*}

\subsection{Comparing data selection methods for \textit{graph size} shift on GAT \& GraphSAGE}
\label{appendix:exp-table-1-size-new}

We conduct the same evaluation as Table~\ref{table:graph-select} on \textit{graph size} shift with GAT and GraphSAGE as backbone model in Table~\ref{appendix:table-graph-select-size-new2}.

\begin{table*}[ht]
\centering
\resizebox{1.0\textwidth}{!}{%
\begin{tabular}{@{\extracolsep{\fill}}cc|cccc|cccc}
\toprule
\midrule
\multirow{2.4}{*}{\textbf{Dataset}}&   \multirow{1}{*}{\textbf{GNN Architecture} $\rightarrow$\!\!\!\! }& \multicolumn{4}{c|}{\makecell{\textbf{GAT}}}  & \multicolumn{4}{c}{\makecell{\textbf{GraphSAGE}}} \\
\cmidrule{2-10} 
&\textbf{Selection Method} $\downarrow$\!& $\tau=10\%$ & $\tau=20\%$ & $\tau=50\%$ & Full & $\tau=10\%$ & $\tau=20\%$ & $\tau=50\%$ & Full \\
\midrule
\multirow{4}{*}{\textsc{IMDB-BINARY}}
& Random &       
0.678\scriptsize{${\pm}$ 0.082} & 0.558\scriptsize{${\pm}$ 0.022} & 0.660\scriptsize{${\pm}$ 0.085} &

\multirow{4}{*}{0.595\scriptsize{${\pm}$0.007}}
&    
0.555\scriptsize{${\pm}$ 0.011} & 0.563\scriptsize{${\pm}$ 0.012} & 0.562\scriptsize{${\pm}$ 0.012} & 

\multirow{4}{*}{0.567\scriptsize{${\pm}$0.018}}\\
&\textsc{KiDD}-\textsc{LR} &   
0.683\scriptsize{${\pm}$ 0.071} & 0.587\scriptsize{${\pm}$ 0.081} & 0.665\scriptsize{${\pm}$ 0.098} & 

& 
0.663\scriptsize{${\pm}$ 0.035} & 0.558\scriptsize{${\pm}$ 0.013} & 0.595\scriptsize{${\pm}$ 0.007} & 

  \\

& LAVA   &       
\underline{0.808}\scriptsize{${\pm}$ 0.014} & 
\underline{0.830}\scriptsize{${\pm}$ 0.004} & 
\underline{0.835}\scriptsize{${\pm}$ 0.000} & 
 
&    
\underline{0.807}\scriptsize{${\pm}$ 0.026} & 
\underline{0.808}\scriptsize{${\pm}$ 0.027} & 
\underline{0.830}\scriptsize{${\pm}$ 0.004}& 

   \\

\cmidrule{2-5} \cmidrule{7-9}

&\cellcolor[HTML]{D3D3D3}\methodname    &   
\cellcolor[HTML]{D3D3D3}\textbf{0.835}\scriptsize{${\pm}$ 0.018} & 
\cellcolor[HTML]{D3D3D3}\textbf{0.833}\scriptsize{${\pm}$ 0.005} & 
\cellcolor[HTML]{D3D3D3}\textbf{0.837}\scriptsize{${\pm}$ 0.010} & 

& 
\cellcolor[HTML]{D3D3D3}\textbf{0.808}\scriptsize{${\pm}$ 0.016} & 
\cellcolor[HTML]{D3D3D3}\textbf{0.828}\scriptsize{${\pm}$ 0.024} & 
\cellcolor[HTML]{D3D3D3}\textbf{0.838}\scriptsize{${\pm}$ 0.016} & 

\\
\midrule
\multirow{4}{*}{\textsc{IMDB-MULTI}}
& Random &       
\textbf{0.384}\scriptsize{${\pm}$ 0.014} & 0.408\scriptsize{${\pm}$ 0.004} & 0.384\scriptsize{${\pm}$ 0.034} & 

\multirow{4}{*}{0.374\scriptsize{${\pm}$0.028}}
&   
0.336\scriptsize{${\pm}$ 0.010} & 0.357\scriptsize{${\pm}$ 0.005} & 0.381\scriptsize{${\pm}$ 0.026} &

\multirow{4}{*}{0.391\scriptsize{${\pm}$0.026}}\\
&\textsc{KiDD}-\textsc{LR} &   
\underline{0.366}\scriptsize{${\pm}$ 0.020} & \underline{0.434}\scriptsize{${\pm}$ 0.006} & 0.404\scriptsize{${\pm}$ 0.010} & 

& 
0.339\scriptsize{${\pm}$ 0.030} & \textbf{0.418}\scriptsize{${\pm}$ 0.030} & \underline{0.422}\scriptsize{${\pm}$ 0.011} & 

   \\

&LAVA   &   
0.333\scriptsize{${\pm}$ 0.058} & 
0.417\scriptsize{${\pm}$ 0.015} & 
\underline{0.577}\scriptsize{${\pm}$ 0.014} & 
  
& 
\underline{0.374}\scriptsize{${\pm}$ 0.021} &
\underline{0.389}\scriptsize{${\pm}$ 0.039} &
0.392\scriptsize{${\pm}$ 0.036} &
   \\
\cmidrule{2-5} \cmidrule{7-9}
& \cellcolor[HTML]{D3D3D3}\methodname   & 
\cellcolor[HTML]{D3D3D3}0.342\scriptsize{${\pm}$ 0.050} &
\cellcolor[HTML]{D3D3D3}\textbf{0.537}\scriptsize{${\pm}$ 0.003} &  
\cellcolor[HTML]{D3D3D3}\textbf{0.616}\scriptsize{${\pm}$ 0.002} & 
 
 &
 \cellcolor[HTML]{D3D3D3}\textbf{0.392}\scriptsize{${\pm}$ 0.032}&  
 \cellcolor[HTML]{D3D3D3}0.360\scriptsize{${\pm}$ 0.025}&  
 \cellcolor[HTML]{D3D3D3}\textbf{0.517}\scriptsize{${\pm}$ 0.071}& 
   \\
\midrule
\multirow{4}{*}{\textsc{MSRC}\_21}
& Random &       
0.284\scriptsize{${\pm}$ 0.025} & 0.614\scriptsize{${\pm}$ 0.056} & 0.731\scriptsize{${\pm}$ 0.018} & 

\multirow{4}{*}{0.787\scriptsize{${\pm}$0.004}}
&   
0.497\scriptsize{${\pm}$ 0.023} & 0.412\scriptsize{${\pm}$ 0.050} & 0.725\scriptsize{${\pm}$ 0.022} & 

\multirow{4}{*}{0.810\scriptsize{${\pm}$0.027}}\\
&\textsc{KiDD}-\textsc{LR} &   
\underline{0.626}\scriptsize{${\pm}$ 0.004} & 0.746\scriptsize{${\pm}$ 0.026} & 0.830\scriptsize{${\pm}$ 0.011} & 

&
\underline{0.722}\scriptsize{${\pm}$ 0.030} & 0.798\scriptsize{${\pm}$ 0.029} & 0.789\scriptsize{${\pm}$ 0.026} & 

\\

&LAVA   &   
0.620\scriptsize{${\pm}$ 0.015} & 
\underline{0.798}\scriptsize{${\pm}$ 0.012} & 
\underline{0.909}\scriptsize{${\pm}$ 0.018} & 

& 
0.693\scriptsize{${\pm}$ 0.029} & 
\underline{0.827}\scriptsize{${\pm}$ 0.015} & 
\underline{0.918}\scriptsize{${\pm}$ 0.008}& 
   \\
\cmidrule{2-5} \cmidrule{7-9}
& \cellcolor[HTML]{D3D3D3}\methodname   
& 
\cellcolor[HTML]{D3D3D3}\textbf{0.643}\scriptsize{${\pm}$ 0.039} & 
\cellcolor[HTML]{D3D3D3}\textbf{0.860}\scriptsize{${\pm}$ 0.021} & 
\cellcolor[HTML]{D3D3D3}\textbf{0.947}\scriptsize{${\pm}$ 0.014} & 
&
\cellcolor[HTML]{D3D3D3}\textbf{0.760}\scriptsize{${\pm}$ 0.023}  &
\cellcolor[HTML]{D3D3D3}\textbf{0.842}\scriptsize{${\pm}$ 0.007}  &
\cellcolor[HTML]{D3D3D3}\textbf{0.944}\scriptsize{${\pm}$ 0.004}  & 
   \\
\midrule
\multirow{4}{*}{\texttt{ogbg-molbace}}
& Random &       
0.515\scriptsize{${\pm}$ 0.006} & 0.464\scriptsize{${\pm}$ 0.056} & 0.488\scriptsize{${\pm}$ 0.005} & 

\multirow{4}{*}{0.463\scriptsize{${\pm}$0.004}}
&   
0.523\scriptsize{${\pm}$ 0.048} & 0.463\scriptsize{${\pm}$ 0.020} & 0.583\scriptsize{${\pm}$ 0.027} & 

\multirow{4}{*}{0.487\scriptsize{${\pm}$0.108}}\\
&\textsc{KiDD}-\textsc{LR} &   
0.480\scriptsize{${\pm}$ 0.011} & 0.452\scriptsize{${\pm}$ 0.016} & 0.467\scriptsize{${\pm}$ 0.022} & 

&
0.507\scriptsize{${\pm}$ 0.078} & 0.457\scriptsize{${\pm}$ 0.029} & 0.467\scriptsize{${\pm}$ 0.043} & 

\\

&LAVA   &   
\underline{0.524}\scriptsize{${\pm}$ 0.037}  &
\underline{0.545}\scriptsize{${\pm}$ 0.058}  &
\textbf{0.613}\scriptsize{${\pm}$ 0.080} & 

& 
\textbf{0.650}\scriptsize{${\pm}$ 0.011}& 
\underline{0.481}\scriptsize{${\pm}$ 0.007} & 
\underline{0.550}\scriptsize{${\pm}$ 0.060} & 
   \\
\cmidrule{2-5} \cmidrule{7-9}
& \cellcolor[HTML]{D3D3D3}\methodname   & 
\cellcolor[HTML]{D3D3D3}\textbf{0.529}\scriptsize{${\pm}$ 0.029}  &
\cellcolor[HTML]{D3D3D3}\textbf{0.570}\scriptsize{${\pm}$ 0.062}  & 
\cellcolor[HTML]{D3D3D3}\underline{0.509}\scriptsize{${\pm}$ 0.006}  & 
&
\cellcolor[HTML]{D3D3D3}\underline{0.556}\scriptsize{${\pm}$ 0.040}& 
\cellcolor[HTML]{D3D3D3}\textbf{0.561}\scriptsize{${\pm}$ 0.090} & 
\cellcolor[HTML]{D3D3D3}\textbf{0.564}\scriptsize{${\pm}$ 0.095}   & 
   \\
\midrule
\multirow{4}{*}{\texttt{ogbg-molbbbp}}
& Random &       
0.666\scriptsize{${\pm}$ 0.003} & 0.677\scriptsize{${\pm}$ 0.007} & 0.684\scriptsize{${\pm}$ 0.018} & 

\multirow{4}{*}{0.679\scriptsize{${\pm}$0.004}}
&   
\underline{0.634}\scriptsize{${\pm}$ 0.018} & \underline{0.648}\scriptsize{${\pm}$ 0.028} & 0.641\scriptsize{${\pm}$ 0.025} &

\multirow{4}{*}{0.680\scriptsize{${\pm}$0.010}}\\
&\textsc{KiDD}-\textsc{LR} 
&   
0.594\scriptsize{${\pm}$ 0.017} & 0.596\scriptsize{${\pm}$ 0.020} & 0.650\scriptsize{${\pm}$ 0.002} & 
& 
0.518\scriptsize{${\pm}$ 0.046} & 0.594\scriptsize{${\pm}$ 0.004} & 0.602\scriptsize{${\pm}$ 0.061} & 

   \\

&LAVA   &   
\underline{0.714}\scriptsize{${\pm}$ 0.011} & 
\textbf{0.731}\scriptsize{${\pm}$ 0.019} & 
\underline{0.710}\scriptsize{${\pm}$ 0.036} & 

& 
\textbf{0.639}\scriptsize{${\pm}$ 0.022} & 
0.602\scriptsize{${\pm}$ 0.030} & 
\underline{0.645}\scriptsize{${\pm}$ 0.026} &
   \\
\cmidrule{2-5} \cmidrule{7-9}
& 
\cellcolor[HTML]{D3D3D3}\methodname   & 
\cellcolor[HTML]{D3D3D3}\textbf{0.735}\scriptsize{${\pm}$ 0.021} & 
    \cellcolor[HTML]{D3D3D3}\underline{0.699}\scriptsize{${\pm}$ 0.027} & 
\cellcolor[HTML]{D3D3D3}\textbf{0.713}\scriptsize{${\pm}$ 0.005} & 
&
\cellcolor[HTML]{D3D3D3}0.623\scriptsize{${\pm}$ 0.041} &
\cellcolor[HTML]{D3D3D3}\textbf{0.650}\scriptsize{${\pm}$ 0.019} & 
\cellcolor[HTML]{D3D3D3}\textbf{0.652}\scriptsize{${\pm}$ 0.041} & 
   \\
\midrule
\multirow{4}{*}{\texttt{ogbg-molhiv}}
& Random &       
0.584\scriptsize{${\pm}$ 0.001} & 0.585\scriptsize{${\pm}$ 0.002} & 0.589\scriptsize{${\pm}$ 0.003} & 

\multirow{4}{*}{0.588\scriptsize{${\pm}$0.005}}
&   
0.610\scriptsize{${\pm}$ 0.024} & 0.491\scriptsize{${\pm}$ 0.031} & 0.603\scriptsize{${\pm}$ 0.001} &

\multirow{4}{*}{0.596\scriptsize{${\pm}$0.000}}\\
&\textsc{KiDD}-\textsc{LR} &   
0.586\scriptsize{${\pm}$ 0.001} & 0.584\scriptsize{${\pm}$ 0.001} & 0.584\scriptsize{${\pm}$ 0.004} & 
& 
0.549\scriptsize{${\pm}$ 0.071} & 0.588\scriptsize{${\pm}$ 0.008} & 0.564\scriptsize{${\pm}$ 0.011} & 

   \\

&LAVA   &   
\textbf{0.704}\scriptsize{${\pm}$ 0.005}  & 
\textbf{0.773}\scriptsize{${\pm}$ 0.007} & 
\textbf{0.759}\scriptsize{${\pm}$ 0.002} & 

& 
\textbf{0.721}\scriptsize{${\pm}$ 0.004} & 
\textbf{0.701}\scriptsize{${\pm}$ 0.008} & 
\textbf{0.686}\scriptsize{${\pm}$ 0.011}  & 
   \\
\cmidrule{2-5} \cmidrule{7-9}
& \cellcolor[HTML]{D3D3D3}\methodname   
& \cellcolor[HTML]{D3D3D3}\underline{0.663}\scriptsize{${\pm}$ 0.005}
& \cellcolor[HTML]{D3D3D3}\underline{0.655}\scriptsize{${\pm}$ 0.009}
& \cellcolor[HTML]{D3D3D3}\underline{0.660}\scriptsize{${\pm}$ 0.008}  & 
&
\cellcolor[HTML]{D3D3D3}\underline{0.637}\scriptsize{${\pm}$ 0.002}
&\cellcolor[HTML]{D3D3D3}\underline{0.646}\scriptsize{${\pm}$ 0.019}
& \cellcolor[HTML]{D3D3D3}\underline{0.640}\scriptsize{${\pm}$ 0.007} & 
   \\
\midrule
\bottomrule
\end{tabular}
}
\caption{Performance comparison across data selection methods for \textit{graph size} shift on GAT and GraphSAGE. We use \textbf{bold}/\underline{underline} to indicate the 1st/2nd best results.  \methodname\ achieves top-2 performance across most settings. The under-performance on \texttt{ogbg-molhiv} might due to the reason discussed in Section~\ref{ssec:final-discussion}.}
\label{appendix:table-graph-select-size-new2}
\vspace{-1em}
\end{table*}

\subsection{Comparing GDA and vanilla methods for \textit{graph size} shift}
\label{appendix:exp-table-2-size}

We conduct the same evaluation as Table~\ref{table:gda-vs-graph-select} on \textit{graph size} shift in Table~\ref{appendix:table-graph-vs-gda-size}.
\begin{table}[ht]
\centering
\scalebox{0.64}{
\begin{tabular}{@{\extracolsep{\fill}}ccccccccc}
\toprule
\midrule
&&&\multicolumn{6}{c}{\textbf{Dataset}}\\
\cmidrule{4-9}
\textbf{Type} & \textbf{Model} & \textbf{Data}  & \textsc{IMDB-BINARY}  & \textsc{IMDB-MULTI} & $\textsc{ MSRC\_21 }$ & \texttt{ogbg-molbace} & \texttt{ogbg-molbbbp} & \texttt{ogbg-molhiv} \\
\midrule
\multirow{4}{*}{GDA} &
AdaGCN  & Full &
0.593\scriptsize{${\pm}$ 0.012} &  0.362\scriptsize{${\pm}$ 0.017} & 0.202\scriptsize{${\pm}$ 0.075} &

0.513\scriptsize{${\pm}$ 0.018} &
0.625\scriptsize{${\pm}$ 0.137} &
0.412\scriptsize{${\pm}$ 0.011} \\
& GRADE  & Full &
0.648\scriptsize{${\pm}$ 0.105}&  0.390\scriptsize{${\pm}$ 0.019} & 0.696\scriptsize{${\pm}$ 0.008}&

0.403\scriptsize{${\pm}$ 0.018} &  0.669\scriptsize{${\pm}$ 0.005} & 0.599\scriptsize{${\pm}$ 0.005} \\

& ASN  & Full &
0.633\scriptsize{${\pm}$ 0.054}  &   0.372\scriptsize{${\pm}$ 0.009} & 0.734\scriptsize{${\pm}$ 0.015} & 

0.523\scriptsize{${\pm}$ 0.091} &   0.616\scriptsize{${\pm}$ 0.042} & 0.519\scriptsize{${\pm}$ 0.077} \\
& UDAGCN  & Full &
0.688\scriptsize{${\pm}$ 0.049} & 0.392\scriptsize{${\pm}$ 0.046} & 0.260\scriptsize{${\pm}$ 0.049} &

0.448\scriptsize{${\pm}$ 0.020} &  0.513\scriptsize{${\pm}$ 0.024} & 0.439\scriptsize{${\pm}$ 0.034}\\
\midrule
\multirow{16}{*}{Vanilla} &
\multirow{4}{*}{GCN}
& Random 20\% &
0.612\scriptsize{${\pm}$ 0.008} &   0.354\scriptsize{${\pm}$ 0.008} & 0.497\scriptsize{${\pm}$ 0.011}& 0.504\scriptsize{${\pm}$ 0.022} & 0.635\scriptsize{${\pm}$ 0.042} & 0.579\scriptsize{${\pm}$ 0.004}\\
& &LAVA 20\% & 
0.823\scriptsize{${\pm}$ 0.019}  &   0.426\scriptsize{${\pm}$ 0.003} &  0.825\scriptsize{${\pm}$ 0.014} & 0.574\scriptsize{${\pm}$ 0.067}  & 0.675\scriptsize{${\pm}$ 0.013} & 0.683\scriptsize{${\pm}$ 0.038}  \\
\cmidrule{3-9}
& & \cellcolor[HTML]{D3D3D3}\methodname\  20\%  &  
 \cellcolor[HTML]{D3D3D3}0.825\scriptsize{${\pm}$ 0.018} &   \cellcolor[HTML]{D3D3D3}\underline{0.524}\scriptsize{${\pm}$ 0.016} &  \cellcolor[HTML]{D3D3D3}0.836\scriptsize{${\pm}$ 0.017}  & \cellcolor[HTML]{D3D3D3}0.599\scriptsize{${\pm}$ 0.037} & \cellcolor[HTML]{D3D3D3}0.671\scriptsize{${\pm}$ 0.015}  & \cellcolor[HTML]{D3D3D3}0.638\scriptsize{${\pm}$ 0.006}  \\
\cmidrule{2-3} \cmidrule{4-9}
& \multirow{4}{*}{GIN}
& Random 20\% &      
0.582\scriptsize{${\pm}$ 0.009} & 0.372\scriptsize{${\pm}$ 0.039} & 0.418\scriptsize{${\pm}$ 0.008} &0.471\scriptsize{${\pm}$ 0.092} & 0.633\scriptsize{${\pm}$ 0.043} & 0.617\scriptsize{${\pm}$ 0.045}\\
& & LAVA 20\%&      
0.830\scriptsize{${\pm}$ 0.011} &  0.388\scriptsize{${\pm}$ 0.018} & 0.810\scriptsize{${\pm}$ 0.004} & \textbf{0.641}\scriptsize{${\pm}$ 0.027} & 
\underline{0.889}\scriptsize{${\pm}$ 0.016}  & 0.737\scriptsize{${\pm}$ 0.012} \\
\cmidrule{3-9}
& &  \cellcolor[HTML]{D3D3D3}\methodname\ 20\%&  
 \cellcolor[HTML]{D3D3D3}0.820\scriptsize{${\pm}$ 0.008} &   
 \cellcolor[HTML]{D3D3D3}0.497\scriptsize{${\pm}$ 0.015} & \cellcolor[HTML]{D3D3D3}0.813\scriptsize{${\pm}$ 0.008} & \cellcolor[HTML]{D3D3D3}\underline{0.618}\scriptsize{${\pm}$ 0.061} & \cellcolor[HTML]{D3D3D3}\textbf{0.890}\scriptsize{${\pm}$ 0.011}  & \cellcolor[HTML]{D3D3D3}\underline{0.767}\scriptsize{${\pm}$ 0.004} \\
 \cmidrule{2-3} \cmidrule{4-9}
& \multirow{4}{*}{GAT}
& Random 20\% &      
0.558\scriptsize{${\pm}$ 0.022} & 0.408\scriptsize{${\pm}$ 0.004} & 0.614\scriptsize{${\pm}$ 0.056} & 0.464\scriptsize{${\pm}$ 0.056} & 0.677\scriptsize{${\pm}$ 0.007} & 0.585\scriptsize{${\pm}$ 0.002} \\
& & LAVA 20\%&      
\underline{0.830}\scriptsize{${\pm}$ 0.004} & 0.417\scriptsize{${\pm}$ 0.015} & 0.798\scriptsize{${\pm}$ 0.012} & 0.545\scriptsize{${\pm}$ 0.058} & 0.731\scriptsize{${\pm}$ 0.019} & \textbf{0.773}\scriptsize{${\pm}$ 0.007} \\
\cmidrule{3-9}
& &  \cellcolor[HTML]{D3D3D3}\methodname\ 20\%&  
 \cellcolor[HTML]{D3D3D3}\textbf{0.833}\scriptsize{${\pm}$ 0.005} &   \cellcolor[HTML]{D3D3D3}\textbf{0.537}\scriptsize{${\pm}$ 0.003} & \cellcolor[HTML]{D3D3D3}\textbf{0.860}\scriptsize{${\pm}$ 0.021} & \cellcolor[HTML]{D3D3D3}0.570\scriptsize{${\pm}$ 0.062} & \cellcolor[HTML]{D3D3D3}0.699\scriptsize{${\pm}$ 0.027} & \cellcolor[HTML]{D3D3D3}0.655\scriptsize{${\pm}$ 0.009} \\
 \cmidrule{2-3} \cmidrule{4-9}
& \multirow{4}{*}{GraphSAGE}
& Random 20\% &     
0.563\scriptsize{${\pm}$ 0.012} & 0.357\scriptsize{${\pm}$ 0.005} &
0.412\scriptsize{${\pm}$ 0.050} & 0.463\scriptsize{${\pm}$ 0.020} & 0.648\scriptsize{${\pm}$ 0.028} & 0.491\scriptsize{${\pm}$ 0.031}\\
& & LAVA 20\%&      
0.808\scriptsize{${\pm}$ 0.027} & 0.389\scriptsize{${\pm}$ 0.039} & 0.827\scriptsize{${\pm}$ 0.015} & 0.481\scriptsize{${\pm}$ 0.007} & 0.602\scriptsize{${\pm}$ 0.030} & 0.701\scriptsize{${\pm}$ 0.008} \\
\cmidrule{3-9}
& &  \cellcolor[HTML]{D3D3D3}\methodname\ 20\%&  
 \cellcolor[HTML]{D3D3D3}0.828\scriptsize{${\pm}$ 0.024} &   \cellcolor[HTML]{D3D3D3}0.360\scriptsize{${\pm}$ 0.025} & \cellcolor[HTML]{D3D3D3}\underline{0.842}\scriptsize{${\pm}$ 0.007} & \cellcolor[HTML]{D3D3D3}0.561\scriptsize{${\pm}$ 0.090} & \cellcolor[HTML]{D3D3D3}0.650\scriptsize{${\pm}$ 0.019} & \cellcolor[HTML]{D3D3D3}0.646\scriptsize{${\pm}$ 0.019} \\
\midrule
\bottomrule
\end{tabular}
}
\vspace{0.2em}
\caption{Performance comparison across GDA and vanilla methods for \textit{graph size} shift. We use \textbf{bold}/\underline{underline} to indicate the 1st/2nd best results. \methodname\ can consistently achieve top-2 performance across all datasets and is the best performer in most settings.
}
\label{appendix:table-graph-vs-gda-size}
\end{table}

\subsection{Enhancing GDA methods for \textit{graph size} shift}

\label{appendix:exp-table-3-size}
We conduct the same evaluation as Table~\ref{table:graph-enhancer} on \textit{graph size} shift in Table~\ref{appendix:table-graph-enhancer-size}.

\begin{table*}[ht]
\centering
\resizebox{\textwidth}{!}{%
\begin{tabular}{@{\extracolsep{\fill}}cc|cccc|cccc}
\toprule
\midrule
\multirow{2.4}{*}{\textbf{Dataset}}&   \multirow{1}{*}{\textbf{GDA Method} $\rightarrow$\!\!\!\!\!\!} & \multicolumn{4}{c|}{\makecell{\textbf{AdaGCN}}}  & \multicolumn{4}{c}{\makecell{\textbf{GRADE}}} \\
\cmidrule{2-10} 
&\textbf{Selection Method} $\downarrow$\!\!& $\tau=10\%$ & $\tau=20\%$ & $\tau=50\%$ & Full & $\tau=10\%$ & $\tau=20\%$ & $\tau=50\%$ & Full \\
\midrule
\multirow{3}{*}{\textsc{IMDB-BINARY}}
& Random &       
0.582\scriptsize{${\pm}$ 0.091} & 0.520\scriptsize{${\pm}$ 0.103} & 0.455\scriptsize{${\pm}$ 0.120} & 
\multirow{3}{*}{0.593\scriptsize{${\pm}$ 0.012}}
&    
0.572\scriptsize{${\pm}$ 0.111} & 0.522\scriptsize{${\pm}$ 0.045} & 0.613\scriptsize{${\pm}$ 0.095} &

\multirow{3}{*}{0.648\scriptsize{${\pm}$ 0.105}}\\

& LAVA  &       
\underline{0.818}\scriptsize{${\pm}$ 0.012} &
\underline{0.815}\scriptsize{${\pm}$ 0.005} &
\underline{0.810}\scriptsize{${\pm}$ 0.013} &
 
&    
\underline{0.813}\scriptsize{${\pm}$ 0.007} &
\underline{0.814}\scriptsize{${\pm}$ 0.007} &
\underline{0.816}\scriptsize{${\pm}$ 0.005} &

\\

\cmidrule{2-5} \cmidrule{7-9}
&\cellcolor[HTML]{D3D3D3}\methodname   
&\cellcolor[HTML]{D3D3D3}\textbf{0.834}\scriptsize{${\pm}$ 0.014}
&\cellcolor[HTML]{D3D3D3}\textbf{0.830}\scriptsize{${\pm}$ 0.010}
&\cellcolor[HTML]{D3D3D3}\textbf{0.822}\scriptsize{${\pm}$ 0.022} 
& 

&\cellcolor[HTML]{D3D3D3}\textbf{0.814}\scriptsize{${\pm}$ 0.013}
&\cellcolor[HTML]{D3D3D3}\textbf{0.826}\scriptsize{${\pm}$ 0.007}
&\cellcolor[HTML]{D3D3D3}\textbf{0.827}\scriptsize{${\pm}$ 0.013} 
& 

\\
\midrule
\multirow{3}{*}{\textsc{IMDB-MULTI}}
& Random &       
0.261\scriptsize{${\pm}$ 0.064} & 0.247\scriptsize{${\pm}$ 0.052} & 0.252\scriptsize{${\pm}$ 0.059} &

\multirow{3}{*}{0.362\scriptsize{${\pm}$ 0.017}}
&    
0.312\scriptsize{${\pm}$ 0.034} & 0.280\scriptsize{${\pm}$ 0.000} & 0.282\scriptsize{${\pm}$ 0.030} & 

\multirow{3}{*}{0.390\scriptsize{${\pm}$ 0.019}}\\

& LAVA  &       
\underline{0.374}\scriptsize{${\pm}$ 0.055} & 
\underline{0.385}\scriptsize{${\pm}$ 0.080} & 
\underline{0.368}\scriptsize{${\pm}$ 0.098} & 

&    
\underline{0.386}\scriptsize{${\pm}$ 0.047} & 
\underline{0.407}\scriptsize{${\pm}$ 0.076} & 
\underline{0.411}\scriptsize{${\pm}$ 0.050} &

   \\

\cmidrule{2-5} \cmidrule{7-9}
&\cellcolor[HTML]{D3D3D3}\methodname   
&\cellcolor[HTML]{D3D3D3}\textbf{0.386}\scriptsize{${\pm}$ 0.053}
&\cellcolor[HTML]{D3D3D3}\textbf{0.442}\scriptsize{${\pm}$ 0.085} 
&\cellcolor[HTML]{D3D3D3}\textbf{0.509}\scriptsize{${\pm}$ 0.114}  & 

&\cellcolor[HTML]{D3D3D3}\textbf{0.401}\scriptsize{${\pm}$ 0.076} 
&\cellcolor[HTML]{D3D3D3}\textbf{0.411}\scriptsize{${\pm}$ 0.076} 
&\cellcolor[HTML]{D3D3D3}\textbf{0.503}\scriptsize{${\pm}$ 0.112} & 
\\
\midrule
\multirow{3}{*}{\textsc{MSRC}\_21}
& Random &       
0.084\scriptsize{${\pm}$ 0.028} & 0.079\scriptsize{${\pm}$ 0.022} & 0.114\scriptsize{${\pm}$ 0.030} &

\multirow{3}{*}{0.202\scriptsize{${\pm}$ 0.075}}
&    
0.137\scriptsize{${\pm}$ 0.012} & 0.379\scriptsize{${\pm}$ 0.053} & 0.667\scriptsize{${\pm}$ 0.055} &

\multirow{3}{*}{0.696\scriptsize{${\pm}$ 0.008}}\\

& LAVA  &       
\underline{0.377}\scriptsize{${\pm}$ 0.029} & 
\textbf{0.472}\scriptsize{${\pm}$ 0.030} & 
\underline{0.540}\scriptsize{${\pm}$ 0.103} & 

&    
\underline{0.532}\scriptsize{${\pm}$ 0.029} & 
\underline{0.728}\scriptsize{${\pm}$ 0.038} & 
\underline{0.854}\scriptsize{${\pm}$ 0.035} &

   \\

\cmidrule{2-5} \cmidrule{7-9}
&\cellcolor[HTML]{D3D3D3}\methodname   
&\cellcolor[HTML]{D3D3D3}\textbf{0.411}\scriptsize{${\pm}$ 0.075} 
&\cellcolor[HTML]{D3D3D3}\underline{0.465}\scriptsize{${\pm}$ 0.042} 
&\cellcolor[HTML]{D3D3D3}\textbf{0.593}\scriptsize{${\pm}$ 0.071} & 

&\cellcolor[HTML]{D3D3D3}\textbf{0.553}\scriptsize{${\pm}$ 0.043} 
&\cellcolor[HTML]{D3D3D3}\textbf{0.744}\scriptsize{${\pm}$ 0.044} 
&\cellcolor[HTML]{D3D3D3}\textbf{0.867}\scriptsize{${\pm}$ 0.013} & 

\\
\midrule
\multirow{3}{*}{\texttt{ogbg-molbace}}
& Random &       
0.498\scriptsize{${\pm}$ 0.091} & 0.477\scriptsize{${\pm}$ 0.038} & 0.498\scriptsize{${\pm}$ 0.063} &

\multirow{3}{*}{0.513\scriptsize{${\pm}$ 0.018}}
&    
0.478\scriptsize{${\pm}$ 0.005} & 0.468\scriptsize{${\pm}$ 0.074} & 0.475\scriptsize{${\pm}$ 0.046} &

\multirow{3}{*}{0.403\scriptsize{${\pm}$ 0.018}}\\

& LAVA  &       
\underline{0.510}\scriptsize{${\pm}$ 0.027} & 
\underline{0.523}\scriptsize{${\pm}$ 0.066}&
\underline{0.529}\scriptsize{${\pm}$ 0.056}& 

&    
\underline{0.509}\scriptsize{${\pm}$ 0.055}& 
\textbf{0.559}\scriptsize{${\pm}$ 0.026} &
\textbf{0.552}\scriptsize{${\pm}$ 0.033} &

   \\

\cmidrule{2-5} \cmidrule{7-9}
&\cellcolor[HTML]{D3D3D3}\methodname   
&\cellcolor[HTML]{D3D3D3}\textbf{0.524}\scriptsize{${\pm}$ 0.057}
&\cellcolor[HTML]{D3D3D3}\textbf{0.560}\scriptsize{${\pm}$ 0.053}
&\cellcolor[HTML]{D3D3D3}\textbf{0.550}\scriptsize{${\pm}$ 0.030}  & 

&\cellcolor[HTML]{D3D3D3}\textbf{0.556}\scriptsize{${\pm}$ 0.069} 
&\cellcolor[HTML]{D3D3D3}\underline{0.508}\scriptsize{${\pm}$ 0.025}
&\cellcolor[HTML]{D3D3D3}\underline{0.512}\scriptsize{${\pm}$ 0.022} 
& 

\\

\midrule
\multirow{3}{*}{\texttt{ogbg-molbbbp}}
& Random &       
0.539\scriptsize{${\pm}$ 0.020} & 0.600\scriptsize{${\pm}$ 0.044} & 0.538\scriptsize{${\pm}$ 0.083} &

\multirow{3}{*}{0.625\scriptsize{${\pm}$ 0.137}}
&    
0.632\scriptsize{${\pm}$ 0.006} & \textbf{0.648}\scriptsize{${\pm}$ 0.002} & \textbf{0.650}\scriptsize{${\pm}$ 0.002} &

\multirow{3}{*}{0.669\scriptsize{${\pm}$ 0.005} }\\

& LAVA  &       
\underline{0.583}\scriptsize{${\pm}$ 0.075} & 
\underline{0.653}\scriptsize{${\pm}$ 0.007} & 
\underline{0.657}\scriptsize{${\pm}$ 0.004} & 

&    
\underline{0.631}\scriptsize{${\pm}$ 0.011} & 
0.640\scriptsize{${\pm}$ 0.001} & 
0.637\scriptsize{${\pm}$ 0.003} &

   \\

\cmidrule{2-5} \cmidrule{7-9}
&\cellcolor[HTML]{D3D3D3}\methodname   
&\cellcolor[HTML]{D3D3D3}\textbf{0.662}\scriptsize{${\pm}$ 0.020}
&\cellcolor[HTML]{D3D3D3}\textbf{0.654}\scriptsize{${\pm}$ 0.004}
&\cellcolor[HTML]{D3D3D3}\textbf{0.664}\scriptsize{${\pm}$ 0.010} & 

&\cellcolor[HTML]{D3D3D3}\textbf{0.636}\scriptsize{${\pm}$ 0.010}
&\cellcolor[HTML]{D3D3D3}\underline{0.641}\scriptsize{${\pm}$ 0.005}
&\cellcolor[HTML]{D3D3D3}\underline{0.639}\scriptsize{${\pm}$ 0.013}& 

\\

\midrule
\multirow{3}{*}{\texttt{ogbg-molhiv}}
& Random &       
0.356\scriptsize{${\pm}$ 0.023} & 0.358\scriptsize{${\pm}$ 0.013} & 0.364\scriptsize{${\pm}$ 0.011} &

\multirow{3}{*}{0.412\scriptsize{${\pm}$ 0.011}}
&    
0.588\scriptsize{${\pm}$ 0.014} & 0.615\scriptsize{${\pm}$ 0.012} & 0.592\scriptsize{${\pm}$ 0.010} &

\multirow{3}{*}{0.599\scriptsize{${\pm}$ 0.005}}\\

& LAVA  &       
\underline{0.382}\scriptsize{${\pm}$ 0.035} & 
\textbf{0.403}\scriptsize{${\pm}$ 0.033} & 
\underline{0.384}\scriptsize{${\pm}$ 0.041} & 

&    
\textbf{0.673}\scriptsize{${\pm}$ 0.004} & 
\textbf{0.681}\scriptsize{${\pm}$ 0.002} & 
\textbf{0.668}\scriptsize{${\pm}$ 0.009} &

   \\

\cmidrule{2-5} \cmidrule{7-9}
&\cellcolor[HTML]{D3D3D3}\methodname   
&\cellcolor[HTML]{D3D3D3}\textbf{0.393}\scriptsize{${\pm}$ 0.040} 
&\cellcolor[HTML]{D3D3D3}\underline{0.387}\scriptsize{${\pm}$ 0.068}
&\cellcolor[HTML]{D3D3D3}\textbf{0.395}\scriptsize{${\pm}$ 0.040} & 

&\cellcolor[HTML]{D3D3D3}\underline{0.658}\scriptsize{${\pm}$ 0.004}
&\cellcolor[HTML]{D3D3D3}\underline{0.647}\scriptsize{${\pm}$ 0.005}
&\cellcolor[HTML]{D3D3D3}\underline{0.642}\scriptsize{${\pm}$ 0.005} & 

\\

\midrule
\midrule
\multirow{2.4}{*}{\textbf{Dataset}}&   \multirow{1}{*}{\textbf{GDA Method} $\rightarrow$\!\!\!\!\!\!} & \multicolumn{4}{c|}{\makecell{\textbf{ASN}}}  & \multicolumn{4}{c}{\makecell{\textbf{UDAGCN}}} \\
\cmidrule{2-10} 
&\textbf{Selection Method} $\downarrow$\!\!& $\tau=10\%$ & $\tau=20\%$ & $\tau=50\%$ & Full & $\tau=10\%$ & $\tau=20\%$ & $\tau=50\%$ & Full \\
\midrule
\multirow{3}{*}{\textsc{IMDB-BINARY}}
& Random &       
0.613\scriptsize{${\pm}$ 0.110} & 0.568\scriptsize{${\pm}$ 0.078} & 0.515\scriptsize{${\pm}$ 0.024} &

\multirow{3}{*}{0.633\scriptsize{${\pm}$ 0.054}}
&    
0.507\scriptsize{${\pm}$ 0.077} & 0.467\scriptsize{${\pm}$ 0.029} & 0.605\scriptsize{${\pm}$ 0.067} &

\multirow{3}{*}{0.688\scriptsize{${\pm}$ 0.049}}\\

& LAVA  &       
\underline{0.817}\scriptsize{${\pm}$ 0.012} & 
\underline{0.810}\scriptsize{${\pm}$ 0.012} & 
\textbf{0.84}7\scriptsize{${\pm}$ 0.017} &

&    
0.817\scriptsize{${\pm}$ 0.012} &
0.811\scriptsize{${\pm}$ 0.005} &
\textbf{0.837}\scriptsize{${\pm}$ 0.017} &

   \\

\cmidrule{2-5} \cmidrule{7-9}
&\cellcolor[HTML]{D3D3D3}\methodname   
&\cellcolor[HTML]{D3D3D3}\textbf{0.825}\scriptsize{${\pm}$ 0.007} 
&\cellcolor[HTML]{D3D3D3}\textbf{0.819}\scriptsize{${\pm}$ 0.024}
&\cellcolor[HTML]{D3D3D3}\underline{0.834}\scriptsize{${\pm}$ 0.012} & 

&\cellcolor[HTML]{D3D3D3}\textbf{0.840}\scriptsize{${\pm}$ 0.008}
&\cellcolor[HTML]{D3D3D3}\textbf{0.831}\scriptsize{${\pm}$ 0.009} 
&\cellcolor[HTML]{D3D3D3}\underline{0.816}\scriptsize{${\pm}$ 0.016} & 
\\
\midrule
\multirow{3}{*}{\textsc{IMDB-MULTI}}
& Random &       
0.126\scriptsize{${\pm}$ 0.013} & 0.101\scriptsize{${\pm}$ 0.058} & 0.156\scriptsize{${\pm}$ 0.039} & 

\multirow{3}{*}{0.372\scriptsize{${\pm}$ 0.009} }
&    
0.340\scriptsize{${\pm}$ 0.080} & 0.306\scriptsize{${\pm}$ 0.019} & 0.307\scriptsize{${\pm}$ 0.033} & 

\multirow{3}{*}{0.392\scriptsize{${\pm}$ 0.046}}\\

& LAVA  &       
\underline{0.379}\scriptsize{${\pm}$ 0.050} & 
\underline{0.445}\scriptsize{${\pm}$ 0.057} & 
\textbf{0.593}\scriptsize{${\pm}$ 0.004} & 

&    
\underline{0.348}\scriptsize{${\pm}$ 0.051} & 
\underline{0.387}\scriptsize{${\pm}$ 0.093} & 
\textbf{0.519}\scriptsize{${\pm}$ 0.120} & 
\\

\cmidrule{2-5} \cmidrule{7-9}
&\cellcolor[HTML]{D3D3D3}\methodname   
&\cellcolor[HTML]{D3D3D3}\textbf{0.425}\scriptsize{${\pm}$ 0.015}
&\cellcolor[HTML]{D3D3D3}\textbf{0.455}\scriptsize{${\pm}$ 0.097} 
&\cellcolor[HTML]{D3D3D3}\underline{0.577}\scriptsize{${\pm}$ 0.006} & 

&\cellcolor[HTML]{D3D3D3}\textbf{0.390}\scriptsize{${\pm}$ 0.055}  
&\cellcolor[HTML]{D3D3D3}\textbf{0.444}\scriptsize{${\pm}$ 0.089} 
&\cellcolor[HTML]{D3D3D3}\underline{0.451}\scriptsize{${\pm}$ 0.145} & 
\\
\midrule
\multirow{3}{*}{\textsc{MSRC\_21}}
& Random &       
0.481\scriptsize{${\pm}$ 0.071} & 0.277\scriptsize{${\pm}$ 0.039} & 0.556\scriptsize{${\pm}$ 0.012} & 

\multirow{3}{*}{0.734\scriptsize{${\pm}$ 0.015} }
&    
0.151\scriptsize{${\pm}$ 0.072} & 0.204\scriptsize{${\pm}$ 0.065} & 0.209\scriptsize{${\pm}$ 0.062} & 

\multirow{3}{*}{0.260\scriptsize{${\pm}$ 0.049}}\\

& LAVA  &       
\underline{0.661}\scriptsize{${\pm}$ 0.027} & 
\underline{0.779}\scriptsize{${\pm}$ 0.039} & 
\underline{0.867}\scriptsize{${\pm}$ 0.017} & 

&    
\underline{0.435}\scriptsize{${\pm}$ 0.024} & 
\textbf{0.498}\scriptsize{${\pm}$ 0.090} & 
\underline{0.563}\scriptsize{${\pm}$ 0.099} & 

\\

\cmidrule{2-5} \cmidrule{7-9}
&\cellcolor[HTML]{D3D3D3}\methodname   
&\cellcolor[HTML]{D3D3D3}\textbf{0.686}\scriptsize{${\pm}$ 0.022}  
&\cellcolor[HTML]{D3D3D3}\textbf{0.796}\scriptsize{${\pm}$ 0.020} 
&\cellcolor[HTML]{D3D3D3}\textbf{0.868}\scriptsize{${\pm}$ 0.034} & 

&\cellcolor[HTML]{D3D3D3}\textbf{0.465}\scriptsize{${\pm}$ 0.051} 
&\cellcolor[HTML]{D3D3D3}\underline{0.470}\scriptsize{${\pm}$ 0.097}
&\cellcolor[HTML]{D3D3D3}\textbf{0.616}\scriptsize{${\pm}$ 0.055} & 

\\

\midrule
\multirow{3}{*}{\texttt{ogbg-molbace}}
& Random &       
0.465\scriptsize{${\pm}$ 0.048} & 0.440\scriptsize{${\pm}$ 0.049} & 0.466\scriptsize{${\pm}$ 0.060} &

\multirow{3}{*}{0.523\scriptsize{${\pm}$ 0.091}}
&    
0.485\scriptsize{${\pm}$ 0.017} & 0.503\scriptsize{${\pm}$ 0.044} & 0.544\scriptsize{${\pm}$ 0.011} &

\multirow{3}{*}{0.448\scriptsize{${\pm}$ 0.020}}\\

& LAVA  &       
\underline{0.496}\scriptsize{${\pm}$ 0.077} & 
\underline{0.560}\scriptsize{${\pm}$ 0.032} & 
\textbf{0.596}\scriptsize{${\pm}$ 0.052} & 

&    
\underline{0.499}\scriptsize{${\pm}$ 0.036} & 
\textbf{0.553}\scriptsize{${\pm}$ 0.041} & 
\underline{0.517}\scriptsize{${\pm}$ 0.012} &

   \\

\cmidrule{2-5} \cmidrule{7-9}
&\cellcolor[HTML]{D3D3D3}\methodname   
&\cellcolor[HTML]{D3D3D3}\textbf{0.565}\scriptsize{${\pm}$ 0.073}
&\cellcolor[HTML]{D3D3D3}\textbf{0.596}\scriptsize{${\pm}$ 0.053}
&\cellcolor[HTML]{D3D3D3}\underline{0.546}\scriptsize{${\pm}$ 0.023} & 

&\cellcolor[HTML]{D3D3D3}\textbf{0.521}\scriptsize{${\pm}$ 0.002}
&\cellcolor[HTML]{D3D3D3}\underline{0.519}\scriptsize{${\pm}$ 0.022}
&\cellcolor[HTML]{D3D3D3}\textbf{0.555}\scriptsize{${\pm}$ 0.024} & 
\\

\midrule
\multirow{3}{*}{\texttt{ogbg-molbbbp}}
& Random &       
0.537\scriptsize{${\pm}$ 0.091} & 0.530\scriptsize{${\pm}$ 0.076} & 0.545\scriptsize{${\pm}$ 0.062} & 

\multirow{3}{*}{0.616\scriptsize{${\pm}$ 0.042}}
&    
0.549\scriptsize{${\pm}$ 0.031} & 0.568\scriptsize{${\pm}$ 0.043} & 0.536\scriptsize{${\pm}$ 0.008} & 

\multirow{3}{*}{0.513\scriptsize{${\pm}$ 0.024} }\\

& LAVA  &       
\underline{0.606}\scriptsize{${\pm}$ 0.024} & 
\underline{0.635}\scriptsize{${\pm}$ 0.008} & 
\underline{0.646}\scriptsize{${\pm}$ 0.000} & 

&    
\underline{0.655}\scriptsize{${\pm}$ 0.005} & 
\underline{0.649}\scriptsize{${\pm}$ 0.003} & 
\underline{0.673}\scriptsize{${\pm}$ 0.011}&

   \\

\cmidrule{2-5} \cmidrule{7-9}
&\cellcolor[HTML]{D3D3D3}\methodname   
&\cellcolor[HTML]{D3D3D3}\textbf{0.621}\scriptsize{${\pm}$ 0.016}
&\cellcolor[HTML]{D3D3D3}\textbf{0.640}\scriptsize{${\pm}$ 0.019}
&\cellcolor[HTML]{D3D3D3}\textbf{0.650}\scriptsize{${\pm}$ 0.017}& 

&\cellcolor[HTML]{D3D3D3}\textbf{0.660}\scriptsize{${\pm}$ 0.008}
&\cellcolor[HTML]{D3D3D3}\textbf{0.674}\scriptsize{${\pm}$ 0.011} 
&\cellcolor[HTML]{D3D3D3}\textbf{0.677}\scriptsize{${\pm}$ 0.027} & 

\\

\midrule
\multirow{3}{*}{\texttt{ogbg-molhiv}}
& Random &       
0.385\scriptsize{${\pm}$ 0.023} & \underline{0.459}\scriptsize{${\pm}$ 0.086} & 0.397\scriptsize{${\pm}$ 0.070} & 

\multirow{3}{*}{0.519\scriptsize{${\pm}$ 0.077} }
&    
\textbf{0.446}\scriptsize{${\pm}$ 0.041} & 0.412\scriptsize{${\pm}$ 0.021} & 0.409\scriptsize{${\pm}$ 0.014} & 

\multirow{3}{*}{0.439\scriptsize{${\pm}$ 0.034}}\\

& LAVA  &       
\textbf{0.449}\scriptsize{${\pm}$ 0.058} & 
\textbf{0.465}\scriptsize{${\pm}$ 0.088}  & 
\underline{0.399}\scriptsize{${\pm}$ 0.074} & 

&    
0.426\scriptsize{${\pm}$ 0.021} & 
\underline{0.431}\scriptsize{${\pm}$ 0.042}  & 
\textbf{0.433}\scriptsize{${\pm}$ 0.017} &

   \\

\cmidrule{2-5} \cmidrule{7-9}
&\cellcolor[HTML]{D3D3D3}\methodname   
&\cellcolor[HTML]{D3D3D3}\underline{0.435}\scriptsize{${\pm}$ 0.044}
&\cellcolor[HTML]{D3D3D3}0.423\scriptsize{${\pm}$ 0.096}
&\cellcolor[HTML]{D3D3D3}\textbf{0.474}\scriptsize{${\pm}$ 0.094} & 

&\cellcolor[HTML]{D3D3D3}\underline{0.433}\scriptsize{${\pm}$ 0.020}
&\cellcolor[HTML]{D3D3D3}\textbf{0.434}\scriptsize{${\pm}$ 0.008} 
&\cellcolor[HTML]{D3D3D3}\underline{0.395}\scriptsize{${\pm}$ 0.015} & 

\\
\midrule
\bottomrule
\end{tabular}
}
\caption{Performance comparison across combinations of GDA methods and data selection methods for \textit{graph size} shift. We use \textbf{bold}/\underline{underline} to indicate the 1st/2nd best results. \methodname\ achieves the best performance in most settings.} 
\label{appendix:table-graph-enhancer-size}
\end{table*}


\subsection{Validation-label-free setting}
\label{appendix:exp-label-free}

While we originally consider $c$ as tunable parameter, we acknowledge that the existence of validation labels will be implicitly required when $c \neq 0$, which might not be practical under some real-world scenarios. Here, we pick two GNN backbones (i.e. GCN~\cite{kipf2016semi} and GIN~\cite{xu2018powerful}) evaluate the effectiveness of \methodname\ with $c=0$ (i.e. validation-label-free). The result is shown in Table~\ref{appendix:table-label-free}. For simplicity, we only report relative performance improvement (\%) of \methodname\ over the strongest baseline under all settings. We can observe that the advantage of \methodname\ remains comparable to what is reported in the main text.

\begin{table}[t]
\centering
\begin{tabular}{ccccccc}
\toprule
\midrule
Dataset &
\makecell{GCN \\ ($\tau$=10\%)} &
\makecell{GCN \\ ($\tau$=20\%)} &
\makecell{GCN \\ ($\tau$=50\%)} &
\makecell{GIN \\ ($\tau$=10\%)} &
\makecell{GIN \\ ($\tau$=20\%)} &
\makecell{GIN \\ ($\tau$=50\%)} \\
\midrule
\textsc{IMDB-BINARY}      & +11.13 & +9.92  & +2.76  & +4.63  & +5.16  & +13.37 \\
\textsc{IMDB-MULTI}       & +221.31 & +201.09 & +189.61 & +45.78 & +95.01 & +87.04 \\
\textsc{MSRC\_21}         & +1.99  & +1.36  & +3.01  & +2.36  & +0.09  & +2.88  \\
\texttt{ogbg-molbace}     & +1.32  & -4.39  & +13.17 & -16.32 & +2.84  & +7.48  \\
\texttt{ogbg-molbbbp}     & +0.00  & -14.46 & +0.00  & -0.08  & +2.18  & +2.22  \\
\texttt{ogbg-molhiv}      & +3.05  & +0.75  & +0.48  & -1.24  & +0.35  & +2.52  \\
\midrule
\bottomrule
\end{tabular}
\vspace{0.2em}
\caption{Relative improvement (\%) of \methodname\ over the strongest baseline under different settings.}
\label{appendix:table-label-free}
\end{table}

\subsection{Additional backbones}
\label{appendix:exp-additional-gnn}

In addition to typical GNN backbones, we also conduct experiments on a wider range of graph algorithms, including SGFormer~\cite{wu2023improving} and APPNP~\cite{gasteiger2018combining}. In Table~\ref{appendix:table-more-gnn}, we report relative performance improvement (\%) of \methodname\ over the strongest baseline under all settings. We can observe that \methodname\ still outperforms other baselines mostly.

\begin{table}[t]
\centering
\renewcommand{\arraystretch}{1.05}
\begin{tabular}{ccccccc}
\toprule
\midrule
Dataset &
\makecell{SGFormer \\ ($\tau$=10\%)} &
\makecell{SGFormer \\ ($\tau$=20\%)} &
\makecell{SGFormer \\ ($\tau$=50\%)} &
\makecell{APPNP \\ ($\tau$=10\%)} &
\makecell{APPNP \\ ($\tau$=20\%)} &
\makecell{APPNP \\ ($\tau$=50\%)} \\
\midrule
\textsc{IMDB-BINARY}   & +9.82  & +0.59  & -0.34  & +7.28  & +3.01  & -0.35  \\
\textsc{IMDB-MULTI}    & +6.46  & +57.40 & +1.65  & -13.46 & -2.59  & +13.71 \\
\textsc{MSRC\_21}      & +14.26 & +3.58  & +2.13  & +9.54  & +3.18  & +1.27  \\
\texttt{ogbg-molbace}  & -9.66  & +0.05  & +1.90  & +4.03  & -2.31  & +2.84  \\
\texttt{ogbg-molbbbp}  & +2.93  & +5.47  & -3.71  & +22.80 & +6.33  & -4.58  \\
\texttt{ogbg-molhiv}   & +1.51  & -1.79  & +1.65  & -12.00 & +6.07  & +1.88  \\
\midrule
\bottomrule
\end{tabular}
\vspace{0.2em}
\caption{Relative improvement (\%) of \methodname\ over the strongest baseline under different settings.}
\label{appendix:table-more-gnn}
\end{table}

\subsection{Additional GDA methods}
\label{appendix:exp-additional-gda}
We add additional experiments based on two variants of A2GNN~\cite{liu2024rethinking} with different losses and TDSS~\cite{chen2025smoothness}. In Table~\ref{appendix:table-more-gda}, we report relative performance improvement (\%) of \methodname\ over the strongest baseline under all settings. For simplicity, we report results with selection ratio equals to $20\%$. It can be observed that \methodname\ consistently provides the most significant enhancements for the three newly evaluated GDA methods compared to other selection baselines.

\begin{table}[t]
\centering
\begin{tabular}{cccc}
\toprule
\midrule
Dataset &
\makecell{\textbf{A2GNN-ADV} \\ ($\tau$=20\%)} &
\makecell{\textbf{A2GNN-MMD} \\ ($\tau$=20\%)} &
\makecell{\textbf{TDSS} \\ ($\tau$=20\%)} \\
\midrule
\textsc{IMDB-BINARY} & +31.12 & +11.40 & +3.12  \\
\textsc{IMDB-MULTI}  & +26.54 & +58.88 & -42.10 \\
\textsc{MSRC\_21}    & +20.63 & -4.40  & +25.09 \\
\texttt{ogbg-molbace} & +1.46  & -2.57  & +2.88  \\
\texttt{ogbg-molbbbp} & +17.34 & +5.85  & -5.71  \\
\texttt{ogbg-molhiv}  & +0.65  & +0.81  & +0.55  \\
\midrule
\bottomrule
\end{tabular}
\vspace{0.2em}
\caption{Relative improvement (\%) of \methodname\ over three additionally added GDA methods under different datasets.}
\label{appendix:table-more-gda}
\end{table}


\section{Datasets}
\label{appendix:dataset-details}
In Table~\ref{table:dataset-details}, we provide details of datasets used in this work. For \textsc{\# Nodes} and \textsc{\# Edges}, we report the mean sizes across all graphs in the dataset.

\begin{table}[ht]
\centering
\scalebox{0.8}{
\begin{tabular}{cccccccc}
\toprule
\midrule
 \textsc{Dataset} & \textsc{\# Graphs} & \textsc{\# Nodes} & \textsc{\# Edges} & \textsc{\#Features} & \textsc{\# Class} & \textsc{Data Source}& \textsc{ License} \\
\midrule
\textsc{IMDB-BINARY} & 1000 & 19.77& 96.53 & None& 2& PyG~\citep{fey2019fast}& MIT License\\
\textsc{IMDB-MULTI} & 1500 & 12.74& 53.88 & None& 3 & PyG~\citep{fey2019fast}& MIT License\\
\textsc{MSRC\_21} & 563 & 77.52& 198.32& None& 20& PyG~\citep{fey2019fast}& MIT License\\
\texttt{ogbg-molbace} & 1513 & 34.08 & 36.85& 9& 2& OGB~\citep{hu2020ogb}& MIT License\\
\texttt{ogbg-molbbbp} & 2039 & 24.06 & 25.95& 9& 2& OGB~\citep{hu2020ogb}& MIT License\\
\texttt{ogbg-molhiv} & 41127 & 25.51 & 27.46 & 9& 2& OGB~\citep{hu2020ogb}& MIT License\\
\midrule
\bottomrule
\end{tabular}
}
\vspace{0.2em}
\caption{Dataset Statistics and Licenses.}
\label{table:dataset-details}
\end{table}

\section{Backbone GNN Settings for Graph Selection Evaluation}
\label{appendix:gnn-settings}
\paragraph{GNN Models.}
We consider four widely used graph neural network architecture, GCN~\citep{kipf2016semi}, GIN~\citep{xu2018powerful}, GAT~\citep{velivckovic2017graph} and GraphSAGE~\citep{hamilton2017inductive}. The detailed model architectures are described as follows: (i) For GCN, we use three GCN layers with number of hidden dimensions equal to 32. ReLU is used between layers and a global mean pooling layer is set as the readout layer to generate graph-level embedding. A dropout layer with probability $p=0.5$ is applied after the GCN layers. Finally, a linear layer with softmax is placed at the end for graph class prediction. (ii) For GIN, we use three-layer GIN with 32 hidden dimensions. We use ReLU between layers and global mean pooling for readout. A dropout layer with probability $0.5$ is placed after GIN layers and finally a linear layer with softmax for prediction. (iii) For GAT, we use two-layer GAT layers with four heads with global mean pooling for readout. A dropout layer with probability $0.5$ is placed after GIN layers and finally a linear layer with softmax for prediction. (iv) For GraphSAGE, we use two GraphSAGE layers with mean aggregation operation. The hidden dimension is set to 32. A dropout layer with probability $p=0.5$ is applied after the GCN layers. Finally, a linear layer with softmax is placed at the end for graph class prediction.

\paragraph{Experiment Details.} We perform all our methods in Python and GNN models are built-in modules of PyTorch Geometric~\cite{fey2019fast}. The learning rate is set to $10^{-2}$ with weight decay $5\cdot 10^{-4}$. We train 200 epochs for datasets \textsc{IMDB-BINARY, IMDB-MULTI, MSRC\_21} and 100 epochs for datasets \texttt{ogbg-molbace, ogbg-molbbbp, ogbg-molhiv}  with early stopping, evaluating the test set on the model checkpoint that achieves the highest validation performance during training. For each combination of data and model, we report the mean and standard deviation of classification performance over 3-5 random trials. For TUDatasets, we use accuracy as the performance metric; for OGB datasets, we use AUCROC as the performance metric. The computation is performed on Linux with an NVIDIA Tesla V100-SXM2-32GB GPU. For graphs without node features, we also follow \citet{zenggraph} that generates degree-specific one-hot features for each node in the graphs.

\section{GDA Method-Specific Settings}
\label{appendix:gda-settings}
We follow the default parameter settings in the code repository of OpenGDA~\cite{shi2023opengda}. We train 200 epochs for datasets \textsc{IMDB-BINARY, IMDB-MULTI, MSRC\_21} and 100 epochs for datasets \texttt{ogbg-molbace, ogbg-molbbbp, ogbg-molhiv}  with early stopping, evaluating the test set on the model checkpoint that achieves the highest validation performance during training.
\begin{itemize}[leftmargin=1em]
\item AdaGCN~\cite{dai2022graph}: We set the learning rate to $10^{-3}$ with regularization coefficient equal to $10^{-4}$. Dropout rate is $0.3$ and $\lambda_b=1, \lambda_{gp}=5$.
\item ASN~\cite{zhang2021adversarial}: We set the learning rate to 
$10^{-3}$ with regularization coefficient equal to $10^{-4}$. The dropout rate is $0.5$. The difference loss coefficient, domain loss coefficient and the reconstruction loss coefficient is set to $10^{-6}, 0.1, 0.5$. 
\item GRADE~\cite{wu2023non}: We set the learning rate to $10^{-3}$ with regularization coefficient equal to $10^{-4}$. Dropout rate is set to $0.1$.
\item UDAGCN~\cite{wu2020unsupervised}: We set the learning rate to $10^{-3}$ with regularization coefficient equal to $10^{-4}$. The domain loss weight equals to 1. 
\end{itemize}

\section{Additional Preliminary: Graph Domain Adaptation (GDA)}
\label{appendix:prelim-gda}

\label{appendix:ssec:graph-da}

Consider a source domain $\mathcal{D}_s = {(\mathcal{G}_i^s, y_i^s)}_{i=1}^{n_s}$ and a target domain $\mathcal{D}_t = {(\mathcal{G}_i^t, y_i^t)}_{i=1}^{n_t}$, where each $\mathcal{G} = (\mathbf{A}, \mathbf{X})$ represents an attributed graph with the adjacency matrix 
$\mathbf{A} \in \mathbb{R}^{n \times n}$ and the node feature matrix $\mathbf{X} \in \mathbb{R}^{n \times d}$, where $n$ is the number of nodes and $d$ is the dimension of node features. With a shared label set $\mathcal{Y}$, the graphs in the source domain are labeled with $y_i^s \in \mathcal{Y}$.
The two domains are drawn from shifted joint distributions of graph and label space, i.e., $P_s(\mathcal{G}, y) \neq P_t(\mathcal{G}, y)$.
The goal of GDA is to learn a classifier $f: \mathcal{G} \rightarrow \mathcal{Y}$ with the  source domain data that minimizes the expected risk on the target domain:
$\mathbb{E}_{(\mathcal{G}, y) \sim P_t}[\mathcal{L}(f(\mathcal{G}), y)]$,
where $\mathcal{L}$ is a task-specific loss function.

\section{Additional Related Work}
\label{appendix:additional-related-work}

In the era of big data and AI~\citep{liu2024class,liu2023topological,qiu2025saffron,qiu2025graph,qiu2025ask,qiu2025efficient,qiu2022dimes,xu2024discrete,yoo2024ensuring,yoo2025embracing,yoo2025generalizable,xu2024language,li2023metadata}, efficient data utilization has become paramount 
for scalable machine learning. Beyond the data selection and domain adaptation 
methods discussed in the main text, our work also connects to several related 
research directions that provide complementary perspectives on graph learning 
and distribution matching. Specifically, there is a rich body of work at the intersection of optimal transport 
theory and graph data~\citep{jain2025subsampling,
yu2025joint,zeng2023parrot, zeng2023generative, zeng2024hierarchical, zeng2025pave}, which are closely related to our main methodology.

\section{Empirical Runtime of \methodname}
\label{appendix:empirical-runtime}
In Table~\ref{table:empirical-runtime-1} and Table~\ref{table:empirical-runtime-2}, we provide the empirical runtime on  datasets (\textsc{IMDB-BINARY}, \textsc{IMDB-MULTI} and \textsc{MSRC\_21}) and datasets (\texttt{ogbg-molbace}, \texttt{ogbg-molbbbp} and \texttt{ogbg-molhiv}), respectively. We observe that the on-line runtime is insignificant compared to typical GNN training time. And the off-line computation is only run once, which can be pre-computed. Furthermore, we can achieve much better accuracy compared to LAVA with nearly no additional runtime.

\begin{table}[ht]
\centering
\scalebox{0.85}{
\begin{tabular}{ccccc}
\toprule
\midrule
 & \textbf{Procedure / Dataset} & \textsc{IMDB-BINARY} & \textsc{IMDB-MULTI} & \textsc{MSRC\_21} \\

\midrule
\multirow{2}{*}{\makecell{Off-line Computation \\ (GDD Computation)}} 
& FGW Pairwise distance & 7.41 & 9.61 & 18.18 \\
& Label-informed pairwise distance & 0.04 & 0.06 & 0.24 \\
\midrule
\multirow{2}{*}{On-line Computation} 
& \optmethodname\ (Algorithm~\ref{alg:alt-got-shrinkage}) & 0.28 & 0.52 & 0.11 \\
& LAVA & 0.09 & 0.14 & 0.03 \\
\midrule
\multirow{3}{*}{GNN Training Time} 
& GCN  (w/ 10\% data) & 13.45 & 16.36 & 9.59 \\
& GCN  (w/ 20\% data) & 17.64 & 21.40 & 13.85 \\
& GCN  (w/ 50\% data) & 29.92 & 45.82 & 19.57 \\
\midrule
\bottomrule
\end{tabular}
}
\vspace{0.1em}
\caption{\textbf{Empirical run-time behavior for TUDatasets (in seconds)}. We can observe that the off-line procedures can be run comparable to a single GNN training time and the on-line procedure has a negligible runtime compared to GNN training. In addition, we can achieve significantly better performance compared to LAVA with slight additional on-line runtime.}
\label{table:empirical-runtime-1}
\end{table}

\begin{table}[ht]
\centering
\scalebox{0.85}{
\begin{tabular}{ccccc}
\toprule
\midrule

  & \textbf{Procedure / Dataset} & \texttt{ogbg-molbace} & \texttt{ogbg-molbbbp} & \texttt{ogbg-molhiv} \\
\midrule
\multirow{2}{*}{\makecell{Off-line Computation \\ (GDD Computation)}} 
& FGW Pairwise distance & 15.44 & 19.67 & 283.99 \\
& Label-informed pairwise distance & 0.12 & 0.17 & 53.82 \\
\midrule
\multirow{2}{*}{On-line Computation} 
& \optmethodname\ (Algorithm~\ref{alg:alt-got-shrinkage}) & 0.43 & 0.84 & 295.42 \\
& LAVA & 0.06 & 0.13 & 39.45 \\
\midrule
\multirow{3}{*}{GNN Training Time} 
& GCN  (w/ 10\% data) & 12.76 & 13.91 & 190.34 \\
& GCN  (w/ 20\% data) & 13.08 & 22.59 & 312.08 \\
& GCN  (w/ 50\% data) & 22.21 & 30.48 & 845.46 \\
\midrule
\bottomrule
\end{tabular}
}
\vspace{0.1em}
\caption{\textbf{Empirical run-time behavior for OGB datasets (in seconds)}. We can observe that the off-line procedures can be run comparable to a single GNN training time and the on-line procedure has a negligible runtime compared to GNN training. In addition, we can achieve significantly better performance compared to LAVA with slight additional on-line runtime.}
\label{table:empirical-runtime-2}
\end{table}

\vspace{-2em}
\section{Limitations and Outlook}
\label{appendix:limit}

\begin{enumerate}
    \item \textit{How to eliminate the dependence on validation data?} Although it is common in machine learning research to assume we have some validation set that represents the data distribution on the target set (or ``statistically closer'' to the target set), it might not be always available under certain extreme scenarios. Thus, our interesting future direction is to extend our framework to  no-validation-data or test-time adaptation settings.
    \item \textit{Can our proposed method scale to extremely large settings?} When we have millions of \textit{large} graphs in both training and validation set, the efficiency of \methodname\ might be a concern. However, most of the computationally intensive sub-procedure of our method can be done off-line (see complexity analysis in Section~\ref{ssec:graph-data-selection} and empirical runtime in Appendix~\ref{appendix:empirical-runtime}) and the online runtime is ignorable compared to typical GNN training on full dataset. One possible mitigation is to do data  clustering using FGW distance before running our main algorithm \methodname\ to avoid computational overhead.
    \item \textit{How to select the optimal amount of data?} We demonstrate that in Section~\ref{ssec:final-discussion}, the relationship between selection ratio and GNN adaptation performance is not always trivial across different settings. To ease the comparison pipeline, we fix to some target selection ratios (i.e. $10\%,20\%,50\%$) for our main experiments, but we acknowledge that these ratios might not yield the best adaptation performance or serve as the best indicator to comparison across different methods. Thus, one potential extension of our method is to automate the process of selecting the optimal selection ratios when dealing different levels of domain shifts.
    \item \textit{How to extend \methodname\ to node-level graph domain adaption setting?} One possible extension to solve node-level tasks is as follows: decomposing source/target graphs into set of ego-graphs where each of these ego-graphs represent the local topology of each node, and applying \methodname\ directly to select the optimal nodes for adaptation. Future directions that require more investigation include (i) how to decide the size of local vicinity for each ego-graph, (ii) how to co-consider node/edge selection for optimal adaptation and (iii) how to mitigate the information loss when extracting ego-graphs. 
\end{enumerate}

\section{Impact Statement}
\label{appendix:impact}
This paper discusses the advancement of the field of Graph Machine Learning. While there are potential societal consequence of our work, none of which we feel must be hightlighted .

\newpage
\section{ECDF Plots of Different Covariate Shift Settings}
\label{appendix:ecdf-plots}

\begin{figure}[ht]
    \centering
    \subfigure[\textsc{IMDB-BINARY} (Density)]{
        \includegraphics[width=0.45\linewidth]{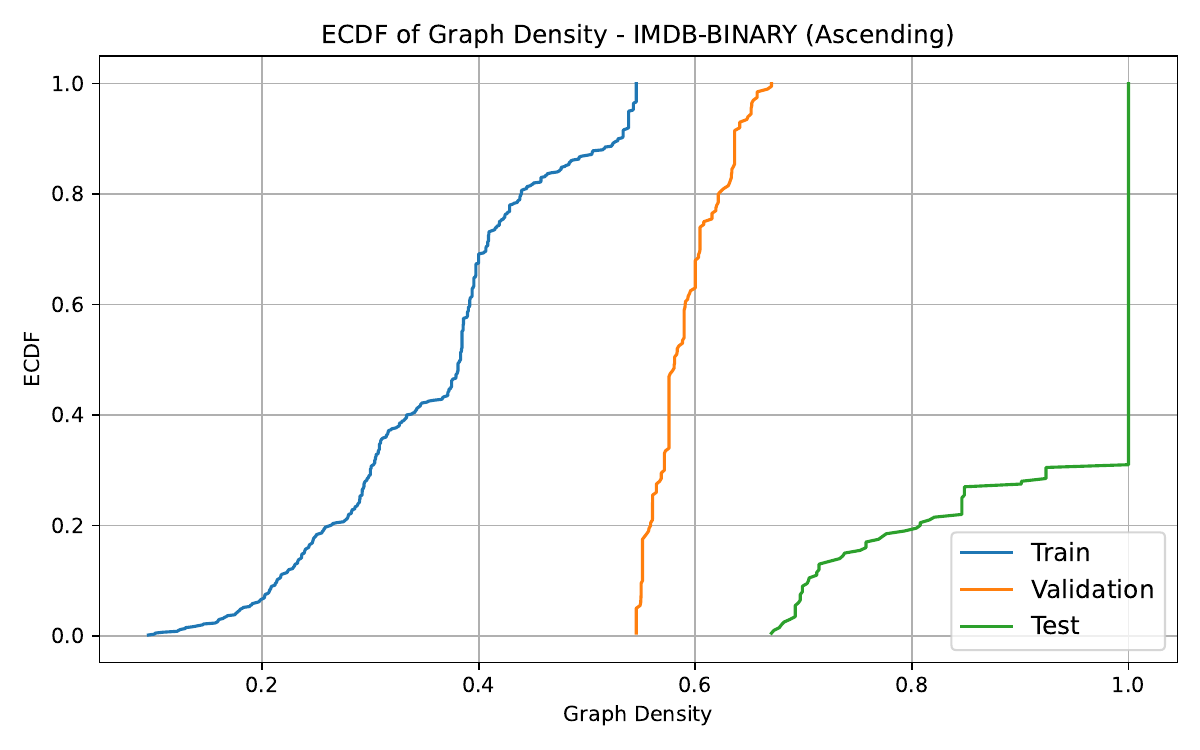}
    }
    \subfigure[\textsc{IMDB-BINARY} (Size)]{
        \includegraphics[width=0.45\linewidth]{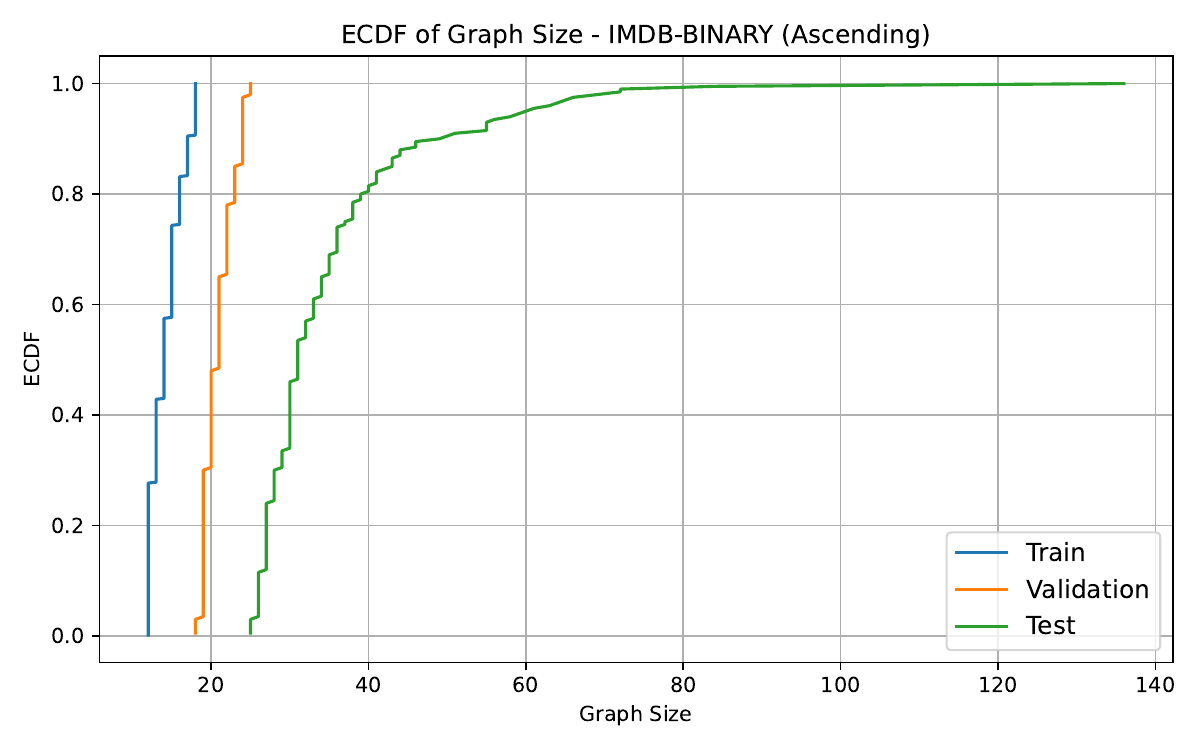}
    }

    \subfigure[\textsc{IMDB-MULTI} (Density)]{
        \includegraphics[width=0.45\linewidth]{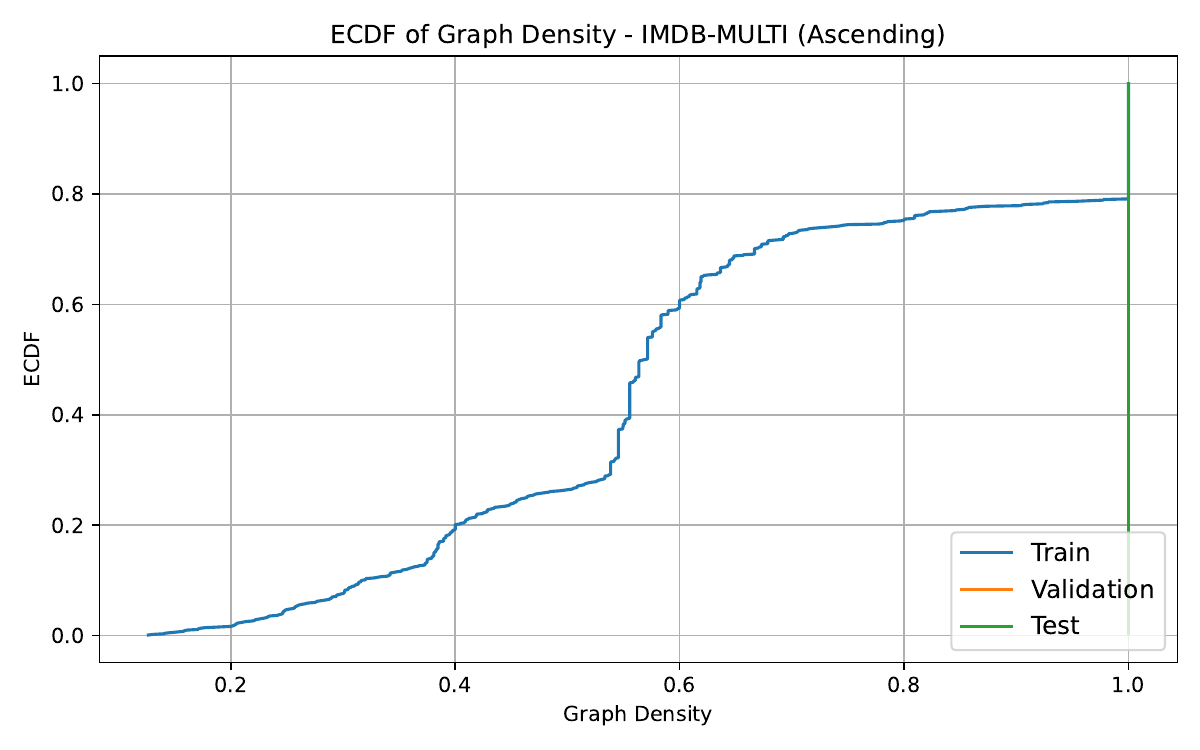}
    }
    \subfigure[\textsc{IMDB-MULTI} (Size)]{
        \includegraphics[width=0.45\linewidth]{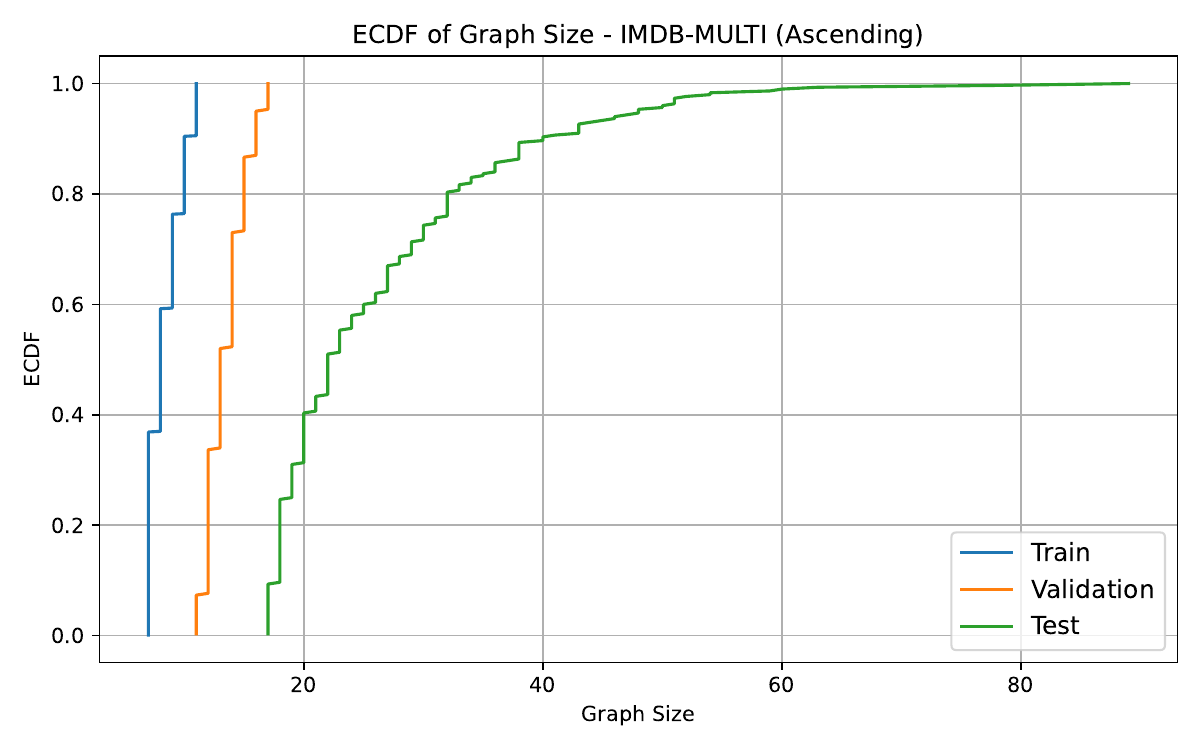}
    }

    \subfigure[\textsc{MSRC\_21} (Density)]{
        \includegraphics[width=0.45\linewidth]{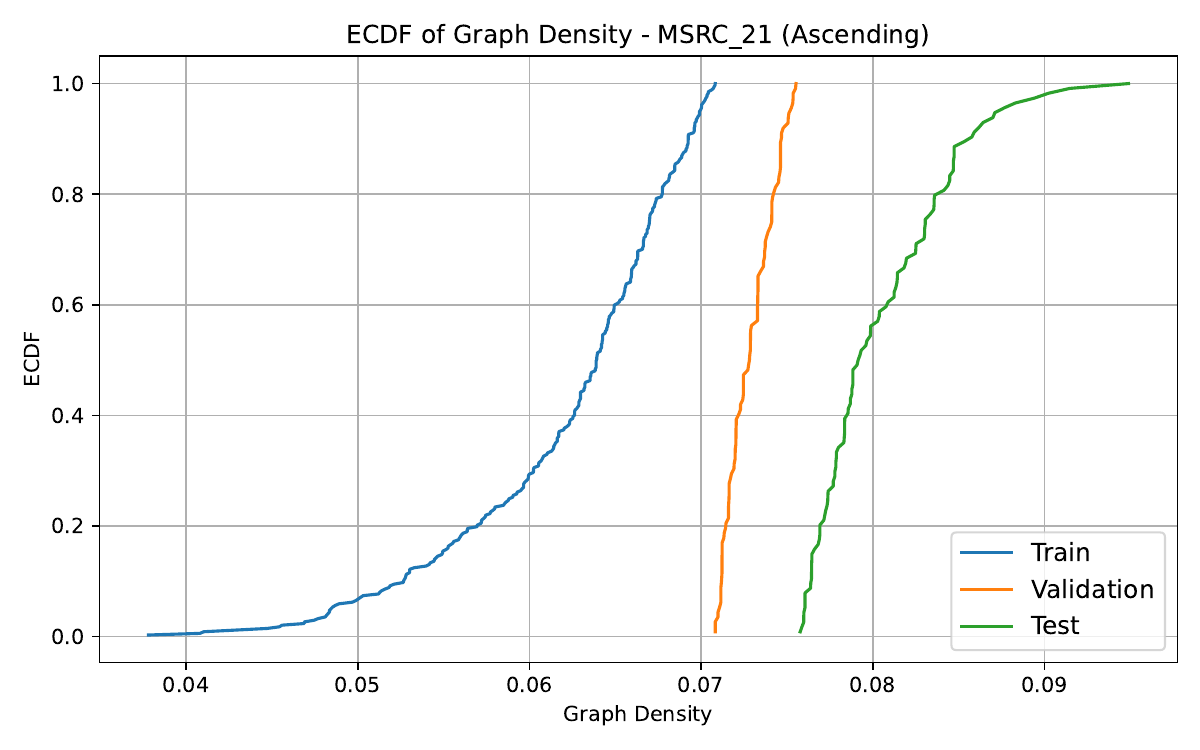}
    }
    \subfigure[\textsc{MSRC\_21} (Size)]{
        \includegraphics[width=0.45\linewidth]{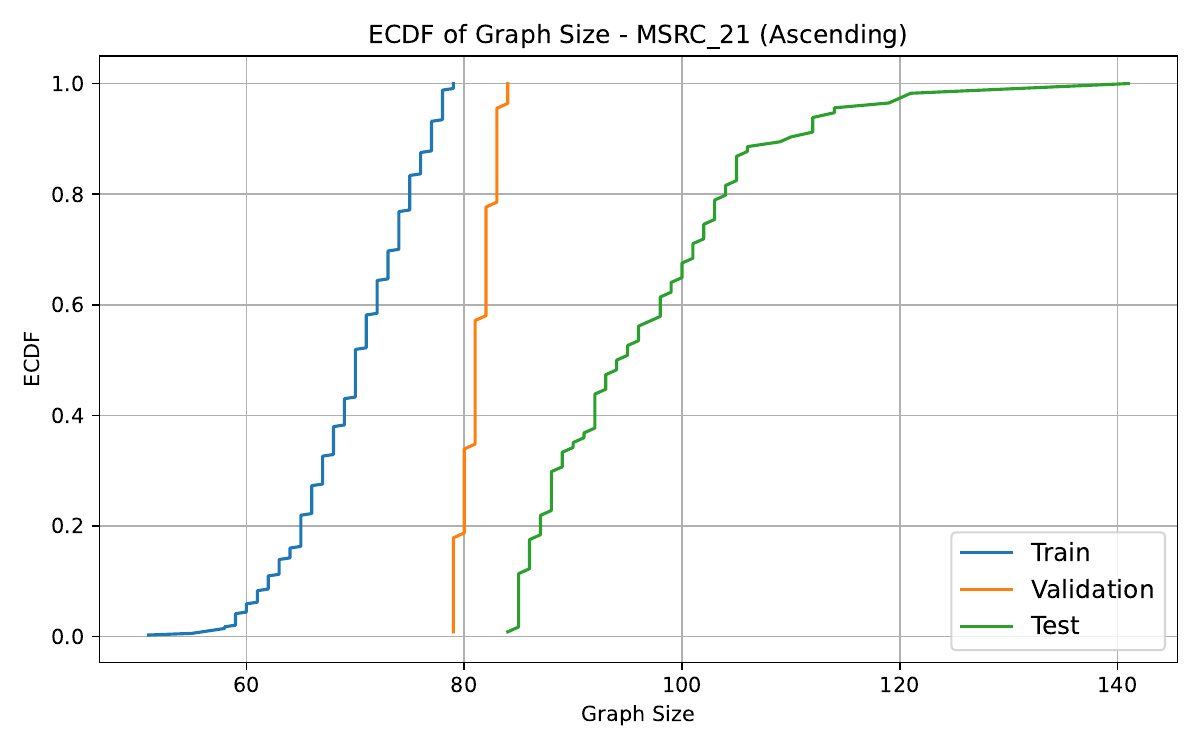}
    }

    \caption{\textbf{ECDF plots of graph density and size for \textsc{IMDB-BINARY}, \textsc{IMDB-MULTI}, and \textsc{MSRC\_21} datasets}. The Blue, Orange, and Green curves represent the distributions of the training, validation, and test splits, respectively. Graphs are sorted in the ascending order by the specified shift (density or size).}
    \label{fig:ecdf_part1}
\end{figure}


\begin{figure}[ht]
    \centering
    \subfigure[\texttt{ogbg-molbace} (Density)]{
        \includegraphics[width=0.45\linewidth]{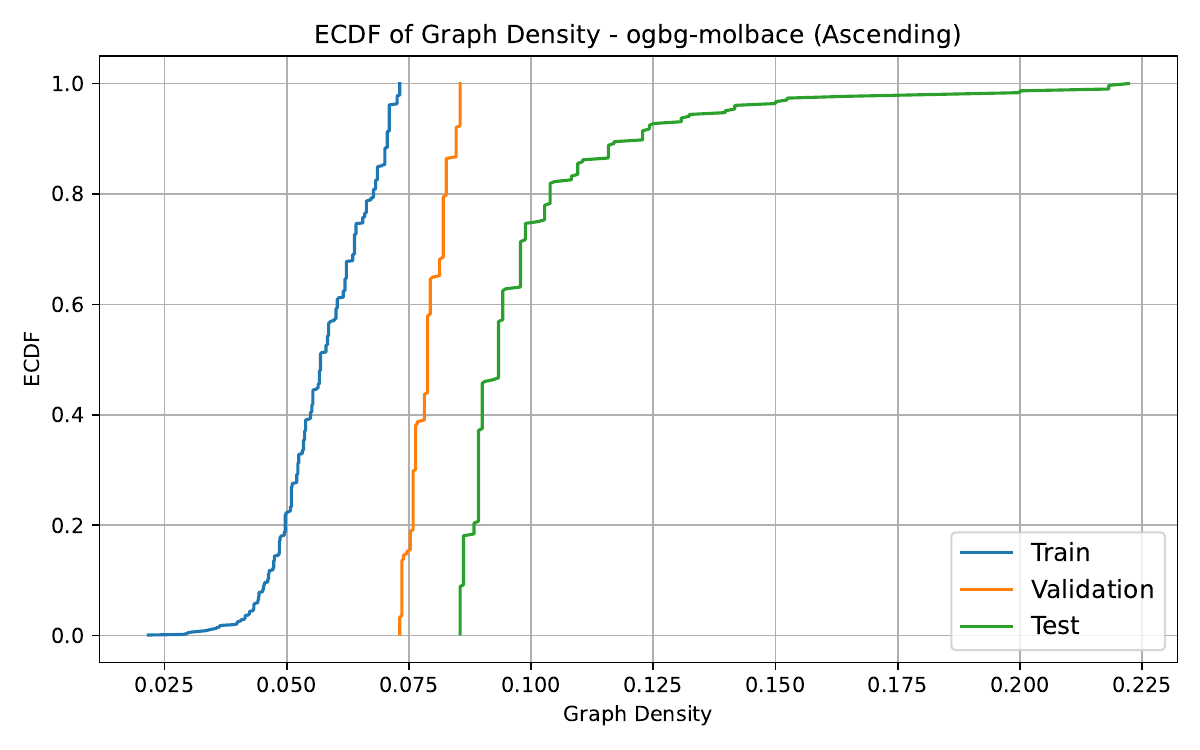}
    }
    \subfigure[\texttt{ogbg-molbace} (Size)]{
        \includegraphics[width=0.45\linewidth]{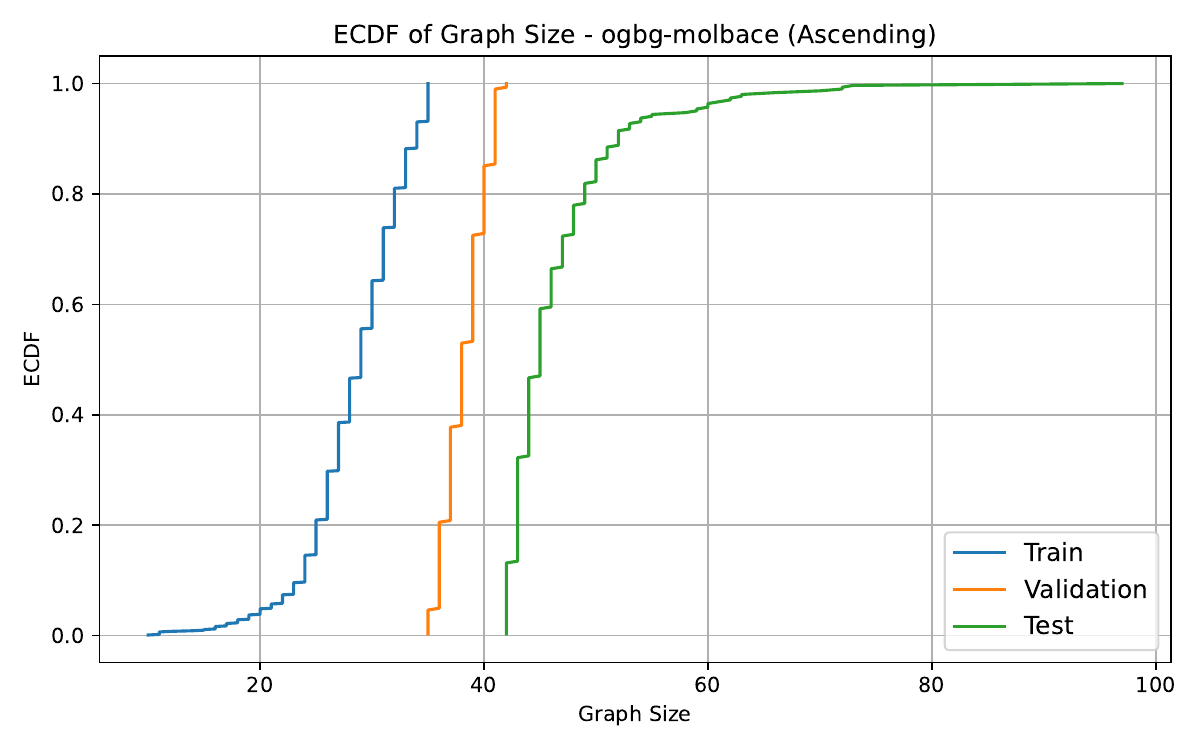}
    }

    \subfigure[\texttt{ogbg-molbbbp} (Density)]{
        \includegraphics[width=0.45\linewidth]{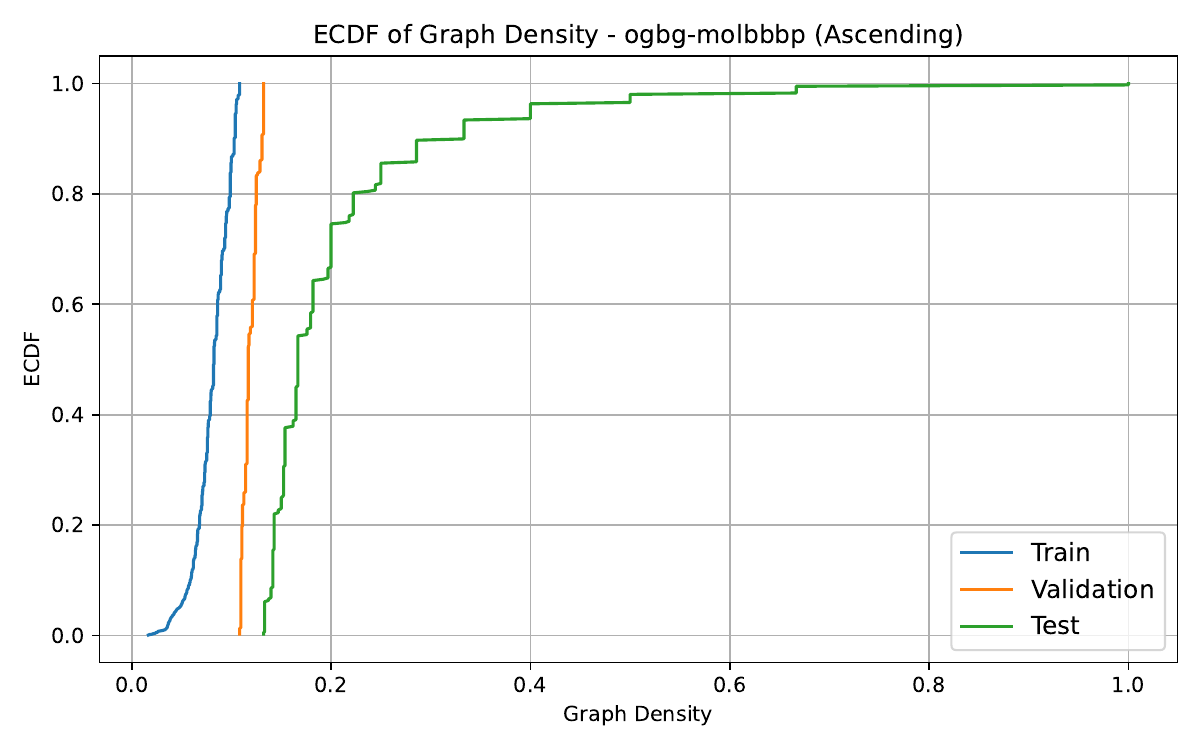}
    }
    \subfigure[\texttt{ogbg-molbbbp} (Size)]{
        \includegraphics[width=0.45\linewidth]{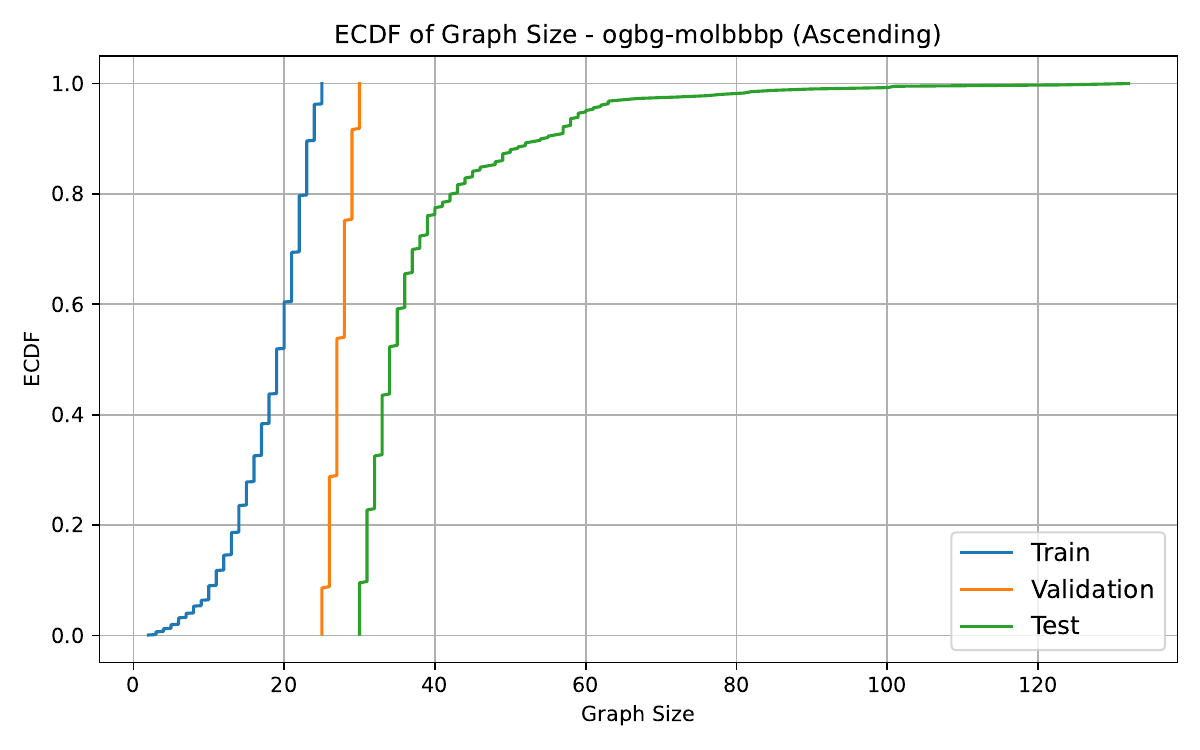}
    }

    \subfigure[\texttt{ogbg-molhiv} (Density)]{
        \includegraphics[width=0.45\linewidth]{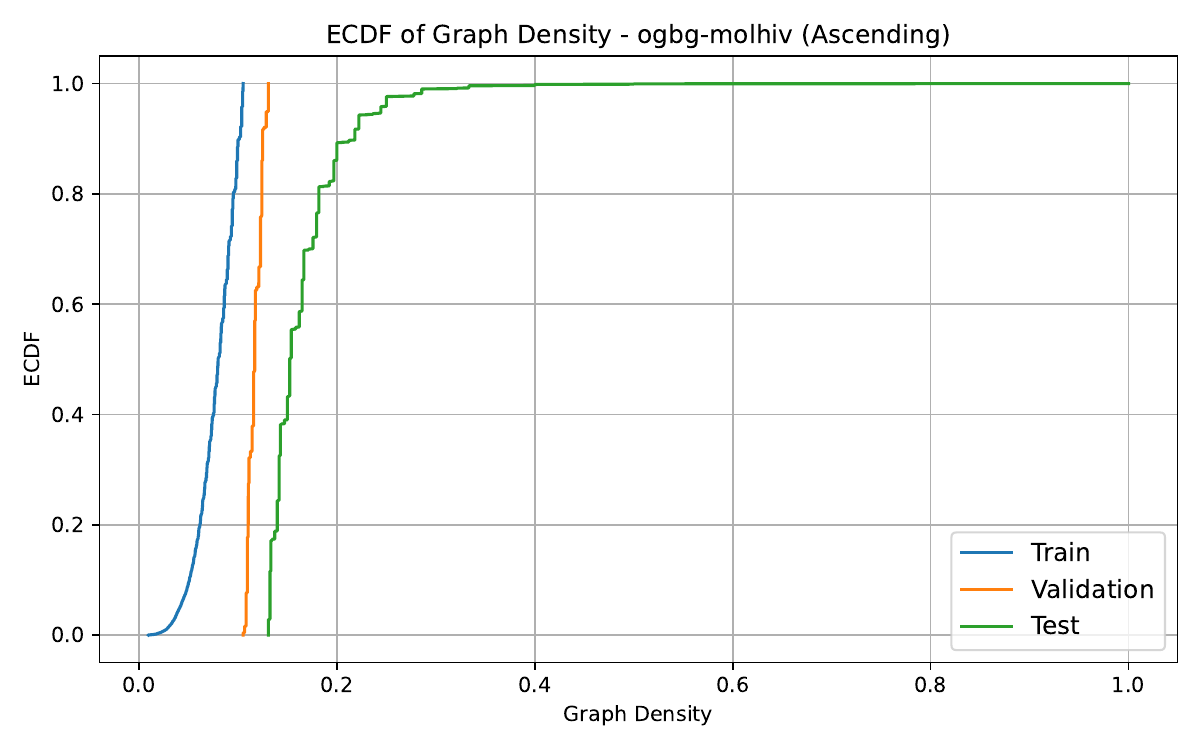}
    }
    \subfigure[\texttt{ogbg-molhiv} (Size)]{
        \includegraphics[width=0.45\linewidth]{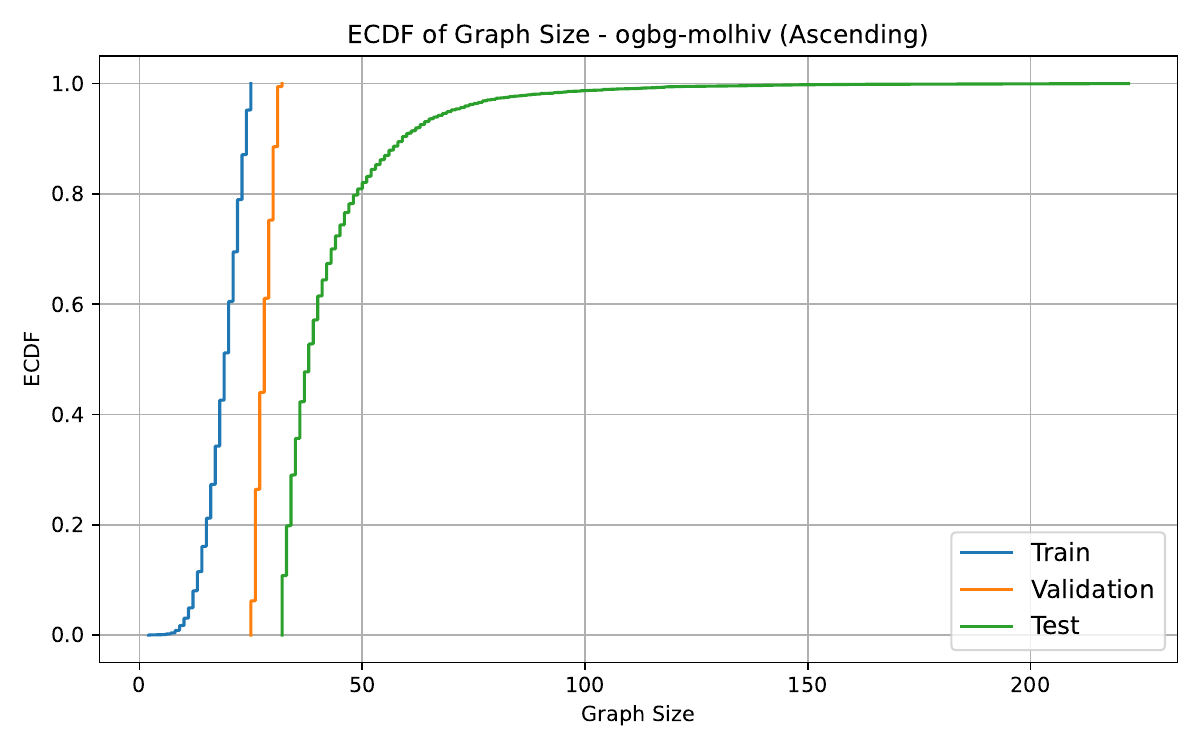}
    }

    \caption{\textbf{ECDF plots of graph density and size for \texttt{ogbg-molbbbp}, \texttt{ogbg-molbace}, and \texttt{ogbg-molhiv} datasets}. The Blue, Orange, and Green curves represent the distributions of the training, validation, and test splits, respectively. Graphs are sorted in ascending order by the specified shift (density or size).}
    \label{fig:ecdf_part2}
\end{figure}

\end{document}